\theoremstyle{plain}
\newtheorem{theorem}{Theorem}[section]
\newtheorem{proposition}[theorem]{Proposition}
\newtheorem{lemma}[theorem]{Lemma}
\newtheorem{corollary}[theorem]{Corollary}
\theoremstyle{definition}
\newtheorem{definition}[theorem]{Definition}
\theoremstyle{remark}
\newtheorem{remark}[theorem]{Remark}
\newcommand{\vertiii}[1]{{\left\vert\kern-0.25ex\left\vert\kern-0.25ex\left\vert #1 
    \right\vert\kern-0.25ex\right\vert\kern-0.25ex\right\vert}}
\newcommand{\etal}{\textit{et al}. }
\newcommand{\ie}{\textit{i}.\textit{e}., }
\newcommand{\eg}{\textit{e}.\textit{g}., }
\newcommand{\adjustedtilde}[1]{\smash{\tilde{\vphantom{#1}}}\hspace{-0.2em}#1}
\DeclareMathOperator{\tr}{\operatorname{tr}}
\DeclareMathOperator*{\LeakyReLU}{\operatorname{LeakyReLU}}
\DeclareMathOperator*{\ReLU}{\operatorname{ReLU}}
\newcommand{\method}{COSIMO}
\title{Continuous Simplicial Neural Networks
%%%% Cite as
%%%% Update your official citation here when published 
% \thanks{\textit{\underline{Citation}}: 
% \textbf{Authors. Title. Pages.... DOI:000000/11111.}} 
}
\author{
  Aref Einizade\\
  LTCI, Télécom Paris\\
  Institut Polytechnique de Paris \\
  % City\\
  \texttt{aref.einizade@telecom-paris.fr} \\
  %% examples of more authors
   \And
  Dorina Thanou \\
  EPFL, Lausanne, Switzerland \\ 
  \texttt{dorina.thanou@epfl.ch} \\
  %% examples of more authors
  \And
  Fragkiskos D. Malliaros \\
  CentraleSupélec, Inria \\
  Université Paris-Saclay \\
\texttt{fragkiskos.malliaros@centralesupelec.fr} \\
  \And
  Jhony H. Giraldo \\
  LTCI, Télécom Paris\\
  Institut Polytechnique de Paris \\
  % City\\
  \texttt{jhony.giraldo@telecom-paris.fr} \\
  %% \And
  %% Coauthor \\
  %% Affiliation \\
  %% Address \\
  %% \texttt{email} \\
  %% \And
  %% Coauthor \\
  %% Affiliation \\
  %% Address \\
  %% \texttt{email} \\
}
\begin{document}
\maketitle

\begin{abstract}
% Simplicial complexes provide a powerful framework for modeling high-order interactions in structured data, making them particularly suitable for applications such as trajectory prediction and mesh processing. However, existing simplicial neural networks (SNNs), whether convolutional or attention-based, rely primarily on discrete filtering techniques, which can be restrictive. In contrast, partial differential equations (PDEs) on simplicial complexes offer a principled approach to capture continuous dynamics in such structures. In this work, we introduce the \textbf{co}ntinuous \textbf{sim}plicial neural netw\textbf{o}rk (\method) model, a novel SNN architecture derived from PDEs on simplicial complexes. We provide theoretical and experimental justifications of \method's stability under simplicial perturbations. Furthermore, we investigate the over-smoothing phenomenon—a common issue in geometric deep learning—demonstrating that \method~offers better control over this effect than discrete SNNs. Our experiments on real-world datasets demonstrate that \method~achieves competitive performance compared to state-of-the-art SNNs in complex and noisy environments.

Simplicial complexes provide a powerful framework for modeling high{\color{black}er}-order interactions in structured data, making them particularly suitable for applications such as trajectory prediction and mesh processing. However, existing simplicial neural networks (SNNs), whether convolutional or attention-based, rely primarily on discrete filtering techniques, which can be restrictive. In contrast, partial differential equations (PDEs) on simplicial complexes offer a principled approach to capture continuous dynamics in such structures. In this work, we introduce \textbf{co}ntinuous \textbf{sim}plicial neural netw\textbf{o}rk (\method), a novel SNN architecture derived from PDEs on simplicial complexes. We provide theoretical and experimental justifications of \method's stability under simplicial perturbations. Furthermore, we investigate the over-smoothing phenomenon—a common issue in geometric deep learning—demonstrating that \method~offers better control over this effect than discrete SNNs. Our experiments on real-world datasets demonstrate that \method~achieves competitive performance compared to state-of-the-art SNNs in complex and noisy environments. {\color{black} The implementation codes are available in \href{https://github.com/ArefEinizade2/COSIMO}{https://github.com/ArefEinizade2/COSIMO}.}
\end{abstract}

% keywords can be removed
\keywords{Simplicial Neural Networks \and Partial Differential Equations \and Over-smoothing\and Stability.}

\section{Introduction}

\begin{wrapfigure}{r}{0.35\textwidth}
    %\vspace{-10pt}
\includegraphics[width=0.3\columnwidth]{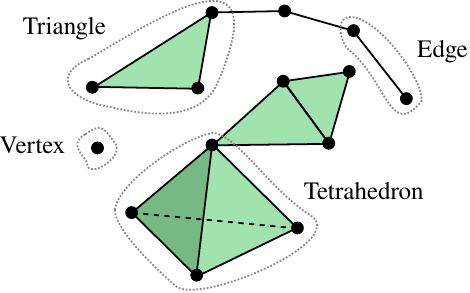}
\caption{Example of an abstract simplicial complex.}
\label{fig:abstract_simplicial_complex}
%\vspace{-15pt}
\end{wrapfigure}

Graph representation learning provides a powerful framework for modeling structured data. 
In this context, graph neural networks (GNNs) have gained significant attention \cite{wu2020comprehensive,defferrard2016convolutional,kipf2017semi,velickovic2018graph}, extending neural network architectures to graph-structured data.
By capturing complex relationships between nodes, GNNs have been successfully applied to various domains, including computational/digital pathology \cite{brussee2025graph}, social network analysis \cite{uwents2011neural}, drug discovery \cite{gainza2020deciphering}, materials modeling \cite{pmlr-v202-duval23a}, and computer vision \cite{li2019deepgcns,giraldo2022graph}. However, traditional GNNs primarily focus on pairwise interactions between nodes, limiting their ability to model higher-order relationships in complex systems such as biological networks \cite{giusti2022simplicial}. 
To overcome this limitation, researchers have explored more expressive mathematical structures, such as \textit{abstract simplicial complexes} \cite{isufi2025topological}, generalizing graphs by incorporating multi-way connections {\color{black}and their Laplacians by introducing the Hodge decomposition theory \cite{isufi2025topological,schaub2021signal}.}

An abstract simplicial complex is a combinatorial structure composed of sets that are closed under subset operations. 
For instance, a three-dimensional simplicial complex includes tetrahedrons (four-element sets), triangles (three-element sets), edges (two-element sets), and vertices (one-element sets), as illustrated in Fig. \ref{fig:abstract_simplicial_complex}. 
Graphs correspond to simplicial $1$-complexes, containing only nodes and edges, while point clouds (sets of unconnected nodes) can be seen as simplicial $0$-complexes.

% Throughout this paper, the term "simplicial complexes" refers specifically to abstract simplicial complexes. 
Building upon the mathematical foundation of abstract simplicial complexes, simplicial neural networks (SNNs) \cite{ebli2020simplicial} have emerged as a powerful approach for learning on higher-order structures. 
Most existing SNNs rely on discrete simplicial filters \cite{ebli2020simplicial,wu2023simplicial} and their variations \cite{bodnar2021weisfeiler,yang2022simplicial,yang2025hodgeaware} to process data defined on simplicial complexes. However, a fundamental question remains largely unexplored: \textit{how can we design continuous SNNs?}
Continuous models play a crucial role in capturing real-world dynamics evolving on structured data \cite{han2024from}. 
Compared to their discrete counterparts, continuous convolutional models offer several advantages, including: \textit{i}) better control of over-smoothing \cite{han2024from}, preventing excessive feature homogenization across the structure, and \textit{ii}) greater robustness to structural perturbations \cite{song2022robustness,einizade2024continuous}. 
%Despite these benefits, to the best of our knowledge, no continuous filtering method has been proposed for learning on Hodge-aware simplicial complexes. To address this gap, we introduce \textbf{co}ntinuous \textbf{sim}plicial neural netw\textbf{o}rks (\method), the first method for modeling continuous dynamics of simplicial complexes with higher-order connections.
Despite these benefits, to the best of our knowledge, no continuous filtering method has been proposed for learning on simplicial complexes.
To address this gap, we introduce the \textbf{co}ntinuous \textbf{sim}plicial neural netw\textbf{o}rk (\method) model, the first method for modeling and learning over the continuous dynamics of simplicial complexes with higher-order connections. 
%{\color{black}\method~is built upon discrete SNN frameworks that operate on simplicial data with an explicit consideration of Hodge-aware inter-connections \cite{yang2025hodgeaware,yang2022simplicial}.}

We analyze \method~both theoretically and empirically, demonstrating its effectiveness in learning from simplicial complex data. Our main contributions can be summarized as follows:
\begin{itemize}[leftmargin=0.5cm]
\itemsep0em 
%\item We propose \method, a novel approach based on partial differential equations (PDEs) defined on simplicial complexes. 
%This formulation enables continuous information propagation across simplicial structures.
%\item We propose \method, formulated as the solution to partial differential equations (PDEs) on simplicial complexes.
\item We introduce \method, a new approach that uses partial differential equations (PDEs) on simplicial complexes to enable continuous information flow through these structures.
\item We establish theoretical stability guarantees for \method, demonstrating its robustness to structural perturbations.
%\item We demonstrate in theory and practice the stability properties of \method~against structural simplicial perturbations.
%\item We provide a detailed analysis of the over-smoothing phenomenon in both discrete and continuous SNNs, showing that \method~achieves a better control on the rate of convergence to the over-smoothing state.
\item We provide a detailed analysis of over-smoothing in both discrete and continuous SNNs, showing that \method~achieves a better control on the rate of convergence to the over-smoothing state.
%\item We rigorously study the over-smoothing phenomenon in discrete and continuous SNNs, showing that \method~achieves a better control on the rate of convergence to the over-smoothing state.
\item We validate \method~through experiments on both synthetic and real-world datasets, showing its competitive performance against state-of-the-art (SOTA) methods.
%\item Experimental results in synthetic and real-world applications illustrate the competitive performance of \method~compared to the state-of-the-art methods.
% \item We propose \method, formulated as the solution to partial differential equations (PDEs) on simplicial complexes.
% \item Due to the reliance of our framework on the simplicial Hodge decomposition theory \cite{lim2020hodge} and also the differentiability of the proposed framework, it is capable of learning Hodge receptive fields to adapt with the Hodge-aware dynamics.
% \item We show how the solution to the proposed PDEs leads to exponential Hodge filters, which provide the possibility of lowering the computational complexity by extracting a limited number of eigenvalue-eigenvector pairs.
% \item We rigorously study the over-smoothing aspect of the proposed framework, making the \method~model capable of controlling the over-smoothing rate compared to the discrete case. Besides, the theoretical and experimental stability analysis of \method~against structural perturbations is described. Experimental results in the real-world applications of ocean drift trajectory prediction and mesh processing illustrate the competitive performance compared to the state-of-the-art methods.    
\end{itemize}
%\jhony{Paragraphs explaining what we did and our contributions.}

%These structures offer an elegant framework for geometry processing, particularly for triangular meshes that are naturally represented as simplicial $2$-complexes, as illustrated in Fig. \ref{fig:simplicial_2_complex}.
\section{Related Work}
The introduction of topological signal processing over simplicial complexes \cite{barbarossa2020topological,schaub2021signal} has significantly advanced topological methods in machine learning, highlighting the benefits of these data structures \cite{isufi2025topological}, 
% This progress has driven the development of neural network architectures capable of processing simplicial complexes, 
and contributing to the emerging field of topological deep learning \cite{papamarkou2024position,papillon2023architectures}.
The evolution of the SNN architectures has followed a trajectory similar to that of GNNs, with the following stages: \textit{i}) the establishment of principles in \textit{topological signal processing over simplicial complexes} \cite{barbarossa2020topological}, \textit{ii}) the formulation of \textit{simplicial filters} \cite{isufi2025topological}, and \textit{iii}) the application of these filters to create \textit{SNNs} \cite{ebli2020simplicial}.
% , and \textit{iv}) the development of \textit{simplicial attention mechanisms} \cite{goh2022simplicial}.

Research on learning methods for simplicial complexes has explored various approaches. 
Roddenberry and Segarra \cite{roddenberry2019hodgenet} were among the first to develop neural networks that operate on the graph edges using simplicial representations.
%\cite{roddenberry2019hodgenet} were among the first to use some sort of simplicial representations to construct neural networks on the edges of graphs.
%The work in \cite{roddenberry2019hodgenet} was among the first to use $\mathbf{L}_{1,d}$ to construct neural networks (NNs) on the edges of graphs, but without considering upper edge adjacency.
Building on this, Ebli \etal \cite{ebli2020simplicial} incorporated the Hodge Laplacian, a generalization of the graph Laplacian, to extend GNNs to higher-dimensional simplices. 
% enabling better modeling of their relationships.
%Subsequently, \cite{ebli2020simplicial} extended GNNs to simplices by incorporating the so-called Hodge Laplacians, a generalization of the graph Laplacian to simplicial complexes for modeling the connection among the simplices.
Other studies \cite{roddenberry2021principled,yang2022simplicial} refined this approach by separating the lower and upper Laplacians, two components of the Hodge Laplacian that capture connections between simplices of different dimensions.
% to perform multi-order convolutions.
%In contrast, \cite{roddenberry2021principled} and \cite{yang2022simplicial} decoupled the lower and upper Laplacians (the components of the Hodge Laplacian responsible for modeling the connections between the simplices and lower and higher degree simplices) to perform multi-order convolutions.
Keros \etal \cite{keros2022dist2cycle} further extended the framework in \cite{roddenberry2021principled} to detect topological holes, while Chen \etal \cite{chen2022bscnets} integrated node and edge operations for link prediction.
More recently, attention mechanisms have been incorporated into simplicial networks \cite{goh2022simplicial,giusti2022simplicial,lee2022sgat}.
% Additionally, \cite{keros2022dist2cycle} proposed a variation of \cite{roddenberry2021principled} to identify topological holes, while \cite{chen2022bscnets} combined node and edge operations for link prediction tasks.
% These ideas were later augmented with attention mechanisms in \cite{goh2022simplicial,giusti2022simplicial,lee2022sgat}.

Most of these studies focus on learning within individual simplicial levels without explicitly modeling the incidence relations (interactions between simplices of different dimensions) inherent in simplicial complexes \cite{yang2025hodgeaware}.
The inclusion of these relations was explored in \cite{yang2025hodgeaware,yang2022simplicial,bunch2020simplicial}, which proposed convolutional-like architectures that were later unified under the simplicial complex convolutional neural network (SCCNN) framework \cite{yang2025hodgeaware}.
Simultaneously, other studies extended message-passing from GNNs \cite{xu2018how} to simplicial complexes by leveraging both adjacency and incidence structures \cite{bodnar2021weisfeiler,hajij2021simplicial}.

Unlike GNNs, the theoretical understanding of SNNs is still developing.
For example, Roddenberry \etal \cite{roddenberry2021principled} analyzed how permutation-symmetric neural networks preserve equivariance under permutation and orientation changes—an important property also supported by SCCNNs \cite{yang2025hodgeaware}.
Another study examined message-passing in simplicial complexes through the lens of the Weisfeiler-Lehman test, applied to simplicial complexes derived from clique expansions of graphs \cite{bodnar2021weisfeiler}.
A spectral formulation based on the simplicial complex Fourier transform was also proposed in \cite{yang2022simplicial}. 
{\color{black} Other critical yet underexplored theoretical aspects of SNNs, including their stability and over-smoothing behaviors, have been analyzed only in the context of SCCNNs \cite{yang2025hodgeaware}.
For example, Yang \textit{et al.} \cite{yang2025hodgeaware} analyzed the steady state solution of the diffusion PDE on simplicial complexes, showing that SCCNNs are more robust to over-smoothing compared to their non-Hodge-aware counterparts.}

In contrast to previous methods, \method~leverages continuous dynamics in both the lower and upper Hodge Laplacians.
Although the discrete SCCNN also decouples the Hodge Laplacians, it faces two challenges: \textit{i}) the Hodge filters' order must be manually tuned, and \textit{ii}) the model has limited control over over-smoothing, a common issue in deep GNNs and SNNs.
\method~addresses these challenges by introducing lower and upper Hodge-aware PDEs on simplicial complexes, enabling differentiability of the convolutional operation \textit{w.r.t.} the simplicial receptive fields, providing greater flexibility and robustness in learning.

\section{Preliminaries}
%Here, I want to introduce any initial concepts necessary for SNNs.

%\subsection{Notation}

%\subsection{Simplicial complexes}

\subsection{Notation and Simplicial Complexes}

\textbf{Notation.} Calligraphic letters like $\mathcal{X}$ denote sets with $\vert \mathcal{X} \vert$ being their cardinality.
Uppercase boldface letters such as $\mathbf{B}$ represent matrices, and lowercase boldface letters like $\mathbf{x}$ denote vectors.
Similarly, $\tr(\cdot)$ represents the trace of a matrix, $\Vert \cdot \Vert$ is the $\ell_2$-norm of a vector, and $(\cdot)^\top$ designates transposition.

\textbf{Simplicial complex.}
A simplicial complex is a set $\mathcal{X}$ of finite subsets of another set $\mathcal{V}$ that is closed under restriction, \ie $\forall~s^k \in \mathcal{X}$, if $s^{k'} \subseteq s^k$, then $s^{k'} \in \mathcal{X}$.
Each element of $\mathcal{X}$ is called a \textit{simplex}.
Particularly, if $\vert s^k \vert = k+1$, we call $s^k$ a $k$-simplex.
A \textit{face} of $s^k$ is a subset with cardinality $k$, while a \textit{coface} of $s^k$ is a $k+1$-simplex that has $s^k$ as a face \cite{barbarossa2020topological}.
We refer to the $0$-simplices as nodes, the $1$-simplices as edges, and the $2$-simplices as triangles. For higher-order simplices, we use the term $k$-simplices. The notation $\mathcal{X}_k$ represents the collection of $k$-simplices of $\mathcal{X}$.
If $\mathcal{X}_c=\emptyset~\forall~c>d$, we say $\mathcal{X}$ is a simplicial complex of dimension $d$.
For example, a simple graph is a simplicial complex of dimension one and can be represented as $G=(\mathcal{X}_0, \mathcal{X}_1)$, \ie the set of nodes and edges.

We use incidence matrices $\mathbf{B}_k \in \{ -1,0,1 \}^{\vert \mathcal{X}_{k-1} \vert \times \vert \mathcal{X}_k \vert}$, to describe the incidence relationships between $k-1$-simplices (faces) and $k$-simplices.
%in a simplicial complex.
For example, $\mathbf{B}_1$ and $\mathbf{B}_2$ are node-to-edge and edge-to-triangle incidence matrices, respectively.
Simplicial complexes are defined with some orientation, and therefore the value $\mathbf{B}_k(i,j)$ is either $-1$ or $1$ if the $k$-simplex $i$ is incident to the $k-1$-simplex $j$, depending on the orientation, and $0$ otherwise.
Please notice that $\mathbf{B}_0$ is not defined.
We define the $k$-\textit{Hodge Laplacians} as $\mathbf{L}_k = \mathbf{B}_k^\top \mathbf{B}_k + \mathbf{B}_{k+1} \mathbf{B}_{k+1}^\top$,
% \begin{equation}
% \mathbf{L}_k = \mathbf{B}_k^\top \mathbf{B}_k + \mathbf{B}_{k+1} \mathbf{B}_{k+1}^\top,
% \label{eqn:hodge_laplacian}
% \end{equation}
where $\mathbf{L}_{k,d} = \mathbf{B}_k^\top \mathbf{B}_k$ is the \textit{lower Laplacian}, $\mathbf{L}_{k,u}=\mathbf{B}_{k+1} \mathbf{B}_{k+1}^\top$ is the \textit{upper Laplacian}, $\mathbf{L}_0 = \mathbf{B}_1 \mathbf{B}_1^\top$ is the \textit{graph Laplacian}.
{\color{black} Discrete} SNNs \cite{yang2025hodgeaware,yang2022simplicial} define their convolution operations as matrix polynomials of the Hodge Laplacians 
% in \eqref{eqn:hodge_laplacian} 
over \textit{simplicial signals}, \ie signals defined over the simplicial complex.

\textbf{Simplicial signal.}
We define a $k$-\textit{simplicial signal} as a function in $\mathcal{X}_k$ as $x_k : \mathcal{X}_k \to \mathbb{R}$.
Therefore, we can define a one-dimensional $k$-simplicial signal as $\mathbf{x}_k \in \mathbb{R}^{\vert \mathcal{X}_k \vert}$.
We can calculate how $\mathbf{x}_k$ varies \textit{w.r.t.} the faces and cofaces of $k$-simplices by $\mathbf{B}_{k} \mathbf{x}_k$ and $\mathbf{B}_{k+1}^\top \mathbf{x}_k$ \cite{yang2025hodgeaware}.
For example, in a node signal $\mathbf{x}_0$, $\mathbf{B}_{1}^\top \mathbf{x}_0$ computes its \textit{gradient} as the difference between adjacent nodes, and in an edge signal $\mathbf{x}_1$, $\mathbf{B}_{1} \mathbf{x}_1$ computes its \textit{divergence} \cite{yang2025hodgeaware}.

\textbf{Dirichlet energy.}
The Dirichlet energy $E(\cdot)$ quantifies the smoothness of a simplicial signal with respect to the $k$-Hodge Laplacian \cite{yang2025hodgeaware}, where a lower energy value indicates a smoother signal.
% with zero energy corresponding to a constant signal across all connected simplices.
%The Dirichlet energy $E(\cdot)$ measures how smooth a simplicial signal is on the $k$-Hodge Laplacian, where zero energy means a constant simplicial signal over the connected simplices.
%We adapt the definition of the Dirichlet energy on simplicial data from \cite{yang2025hodgeaware}.
%We start our discussion with the definition of Dirichlet energy on the simplicial data adapted from \cite{yang2025hodgeaware}:
\begin{definition}[from \cite{yang2025hodgeaware}]
\label{def:dirich}
The Dirichlet energy of a simplicial signal $\mathbf{x}_k$ can be stated as:
\begin{equation}
E(\mathbf{x}_k)\vcentcolon=\mathbf{x}^\top_k \mathbf{L}_k\mathbf{x}_k=\|\mathbf{B}_k\mathbf{x}_k\|_2^2+\|\mathbf{B}^\top_{k+1}\mathbf{x}_k\|_2^2.
\end{equation}
\end{definition}
This definition generalizes the Dirichlet energy from graphs to simplicial complexes, measuring how similar the values assigned to adjacent simplices are, with higher energy indicating larger variation.
%As a generalization to the Dirichlet energy over graphs, here it show how similar the assigned values to the connected simplices are.
% \begin{definition}[\cite{yang2025hodgeaware}]
% \label{def:dirich}
% The Dirichlet energy of a simplicial signal $\mathbf{f}$ can be stated as:
% \begin{equation}
% E(\mathbf{f}_k)\vcentcolon=\mathbf{f}^\top_k \mathbf{L}_k\mathbf{f}_k=\|\mathbf{B}_k\mathbf{f}_k\|_2^2+\|\mathbf{B}^\top_{k+1}\mathbf{f}_k\|_2^2.
% \end{equation}
% \end{definition}
%Building upon this definition, we start the discussion with the OS analysis on the discrete SNNs as follows:

% A $k$-simplex $s^k$ can be expressed as a subset of the graph nodes $\mathcal{V}=\{1,\hdots,n_0\}$ with cardinality $k+1$.
% Any subset of $s^k$ is called a \textit{face}. Besides, a \textit{coface} of $s^k$ is a $(k+1)$-simplex with $s^k$ as its \textit{face}. For example, 0-, 1- and 2-simplices are nodes, edges, and (filled) triangles. A Simplicial Complex (SC) $\mathcal{S}$ with order $K$ collects all the $\{k\}_{k=0}^K$-simplices with an \textit{inclusion} property as: $s^{k-1}\in\mathcal{S}$ if $s^{k-1}\subset s^{k}$ for $s^k\in\mathcal{S}$. Incidence matrix $\mathbf{B}_k$ models the relations between $(k-1)-$(\ie, faces) and $k-$simplices, where $\mathbf{B}_k\mathbf{B}_{k+1}=\mathbf{0}$ by definition.

\textbf{Simplicial filters.}
For a $k$-simplicial signal $\mathbf{x}_k$, a simplicial filter is a function $f : \mathbb{R}^{\vert \mathcal{X}_k \vert} \to \mathbb{R}^{\vert \mathcal{X}_k \vert}$ as:
\begin{equation}
\label{eqn:simplicial_filter}
f(\mathbf{x}_k) = \left( \sum_{i=0}^{T_d} \alpha_i \mathbf{L}_{k,d}^i + \sum_{i=0}^{T_u} \beta_i \mathbf{L}_{k,u}^i \right) \mathbf{x}_k,
\end{equation}
where $\{\alpha_0, \dots, \alpha_{T_d}\}$ and $\{\beta_0, \dots, \beta_{T_u}\}$ are the parameters of the polynomials, and $T_d$, $T_u$ are the order of the polynomials \cite{isufi2022convolutional}.
Please notice that the well-known graph filter \cite{ortega2018graph} is a specific case of \eqref{eqn:simplicial_filter}.
In this case, the graph signal is given by $\mathbf{x}_0 \in \mathbb{R}^{\vert \mathcal{X}_0 \vert}$ and since $\mathbf{B}_0$ is not defined, we have $f(\mathbf{x}_0) = \sum_{i=0}^{T_u} \beta_i \mathbf{L}_{0}^i \mathbf{x}_i$, \ie the classical graph filter with the graph Laplacian as the shift operator.

%%%%%%%%%%%%%%%%%%

\subsection{Discrete Simplicial Neural Network}

Analogous to a convolutional neural network, an SNN can be defined for simplicial complexes using simplicial filters \cite{yang2022simplicial}.
However, notice that relying only on filters like in \eqref{eqn:simplicial_filter} ignores the connections among the adjacent simplices modeled by $\mathbf{L}_k$.
Since different simplicial signals 
% $\mathbf{x}_0, \mathbf{x}_1, \dots, \mathbf{x}_K$ 
influence each other via the simplicial complex localities, previous works have defined simplicial filter banks \cite{isufi2022convolutional}. 
{\color{black}The following discrete Hodge-aware filtering model is adapted from SCCNNs \cite{yang2025hodgeaware}.}

% \begin{figure*}
% \centering
% \includegraphics[width=0.92\textwidth]{figures/pipeline_v5.pdf}
% \caption{The PDE-based signal evolution on a simplicial complex, governed by independent diffusion processes on the lower and upper Hodge Laplacians 
% % ($\mathbf{L}_{k,d}$, $\mathbf{L}_{k,u}$) 
% and a coupled process integrating both spaces.}
% \label{fig:pipeline}
% \end{figure*}

\textbf{One-dimensional case.}
Let the lower and upper projections of a simplicial signal $\mathbf{x}^l_k$ at layer $l$ be $\mathbf{x}^l_{k,d}=\mathbf{B}^\top_k\mathbf{x}^l_{k-1} \in \mathbb{R}^{\vert \mathcal{X}_k \vert}$ and $\mathbf{x}^l_{k,u}=\mathbf{B}_{k+1}\mathbf{x}^l_{k+1} \in \mathbb{R}^{\vert \mathcal{X}_k \vert}$, respectively\footnote{In this work, the superscript $l$ refers to the layer index and should not be confused with exponentiation.}.
We define a simplicial layer (with parameters $\theta$ and $\psi$ \cite{yang2025hodgeaware}) as a function $g : \mathbb{R}^{\vert \mathcal{X}_k \vert} \times \mathbb{R}^{\vert \mathcal{X}_k \vert} \times \mathbb{R}^{\vert \mathcal{X}_k \vert} \to \mathbb{R}^{\vert \mathcal{X}_k \vert}$ given as:
\begin{equation}
\mathbf{x}^l_k  = \sigma \left(\mathbf{H}^l_{k,d}\mathbf{x}^{l-1}_{k,d}+\mathbf{H}^l_{k}\mathbf{x}^{l-1}_{k}+\mathbf{H}^l_{k,u}\mathbf{x}^{l-1}_{k,u}\right),
\label{eqn:simplicial_NN}
\end{equation}
where $\mathbf{H}^l_{k,d} \vcentcolon= \sum_{i=0}^{T_d} \theta^l_{k,d,i} \mathbf{L}_{k,d}^i$, $\mathbf{H}^l_{k,u} \vcentcolon= \sum_{i=0}^{T_u} \theta^l_{k,u,i} \mathbf{L}_{k,u}^i$ and $\mathbf{H}^l_{k} \vcentcolon= \sum_{i=0}^{T_d} \psi^l_{k,d,i} \mathbf{L}_{k,d}^i + \sum_{i=0}^{T_u} \psi^l_{k,u,i} \mathbf{L}_{k,u}^i$.
% \begin{equation}
% \label{eqn:filter_banks}
% \begin{split}
%     & \mathbf{H}^l_{k,d} \vcentcolon= \sum_{i=0}^{T_d} \theta^l_{k,d,i} \mathbf{L}_{k,d}^i, \quad \mathbf{H}^l_{k,u} \vcentcolon= \sum_{i=0}^{T_u} \theta^l_{k,u,i} \mathbf{L}_{k,u}^i, \:\:\:\mathbf{H}^l_{k} \vcentcolon= \sum_{i=0}^{T_d} \psi^l_{k,d,i} \mathbf{L}_{k,d}^i + \sum_{i=0}^{T_u} \psi^l_{k,u,i} \mathbf{L}_{k,u}^i.
% \end{split}
% \end{equation}

\textbf{Multi-dimensional case.}
Let $\{\mathbf{X}^{l}_{k}, \mathbf{X}^{l}_{k,d}, \mathbf{X}^{l}_{k,u} \}$ be the $F_{l-1}$-dimensional simplicial signal and its lower and upper projections at layer $l$.
Let $\boldsymbol{\Theta}^{l}_{k,d,i}$, $\boldsymbol{\Theta}^{l}_{k,u,i}$, $\boldsymbol{\Psi}^{l}_{k,d,i}$, and $\boldsymbol{\Psi}^{l}_{k,u,i}$ be learnable linear projections in $\mathbb{R}^{F_{l-1} \times F_{l}}$ corresponding to the $\alpha$ and $\phi$ parameters for the unidimensional case in \eqref{eqn:simplicial_NN}.
Using \eqref{eqn:simplicial_filter} and \eqref{eqn:simplicial_NN}, we can define an SNN layer for the multidimensional case as follows \cite{yang2025hodgeaware}:
\begin{equation}
\label{eq:SNN_filtering3}
\resizebox{\textwidth}{!}{$\mathbf{X}^{l}_k= \sigma\left(\sum_{i=0}^{T_d}{\mathbf{L}^i_{k,d}\mathbf{X}^{l-1}_{k,d}\boldsymbol{\Theta}^{l}_{k,d,i}}+\sum_{i=0}^{T_d}{\mathbf{L}^i_{k,d}\mathbf{X}^{l-1}_{k}\boldsymbol{\Psi}^{l}_{k,d,i}} +\sum_{i=0}^{T_u}{\mathbf{L}^i_{k,u}\mathbf{X}^{l-1}_{k}\boldsymbol{\Psi}^{l}_{k,u,i}}+\sum_{i=0}^{T_u}{\mathbf{L}^i_{k,u}\mathbf{X}^{l-1}_{k,u} \boldsymbol{\Theta}^{l}_{k,u,i}}\right).$}
\end{equation}
% \begin{equation}
% \label{eq:SNN_filtering3}
% \begin{split}
% &\mathbf{X}^{l}_k=\\\
% &\sigma\left(\sum_{i=0}^{T_d}{\mathbf{L}^i_{k,d}\mathbf{X}^{l-1}_{k,d}\boldsymbol{\Theta}^{l}_{k,d,i}}+\sum_{i=0}^{T_d}{\mathbf{L}^i_{k,d}\mathbf{X}^{l-1}_{k}\boldsymbol{\Psi}^{l}_{k,d,i}} +\sum_{i=0}^{T_u}{\mathbf{L}^i_{k,u}\mathbf{X}^{l-1}_{k}\boldsymbol{\Psi}^{l}_{k,u,i}}+\sum_{i=0}^{T_u}{\mathbf{L}^i_{k,u}\mathbf{X}^{l-1}_{k,u} \boldsymbol{\Theta}^{l}_{k,u,i}}\right).
% \end{split}
% \end{equation}
The discrete SNN in \eqref{eq:SNN_filtering3} is analogous to the GNN case, where discrete powers of the Hodge Laplacians capture multi-hop diffusions in the simplicial signal and its lower and upper projections.

All the proofs of theorems, propositions, and lemmas of the paper are provided in Appendices \ref{sec:proof1}-\ref{sec:proof_end}.

\section{Continuous Simplicial Neural Network}

%\jhony{Roadmap}.
% Discrete SNNs offer some flexibility regarding the filtering in the lower and upper projections in simplicial signals.
% However, the propagation of information is fixed for each polynomial order.
% In this section, we introduce \method, which offers a dynamic receptive field on each convolutional operation.
% First, we introduce the PDEs governing physics-informed dynamics over simplicial complexes.
% Therefore, we present the basic operation of \method~as the solution of the simplicial PDEs.
% Finally, we rigorously analyze the stability of \method~against topological perturbations in the simplicial complexes.

Discrete SNNs provide flexibility in filtering simplicial signals through lower and upper projections.
However, their information propagation remains fixed for each polynomial order, limiting adaptability. 
%{\color{black} Note that the connection between PDEs on simplicial complexes (in the form of diffusion processes) and existing simplified discrete forms like \cite{bunch2020simplicial} was first studied in \cite{yang2025hodgeaware}. Therefore, inspired by \cite{yang2025hodgeaware} and to generalize its flexibility and adaptability,} 
In this section, we introduce \method, which enables a dynamic receptive field in each convolutional operation.
We begin by formulating the PDEs that govern physics-informed dynamics over simplicial complexes.
Next, we define the fundamental operations of \method~as the solutions to these PDEs.
Finally, we provide a rigorous stability analysis of \method, showing its robustness to topological perturbations in simplicial complexes.
\begin{figure*}
\centering
\includegraphics[width=0.95\textwidth]{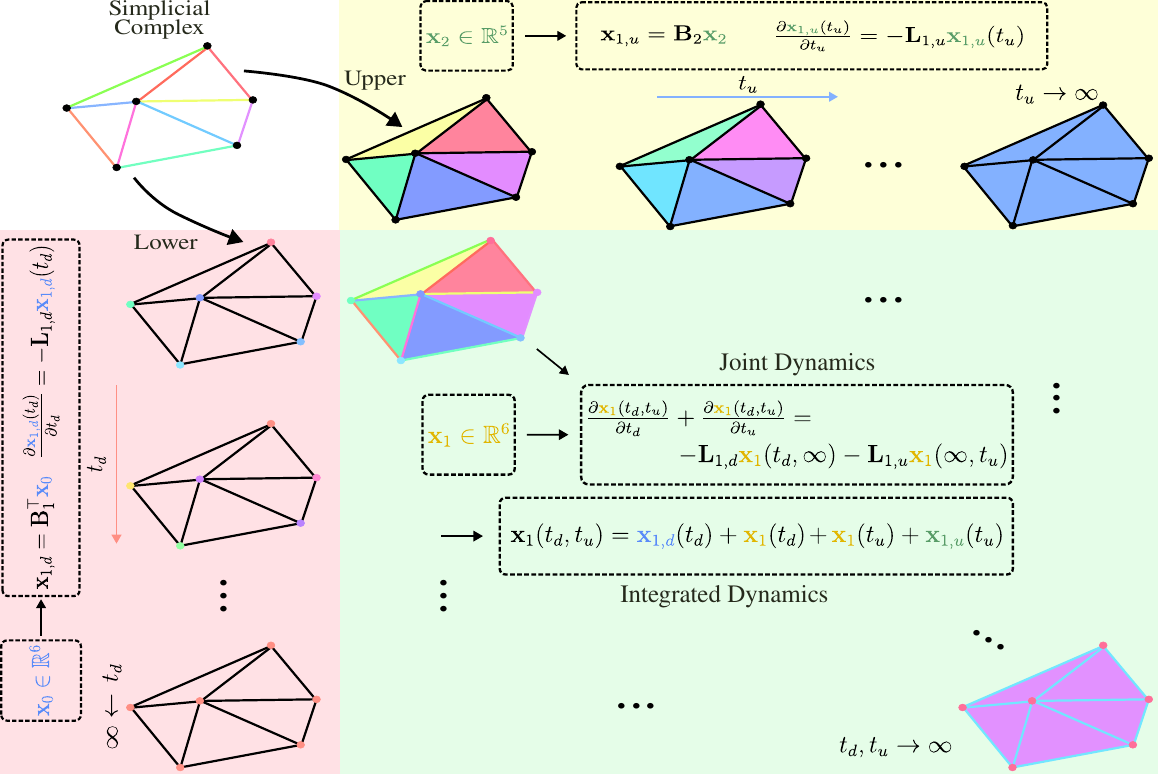}
\caption{The PDE-based signal evolution on a simplicial complex, governed by independent diffusion processes on the lower and upper Hodge Laplacians 
% ($\mathbf{L}_{k,d}$, $\mathbf{L}_{k,u}$) 
and a coupled process integrating both spaces.
The colors in the simplicial complexes represent the values of the underlying simplicial signals.
}
\label{fig:pipeline}
\end{figure*}

\subsection{PDEs in Simplicial Complexes}
\label{sec:PDEs}

Our set of PDEs is inspired by heat diffusion on simplicial complexes, providing a natural extension of discrete SNNs. 
Intuitively, performing heat diffusion over the decoupled Hodge Laplacians enables information propagation at different rates within the continuous domain of the simplicial complex.
%, {\color{black} which was first studied briefly in \cite{yang2025hodgeaware}}.
This parallels the case in graphs \cite{han2024from}, where continuous GNN formulations have been shown to generalize certain discrete GNNs \cite{chamberlain2021grand}, opening new possibilities for architectural design.

% In our framework, considering both joint, on $s^k$, and independent diffusion processes on $s^{k-1}$ and $s^{k+1}$ allows for greater flexibility in modeling complex relationships

In our framework, considering both joint diffusion on $s^k$ and independent diffusion on $s^{k-1}$ and $s^{k+1}$ allows for greater flexibility in modeling complex relationships. By enabling these dynamics to evolve at different rates, we can better adapt to the underlying topology.
Motivated by these considerations, we model simplicial heat diffusion using a system of PDEs on the Hodge Laplacians.
Let $t_d$ and $t_u$ be the time variables governing the dynamics in the lower and upper Laplacians, respectively.
We define these dynamics through the following system of PDEs:

\textit{Independent lower dynamics:} The signal evolution in the lower space follows a heat diffusion process:
\begin{equation}
    \frac{\partial \mathbf{x}_{k,d}(t_d)}{\partial t_d} = -\mathbf{L}_{k,d}\mathbf{x}_{k,d}(t_d).
\end{equation}    
\textit{Independent upper dynamics:} Similarly, the signal in the upper space evolves according to:
\begin{equation}
    \frac{\partial \mathbf{x}_{k,u}(t_u)}{\partial t_u} = -\mathbf{L}_{k,u}\mathbf{x}_{k,u}(t_u).
\end{equation}
\textit{Joint dynamics:} The interaction between the lower and upper spaces is captured by:
\begin{equation}
\label{eq:joint_PDE}
\begin{aligned}
    & \frac{\partial \mathbf{x}_{k}(t_d, t_u)}{\partial t_d} + \frac{\partial \mathbf{x}_{k}(t_d, t_u)}{\partial t_u} = -\mathbf{L}_{k,d}\mathbf{x}_{k}(t_d, \infty) - \mathbf{L}_{k,u}\mathbf{x}_{k}(\infty, t_u), 
\end{aligned}
    % \frac{\partial \mathbf{x}_{k}(t_d, t_u)}{\partial t_d} + \frac{\partial \mathbf{x}_{k}(t_d, t_u)}{\partial t_u} = -\mathbf{L}_{k,d}\mathbf{x}_{k}(t_d, \infty) - \mathbf{L}_{k,u}\mathbf{x}_{k}(\infty, t_u),
\end{equation}
where $\mathbf{x}_{k}(t_d,\infty) = \lim_{t_u\rightarrow\infty}{\mathbf{x}_{k}(t_d,t_u)}$ and $\mathbf{x}_{k}(\infty,t_u) = \lim_{t_d\rightarrow\infty}{\mathbf{x}_{k}(t_d,t_u)}$ state marginal stable solutions in upper and lower subspaces.

\textit{Integrated dynamics:} The final solution by integration of the independent and joint dynamics is as:
\begin{equation}
    \mathbf{x}_k(t_d, t_u) = \mathbf{x}_{k,d}(t_d) + \mathbf{x}_{k}(t_d) + \mathbf{x}_{k}(t_u) + \mathbf{x}_{k,u}(t_u).
\end{equation}
These dynamics describe the information flow across different simplicial levels, ensuring a principled integration of independent and coupled dynamics.
Refer to Fig.~\ref{fig:pipeline} for a visual representation on $s^2$. 
{\color{black}\begin{remark}
With $t_d=t_u=t$, the joint dynamic PDE in \eqref{eq:joint_PDE} turns into $\frac{\partial \mathbf{x}_{k}(t)}{\partial t} = -\mathbf{L}_{k,d}\mathbf{x}_{k}(t) - \mathbf{L}_{k,u}\mathbf{x}_{k}(t)$, and its steady state solution, $\mathbf{L}_k\mathbf{x}_k=\mathbf{0}$, lies in the kernel space of $\mathbf{L}_k$ \cite{yang2025hodgeaware}, justifying the need for independent lower and upper Hodge-aware PDEs.
\end{remark}}

\subsection{\method~as a Solution to the Simplicial PDEs}
We propose \method~as a solution to the descriptive sets of PDEs introduced in Section \ref{sec:PDEs}.
%First, we state a solution form for \eqref{eq:PDEs} in the following proposition:
\begin{proposition}
\label{prop:sul}
The solution to the descriptive sets of PDEs in Section \ref{sec:PDEs} is given by:
\begin{equation}
\label{eq:SNN_filtering2}
\begin{aligned}
\mathbf{x}'_k(t_d, t_u) 
&= \overbrace{e^{-t_d \mathbf{L}_{k,d}} \mathbf{x}_{k,d}(0)}^{\mathbf{x}_{k,d}(t_d)} 
+ \overbrace{e^{-t_u \mathbf{L}_{k,u}} \mathbf{x}_{k,u}(0)}^{\mathbf{x}_{k,u}(t_u)} + \overbrace{e^{-t_d \mathbf{L}_{k,d}} \mathbf{x}_{k}(0, 0) + e^{-t_u \mathbf{L}_{k,u}} \mathbf{x}_{k}(0, 0)}^{\mathbf{x}_{k}(t_d, t_u)},
\end{aligned}
\end{equation}
where $\mathbf{x}_{k,d}(0)$, $\mathbf{x}_{k,u}(0)$, and $\mathbf{x}_{k}(0, 0)$ are the initial conditions for the PDEs.
\end{proposition}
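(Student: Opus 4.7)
The plan is to verify the claimed closed form directly against each of the three PDEs introduced in Section \ref{sec:PDEs} and then appeal to uniqueness of solutions of linear matrix ODEs. The strategy splits naturally into (i) the two decoupled heat equations, (ii) the joint equation, and (iii) a trivial aggregation step that recovers the displayed formula in \eqref{eq:SNN_filtering2}.

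First I would dispatch the two independent PDEs. Each has the form $\dot{\mathbf{y}}(t) = -\mathbf{L}\,\mathbf{y}(t)$ with $\mathbf{L}$ symmetric positive semi-definite, whose unique solution is $\mathbf{y}(t) = e^{-t\mathbf{L}}\mathbf{y}(0)$; a direct term-wise differentiation of the exponential series gives $\tfrac{d}{dt} e^{-t\mathbf{L}} = -\mathbf{L}\, e^{-t\mathbf{L}}$, so $\mathbf{x}_{k,d}(t_d) = e^{-t_d \mathbf{L}_{k,d}}\mathbf{x}_{k,d}(0)$ and $\mathbf{x}_{k,u}(t_u) = e^{-t_u \mathbf{L}_{k,u}}\mathbf{x}_{k,u}(0)$ follow immediately, together with the correct initial conditions at $t_d=0$ and $t_u=0$.

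The core of the argument is the joint PDE in \eqref{eq:joint_PDE}. I would adopt the additive ansatz $\mathbf{x}_k(t_d,t_u) = e^{-t_d \mathbf{L}_{k,d}}\mathbf{x}_k(0,0) + e^{-t_u \mathbf{L}_{k,u}}\mathbf{x}_k(0,0)$ and verify it termwise. The crucial algebraic fact I would invoke is the Hodge orthogonality $\mathbf{L}_{k,d}\mathbf{L}_{k,u} = \mathbf{L}_{k,u}\mathbf{L}_{k,d} = \mathbf{0}$, which is a direct consequence of $\mathbf{B}_k\mathbf{B}_{k+1} = 0$ (the boundary-of-boundary vanishing). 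This immediately yields $\mathbf{L}_{k,d}\mathbf{L}_{k,u}^{\,n} = 0$ for all $n \geq 1$, so that $\mathbf{L}_{k,d}\, e^{-t_u \mathbf{L}_{k,u}} = \mathbf{L}_{k,d}$, and symmetrically $\mathbf{L}_{k,u}\, e^{-t_d \mathbf{L}_{k,d}} = \mathbf{L}_{k,u}$. Using these, the left-hand side of \eqref{eq:joint_PDE} collapses to $-\mathbf{L}_{k,d}\, e^{-t_d \mathbf{L}_{k,d}}\mathbf{x}_k(0,0) - \mathbf{L}_{k,u}\, e^{-t_u \mathbf{L}_{k,u}}\mathbf{x}_k(0,0)$, and the marginal components $\mathbf{x}_k(t_d,\infty)$ and $\mathbf{x}_k(\infty,t_u)$ on the right-hand side reduce, after applying $\mathbf{L}_{k,d}$ and $\mathbf{L}_{k,u}$ respectively, to exactly the same two quantities; the kernel-projector contributions $\mathbf{L}_{k,d}\mathbf{P}_{\ker \mathbf{L}_{k,u}}$ and $\mathbf{L}_{k,u}\mathbf{P}_{\ker \mathbf{L}_{k,d}}$ drop out because $\mathrm{range}(\mathbf{L}_{k,d}) \subseteq \ker(\mathbf{L}_{k,u})$ and vice versa, which is again a restatement of the Hodge orthogonality. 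Finally, adding the three component solutions recovers $\mathbf{x}'_k(t_d,t_u)$ as stated, concluding the proof.

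The main obstacle will be the careful handling of the marginal limits $\mathbf{x}_k(t_d,\infty)$ and $\mathbf{x}_k(\infty,t_u)$: a naive substitution produces residual kernel-projector terms whose cancellation is not self-evident, and pinning these down cleanly is what forces the use of $\mathbf{B}_k\mathbf{B}_{k+1}=0$ (not merely the commutativity of $e^{-t_d\mathbf{L}_{k,d}}$ and $e^{-t_u\mathbf{L}_{k,u}}$). Once this orthogonality is exploited, the verification is mechanical.
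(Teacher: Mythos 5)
Your overall strategy---direct verification of the closed form against each PDE, term by term---is exactly the paper's route, and your treatment of the two independent heat equations and of the left-hand side of the joint equation is correct (indeed, the left-hand side needs no Hodge orthogonality at all, since each summand depends on only one of $t_d,t_u$). The problem is the step you yourself flag as the main obstacle: the cancellation of the kernel-projector terms in the marginal limits goes the wrong way. Writing $\mathbf{P}_u$ for the orthogonal projector onto $\ker(\mathbf{L}_{k,u})$, your argument needs $\mathbf{L}_{k,d}\mathbf{P}_u=\mathbf{0}$, but the identity $\mathbf{B}_k\mathbf{B}_{k+1}=\mathbf{0}$ gives the opposite: since $\mathrm{range}(\mathbf{L}_{k,u})=\ker(\mathbf{L}_{k,u})^{\perp}\subseteq\ker(\mathbf{L}_{k,d})$, one has $\mathbf{L}_{k,d}(\mathbf{I}-\mathbf{P}_u)=\mathbf{0}$ and hence $\mathbf{L}_{k,d}\mathbf{P}_u=\mathbf{L}_{k,d}$. (The inclusion $\mathrm{range}(\mathbf{L}_{k,d})\subseteq\ker(\mathbf{L}_{k,u})$ that you cite controls $\mathbf{L}_{k,u}$ acting on the range of $\mathbf{L}_{k,d}$, not $\mathbf{L}_{k,d}$ acting on the kernel of $\mathbf{L}_{k,u}$.) Consequently
\begin{equation*}
\mathbf{L}_{k,d}\,\mathbf{x}_k(t_d,\infty)=\mathbf{L}_{k,d}e^{-t_d\mathbf{L}_{k,d}}\mathbf{x}_k(0,0)+\mathbf{L}_{k,d}\mathbf{x}_k(0,0),
\end{equation*}
with the analogous identity for the upper term, so your verification of \eqref{eq:joint_PDE} leaves an uncancelled residual $-\mathbf{L}_{k,d}\mathbf{x}_k(0,0)-\mathbf{L}_{k,u}\mathbf{x}_k(0,0)=-\mathbf{L}_k\mathbf{x}_k(0,0)$ on the right-hand side, which vanishes only for harmonic initial data.

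For comparison, the paper's proof does not attempt this bookkeeping: it simply identifies $\lim_{t_u\to\infty}\mathbf{x}_k(t_d,t_u)$ with $e^{-t_d\mathbf{L}_{k,d}}\mathbf{x}_k(0,0)$, which amounts to assuming $\lim_{t_u\to\infty}e^{-t_u\mathbf{L}_{k,u}}\mathbf{x}_k(0,0)=\mathbf{0}$, i.e., that the initial condition has no component in $\ker(\mathbf{L}_{k,u})$ (and symmetrically for the lower Laplacian). Under that implicit assumption the projectors you introduce are absent and the verification closes; without it, your cancellation claim is false as stated. To repair your write-up, either impose the orthogonality-to-kernel assumption explicitly, or record that the residual equals $-\mathbf{L}_k\mathbf{x}_k(0,0)$ and restrict to initial data satisfying $\mathbf{L}_k\mathbf{x}_k(0,0)=\mathbf{0}$; as written, the "mechanical" final step does not go through.
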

Using \eqref{eq:SNN_filtering2} and extending the solution to the multidimensional case, we propose the $l$-th layer of \method~as (considering $\sigma(\cdot)$ as a nonlinearity and $\{\mathbf{X}^{0}_{k}, \mathbf{X}^{0}_{k,d}, \mathbf{X}^{0}_{k,u}\}$ as the initial conditions):
\begin{equation}
\label{eq:SNN_filtering_ours}
\begin{aligned}
\mathbf{X}^{l}_k = \; & \sigma\Big(
e^{-t_d \mathbf{L}_{k,d}} \mathbf{X}^{l-1}_{k,d} \boldsymbol{\Theta}^{l}_{k,d} 
+ e^{-t_u \mathbf{L}_{k,u}} \mathbf{X}^{l-1}_{k,u} \boldsymbol{\Theta}^{l}_{k,u} + e^{-t_d \mathbf{L}_{k,d}} \mathbf{X}^{l-1}_{k} \boldsymbol{\Psi}^{l}_{k,d} 
+ e^{-t_u \mathbf{L}_{k,u}} \mathbf{X}^{l-1}_{k} \boldsymbol{\Psi}^{l}_{k,u} \Big),
\end{aligned}
\end{equation}
% \begin{equation}
% \label{eq:SNN_filtering_ours}
% \begin{split}
% &\mathbf{X}^{l}_k\\
% &=\sigma(e^{-t_d\mathbf{L}_{k,d}}\mathbf{X}^{l-1}_{k,d}\boldsymbol{\Theta}^{l}_{k,d}+e^{-t_u\mathbf{L}_{k,u}}\mathbf{X}^{l-1}_{k,u}\boldsymbol{\Theta}^{l}_{k,u}\\
% &+e^{-t_d\mathbf{L}_{k,d}}\mathbf{X}^{l-1}_{k}\boldsymbol{\Psi}^{l}_{k,d}+e^{-t_u\mathbf{L}_{k,u}}\mathbf{X}^{l-1}_{k}\boldsymbol{\Psi}^{l}_{k,u}),
% \end{split}
% \end{equation}
where $\{ \boldsymbol{\Theta}^{l}_{k,d}, \boldsymbol{\Theta}^{l}_{k,u}, \boldsymbol{\Psi}^{l}_{k,d}, \boldsymbol{\Psi}^{l}_{k,u}\}$ are learnable linear projections in $\mathbb{R}^{F_{l-1} \times F_{l}}$. 
%Please note that \eqref{eq:SNN_filtering_ours} is a continuous version of \eqref{eq:SNN_filtering3}.

% \subsection{Continuous SNNs}
% We propose our Continuous SNN (CSNN) as a solution to the descriptive sets of PDEs \eqref{eq:PDEs}. First, we state a solution form for \eqref{eq:PDEs} as:

% \begin{proposition}
% \label{prop:sul}
% The following expression serves as a solution to the descriptive sets of PDEs in \eqref{eq:PDEs}:
% \begin{equation}
% \label{eq:SNN_filtering2}
% \begin{split}
% &\mathbf{X}_k(t_d,t_u)\\
% &=\overbrace{e^{-t_d\mathbf{L}_{k,d}}\mathbf{X}_{k,d}(0)}^{\mathbf{X}_{k,d}(t_d)}+\overbrace{e^{-t_u\mathbf{L}_{k,u}}\mathbf{X}_{k,u}(0)}^{\mathbf{X}_{k,u}(t_u)}\\
% &+\overbrace{e^{-t_d\mathbf{L}_{k,d}}\mathbf{X}_{k}(0,0)+e^{-t_u\mathbf{L}_{k,u}}\mathbf{X}_{k}(0,0)}^{\mathbf{X}_{k}(t_d,t_u)}.\\ 
% \end{split}
% \end{equation}
% \end{proposition}
% Inspired by this solution and by considering the nonlinearity $\sigma(.)$ and learnable weight matrices $\mathbf{W}'^{l}_{k,d},\mathbf{W}'^{l}_{k,u},\mathbf{W}^{l}_{k,d}$ and $\mathbf{W}^{l}_{k,u}$, we propose the $(l+1)$-th layer of our cSNN as:
% \begin{equation}
% \label{eq:SNN_filtering_ours}
% \begin{split}
% &\mathbf{X}^{l+1}_k(t_d,t_u)\\
% &=\sigma(e^{-t_d\mathbf{L}_{k,d}}\mathbf{X}^l_{k,d}(t_d)\mathbf{W}'^{l}_{k,d}+e^{-t_u\mathbf{L}_{k,u}}\mathbf{X}_{k,u}(t_u)\mathbf{W}'^{l}_{k,u}\\
% &+e^{-t_d\mathbf{L}_{k,d}}\mathbf{X}_{k}(t_d,t_u)\mathbf{W}^{l}_{k,d}+e^{-t_u\mathbf{L}_{k,u}}\mathbf{X}_{k}(t_d,t_u)\mathbf{W}^{l}_{k,u}).\\ 
% \end{split}
% \end{equation}

\begin{remark}
\label{rmk:aggregation_layer}
One possible approach to make our model more expressive is the aggregation of $M$ learnable branches.
Let $f^{(m)}_{k,l}\left(\mathbf{X};t,\boldsymbol{\Theta},\boldsymbol{\Psi}\right)$ be the $m$-th branch of layer $l$ as the right-hand side of the \method~model in \eqref{eq:SNN_filtering_ours}, where $\mathbf{X}=\{\mathbf{X}^{0}_{k}, \mathbf{X}^{0}_{k,d}, \mathbf{X}^{0}_{k,u}\}$, $t^{(m)}=\{t^{(m)}_d,t^{(m)}_u\}$, $\boldsymbol{\Theta}^{(m)}=\{\boldsymbol{\Theta}^{l,(m)}_{k,d}, \boldsymbol{\Theta}^{l,(m)}_{k,u}\}$, and $\boldsymbol{\Psi}^{(m)}=\{\boldsymbol{\Psi}^{l,(m)}_{k,d}, \boldsymbol{\Psi}^{l,(m)}_{k,u}\}$.
Considering $\texttt{AGG}(.)$ as a well-defined aggregation function, \eg a multilayer perceptron, the output for the $l$-th layer can be stated as:
%by considering the right-hand side of the \method~model in \eqref{eq:SNN_filtering_ours} for the $m$-th branch as $f^{(m)}_{k,l}\left(\mathbf{X};t,\boldsymbol{\Theta},\boldsymbol{\Psi}\right)$, where $t=\{t^{(m)}_d,t^{(m)}_u\}$, $\boldsymbol{\Theta}=\{\boldsymbol{\Theta}^{l,(m)}_{k,d}, \boldsymbol{\Theta}^{l,(m)}_{k,u}\}$, and $\boldsymbol{\Psi}=\{\boldsymbol{\Psi}^{l,(m)}_{k,d}, \boldsymbol{\Psi}^{l,(m)}_{k,u}\}$, the output for the $l$-th layer can be stated as:
\begin{equation}
\mathbf{X}^{l}_k=\texttt{AGG}(\{f^{(m)}_{k,l}\left(\mathbf{X};t,\boldsymbol{\Theta},\boldsymbol{\Psi}\right)\}_{m=1}^M).
\end{equation}
% where $\texttt{AGG}(.)$ is a well-defined aggregation function, \eg a multilayer perceptron.
\end{remark}
\subsection{Computational Complexity of \method}
\label{sec:comp}
For the efficient implementation of exponential Hodge filters, we benefit from the eigenvalue decomposition (EVD) of the Hodge Laplacians \cite{behmanesh2023tide,einizade2024continuous}. Precisely, for a Laplacian $\mathbf{L}\in\mathbb{R}^{N\times N}$ with the eigenvalues $\lambda_0=0\le\lambda_1\le\hdots\le\lambda_{N-1}$, after performing EVD, $\mathbf{L}=\mathbf{V}\boldsymbol{\Lambda}\mathbf{V}^\top$, the exponential Laplacian filtering on input $\mathbf{X}\in\mathbb{R}^{N\times F}$ and learnable weight $\mathbf{W}\in\mathbb{R}^{F\times F'}$ can be implemented as:
\begin{equation}
\label{evd_exp}
%\begin{split}
e^{-t\mathbf{L}}\mathbf{X}\mathbf{W}\approx \mathbf{V}^{(K)}(\overbrace{[\tilde{\boldsymbol{\lambda}}^{(K)}|\hdots|\tilde{\boldsymbol{\lambda}}^{(K)}]}^{F\:\text{times}}\odot({\mathbf{V}^{(K)}}^\top\mathbf{X}))\mathbf{W},
%\end{split}
\end{equation}
where $\tilde{\boldsymbol{\lambda}}\vcentcolon=e^{-t\:\boldsymbol{\lambda}^{(K)}}=(e^{-t\:\lambda_{N-1}},\hdots,e^{-t\:\lambda_{N-K}})^\top$, and 
% \begin{equation}
% \tilde{\boldsymbol{\lambda}}\vcentcolon=e^{-t\:\boldsymbol{\lambda}^{(K)}}=\left(e^{-t\:\lambda_{N-1}},\hdots,e^{-t\:\lambda_{N-K}}\right)^\top\text{,}
% \end{equation}
$\mathbf{V}^{(K)}\in\mathbb{R}^{N\times K}$ is built by choosing the $K$ most dominant eigenvalue-eigenvector pairs of $\mathbf{L}$, and $\odot$ states the element-wise multiplication.
As $K$ gets closer to $N$, the estimation is more accurate.
Using the implementation in \eqref{evd_exp}, the computational complexity of the EVD decreases from $\mathcal{O}(N^3)$ to $\mathcal{O}(KN^2)$. 
Thus, the computational complexity of calculating the Hodge-aware exponential Hodge filters in \eqref{eq:SNN_filtering2} can be reduced from $\mathcal{O}(|\mathcal{X}_k|^3)$ to $\mathcal{O}(|\mathcal{X}_k|^2(K^{(d)}_k+K^{(u)}_k+K_k))$, where $K^{(d)}_k$, $K^{(u)}_k$, and $K_k$ are the most dominant eigenvalue-eigenvectors of $\mathbf{L}_{k,d}$, $\mathbf{L}_{k,u}$, and $\mathbf{L}_k$, respectively. 
The empirical analysis regarding the trade-off on runtime, computational complexity, and performance is presented in the Appendix \ref{sec:runtime}.

%In this way and by considering the continuous Hodge-aware model \eqref{eq:SNN_filtering2}, the computational complexity of calculating the Hodge-aware exponential Hodge filters can be reduced from $\mathcal{O}(|\mathcal{X}_k|^3)$ to $\mathcal{O}\left(|\mathcal{X}_k|^2\left(K^{(d)}_k+K^{(u)}_k+K_k\right)\right)$, where $K^{(d)}_k$, $K^{(u)}_k$, and $K_k$ are the most dominant eigenvalue-eigenvector pairs of $\mathbf{L}_{k,d}$, $\mathbf{L}_{k,u}$, and $\mathbf{L}_k$, respectively.

\subsection{Stability Analysis}
Here, we study the robustness of our model against simplicial perturbations. 
%{\color{black} The stability of discrete SNNs, \ie SCCNNs, against relative simplicial perturbations was first studied in \cite{yang2025hodgeaware}. However, inspired by the study of the stability and robustness of continuous GNNs against structural perturbations \cite{song2022robustness}, here,} 
We model these perturbations as structural inaccuracies in the incidence matrices, given by the additive error models $\tilde{\mathbf{B}}_k=\mathbf{B}_k+\mathbf{E}_k$ and $\tilde{\mathbf{B}}_{k+1}=\mathbf{B}_{k+1}+\mathbf{E}_{k+1}$, where $\|\mathbf{E}_{k}\|\le\epsilon_{k}$ and $\|\mathbf{E}_{k+1}\|\le\epsilon_{k+1}$ represent the perturbation errors.
%Here, the main goal is to study the robustness of the model against simplicial perturbations. 
%We model the simplicial perturbations as the structural inaccuracies on the incidence matrices as $\tilde{\mathbf{B}}_k=\mathbf{B}_k+\mathbf{E}_k$ and $\tilde{\mathbf{B}}_{k+1}=\mathbf{B}_{k+1}+\mathbf{E}_{k+1}$, where $\|\mathbf{E}_{k}\|\le\epsilon_{k}$ and $\|\mathbf{E}_{k+1}\|\le\epsilon_{k+1}$. 
Building upon these additive models {\color{black}on the continuous simplicial filtering operations in \eqref{eq:SNN_filtering2}}, 
% the perturbed Hodge operator is expressed as~ $\adjustedtilde{\mathbf{L}}_k=\adjustedtilde{\mathbf{L}}_{k,d}+\adjustedtilde{\mathbf{L}}_{k,u}=\tilde{\mathbf{B}}^\top_k\tilde{\mathbf{B}}_k+\tilde{\mathbf{B}}_{k+1}\tilde{\mathbf{B}}^\top_{k+1}$.
the following theorem bounds the \method's stability:
\begin{theorem}
\label{thm:stability}
Given the additive simplicial perturbation models $\tilde{\mathbf{B}}_k=\mathbf{B}_k+\mathbf{E}_k$ ~and~ $\tilde{\mathbf{B}}_{k+1}=\mathbf{B}_{k+1}+\mathbf{E}_{k+1}$, where $\|\mathbf{E}_{k}\|\le\epsilon_{k}$ and $\|\mathbf{E}_{k+1}\|\le\epsilon_{k+1}$, the error between true and perturbed targets in \eqref{eq:SNN_filtering2}, \ie $\delta_{\mathbf{X}_k}\vcentcolon= \|\tilde{\mathbf{X}}_k(t_d,t_u)-\mathbf{X}_k(t_d,t_u)\|$, is bounded as:
\begin{equation}
\label{eq:stability}
\begin{split}
&\delta_{\mathbf{X}_k}\le t_d \delta_{k,d}e^{t_d\delta_{k,d}}\left(\|\mathbf{x}_{k,d}(0)\|+\|\mathbf{x}_{k}(0, 0)\|\right)+t_u \delta_{k,u}e^{t_u\delta_{k,u}}\left(\|\mathbf{x}_{k,u}(0)\|+\|\mathbf{x}_{k}(0, 0)\|\right),
\end{split}
\end{equation}  
where $\delta_{k,d}\vcentcolon=2 \sqrt{\lambda_{\text{max}}(\mathbf{L}_{k,d})} \epsilon_k+\epsilon^2_k,\:\:\:\delta_{k,u}\vcentcolon=2 \sqrt{\lambda_{\text{max}}(\mathbf{L}_{k,u})} \epsilon_{k+1}+\epsilon^2_{k+1}$.
% \begin{equation}
% \begin{split}
% &\delta_{k,d}\vcentcolon=2 \sqrt{\lambda_{\text{max}}(\mathbf{L}_{k,d})} \epsilon_k+\epsilon^2_k,\:\:\:\delta_{k,u}\vcentcolon=2 \sqrt{\lambda_{\text{max}}(\mathbf{L}_{k,u})} \epsilon_{k+1}+\epsilon^2_{k+1}.\\
% \end{split}
% \end{equation}
\end{theorem}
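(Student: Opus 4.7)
The plan is to derive the stability bound in three steps that mirror the structure of the closed-form solution \eqref{eq:SNN_filtering2}: first bound the operator-norm perturbation of each decoupled Hodge Laplacian, then propagate it through the matrix exponentials, and finally assemble the pieces using the triangle inequality on the four terms of \eqref{eq:SNN_filtering2}.

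\textbf{Step 1: Laplacian perturbation.} Starting from $\tilde{\mathbf{B}}_k=\mathbf{B}_k+\mathbf{E}_k$ and $\tilde{\mathbf{L}}_{k,d}=\tilde{\mathbf{B}}_k^\top\tilde{\mathbf{B}}_k$, a direct expansion gives $\tilde{\mathbf{L}}_{k,d}-\mathbf{L}_{k,d}=\mathbf{B}_k^\top\mathbf{E}_k+\mathbf{E}_k^\top\mathbf{B}_k+\mathbf{E}_k^\top\mathbf{E}_k$. Taking the spectral norm and using $\|\mathbf{B}_k\|=\sqrt{\lambda_{\max}(\mathbf{L}_{k,d})}$ together with $\|\mathbf{E}_k\|\le\epsilon_k$, I would obtain $\|\tilde{\mathbf{L}}_{k,d}-\mathbf{L}_{k,d}\|\le 2\sqrt{\lambda_{\max}(\mathbf{L}_{k,d})}\,\epsilon_k+\epsilon_k^2=\delta_{k,d}$. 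The same argument applied to $\mathbf{B}_{k+1}$ (and remembering that the upper Laplacian uses $\mathbf{B}_{k+1}\mathbf{B}_{k+1}^\top$) produces $\|\tilde{\mathbf{L}}_{k,u}-\mathbf{L}_{k,u}\|\le \delta_{k,u}$.

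\textbf{Step 2: Matrix exponential perturbation.} For any PSD operator $\mathbf{L}$ and PSD perturbation $\tilde{\mathbf{L}}=\mathbf{L}+\Delta$, I would invoke Duhamel's identity
\begin{equation*}
e^{-t\tilde{\mathbf{L}}}-e^{-t\mathbf{L}}=-\int_0^t e^{-(t-s)\tilde{\mathbf{L}}}\,\Delta\,e^{-s\mathbf{L}}\,ds.
\end{equation*}
Since $\mathbf{L}$ is PSD, $\|e^{-s\mathbf{L}}\|\le 1$, while writing $\|e^{-(t-s)\tilde{\mathbf{L}}}\|\le 1+\|e^{-(t-s)\tilde{\mathbf{L}}}-e^{-(t-s)\mathbf{L}}\|$ leads to the integral inequality $f(t)\le t\|\Delta\|+\|\Delta\|\int_0^t f(u)\,du$, where $f(t)\vcentcolon=\|e^{-t\tilde{\mathbf{L}}}-e^{-t\mathbf{L}}\|$. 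Grönwall's lemma then yields $f(t)\le t\|\Delta\|e^{t\|\Delta\|}$. Substituting the bounds from Step 1 gives $\|e^{-t_d\tilde{\mathbf{L}}_{k,d}}-e^{-t_d\mathbf{L}_{k,d}}\|\le t_d\delta_{k,d}e^{t_d\delta_{k,d}}$ and the analogous bound on the upper side.

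\textbf{Step 3: Assembling the bound.} Subtracting the perturbed and unperturbed forms of \eqref{eq:SNN_filtering2} term by term, I would note that each of the four terms pairs a single exponential factor ($e^{-t_d\mathbf{L}_{k,d}}$ or $e^{-t_u\mathbf{L}_{k,u}}$) with one of $\mathbf{x}_{k,d}(0)$, $\mathbf{x}_{k,u}(0)$, or $\mathbf{x}_{k}(0,0)$. Applying the triangle inequality and sub-multiplicativity, and then grouping the terms according to whether they depend on $t_d$ or $t_u$, reproduces exactly the right-hand side of \eqref{eq:stability}. The extension to the multidimensional $\mathbf{X}_k$ follows column-wise and does not affect the bound since the learnable projections do not appear in \eqref{eq:SNN_filtering2}.

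The main obstacle I anticipate is \textbf{Step 2}: care is needed in selecting a matrix-exponential perturbation inequality that produces precisely the $t\delta\,e^{t\delta}$ dependence of \eqref{eq:stability}, rather than the tighter $t\delta$ bound one could derive by directly using $\|e^{-(t-s)\tilde{\mathbf{L}}}\|\le 1$. The Grönwall route above is what generates the exponential factor and matches the claim as stated; any alternative argument (such as working directly with the spectra of commuting approximants) would risk producing a quantitatively different bound.
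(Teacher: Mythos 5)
Your proposal is correct and follows essentially the same three-step route as the paper's proof in Appendix B: the identical expansion $\tilde{\mathbf{L}}_{k,d}-\mathbf{L}_{k,d}=\mathbf{B}_k^\top\mathbf{E}_k+\mathbf{E}_k^\top\mathbf{B}_k+\mathbf{E}_k^\top\mathbf{E}_k$ yielding $\delta_{k,d}$, the matrix-exponential perturbation bound $\|e^{-t\tilde{\mathbf{L}}}-e^{-t\mathbf{L}}\|\le t\,\|\tilde{\mathbf{L}}-\mathbf{L}\|\,e^{t\|\tilde{\mathbf{L}}-\mathbf{L}\|}$ (which the paper simply cites from Van Loan's sensitivity result rather than re-deriving via Duhamel and Gr\"onwall as you do), and the term-by-term triangle inequality on the four summands of \eqref{eq:SNN_filtering2}. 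Your closing observation is also accurate: since $\tilde{\mathbf{L}}$ is itself PSD, one could obtain the sharper bound $t\|\Delta\|$ without the exponential factor, but the Gr\"onwall route is the one that reproduces the theorem exactly as stated.
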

Theorem \ref{thm:stability} shows how the robustness of the model is influenced by the maximum eigenvalues of $\mathbf{L}_{k,d}$ and $\mathbf{L}_{k,u}$, as well as by the Hodge receptive fields $t_d$ and $t_u$.
Furthermore, the error bounds of $\epsilon_k$ and $\epsilon_{k+1}$ play a critical role in determining $\delta_{k,d}$ and $\delta_{k,u}$, which ultimately control the stability.
% Theorem \ref{thm:stability} shows how the robustness is affected by the maximum eigenvalues of $\mathbf{L}_{k,d}$ and $\mathbf{L}_{k,u}$.
% Besides, the Hodge receptive fields $t_d$ and $t_u$ are affecting both terms.
% Finally, the error bounds of $\epsilon_k$ and $\epsilon_{k+1}$ play important roles in constraining $\delta_{k,d}$  and $\delta_{k,u}$.
The following corollary simplifies Theorem \ref{thm:stability} under the assumption of sufficiently small error bounds.
%The following corollary simplifies Theorem \ref{thm:stability} in the case of error bounds being sufficiently small.
\begin{corollary}
\label{corl:stability}
When the error bounds $\epsilon_k$ and $\epsilon_{k+1}$ are sufficiently small, the error between the true and perturbed targets is given by $\delta_{\mathbf{X}_k}=\mathcal{O}(\epsilon_k)+\mathcal{O}(\epsilon_{k+1})$.
%In the case of  error bounds $\epsilon_k$ and $\epsilon_{k+1}$ being sufficiently small in \eqref{eq:stability}, it holds that:
% \begin{equation}
% \label{eq:stability2}
% %\begin{split}
% \delta_{\mathbf{X}_k}=\mathcal{O}(\epsilon_k)+\mathcal{O}(\epsilon_{k+1}).
% %\end{split}
% \end{equation}
\end{corollary}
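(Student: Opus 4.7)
The plan is to start directly from the bound in Theorem \ref{thm:stability} and perform a small-perturbation expansion in $\epsilon_k$ and $\epsilon_{k+1}$, treating all other quantities ($t_d$, $t_u$, the Laplacian eigenvalues, and the initial condition norms) as fixed constants. The desired conclusion is that the right-hand side of \eqref{eq:stability} reduces to a sum of two terms that are asymptotically linear in $\epsilon_k$ and $\epsilon_{k+1}$, respectively.

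First, I would examine $\delta_{k,d} \vcentcolon= 2\sqrt{\lambda_{\max}(\mathbf{L}_{k,d})}\,\epsilon_k + \epsilon_k^2$. Since the quadratic term is dominated by the linear one as $\epsilon_k \to 0$, we have $\delta_{k,d} = \mathcal{O}(\epsilon_k)$, and by the same argument $\delta_{k,u} = \mathcal{O}(\epsilon_{k+1})$. Next, I would expand the exponential factor via $e^{t_d \delta_{k,d}} = 1 + t_d \delta_{k,d} + \mathcal{O}(\delta_{k,d}^2) = 1 + \mathcal{O}(\epsilon_k)$, which stays bounded by a constant (for instance by $1 + \mathcal{O}(\epsilon_k) \le 2$) for sufficiently small $\epsilon_k$. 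Consequently, the first summand on the right of \eqref{eq:stability} satisfies
\begin{equation*}
t_d \delta_{k,d}\, e^{t_d \delta_{k,d}}\bigl(\|\mathbf{x}_{k,d}(0)\|+\|\mathbf{x}_{k}(0,0)\|\bigr) = \mathcal{O}(\epsilon_k),
\end{equation*}
and an identical argument yields $\mathcal{O}(\epsilon_{k+1})$ for the second summand.

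Adding the two contributions gives $\delta_{\mathbf{X}_k} = \mathcal{O}(\epsilon_k) + \mathcal{O}(\epsilon_{k+1})$, as claimed. There is no real technical obstacle here; the only care needed is to make explicit that the hidden constants absorb the fixed quantities $t_d$, $t_u$, $\sqrt{\lambda_{\max}(\mathbf{L}_{k,d})}$, $\sqrt{\lambda_{\max}(\mathbf{L}_{k+1,u})}$, and the initial-condition norms, and to justify that the regime "sufficiently small $\epsilon_k, \epsilon_{k+1}$" is what allows us to replace $e^{t_d \delta_{k,d}}$ (and its counterpart) by a bounded factor. The whole argument is therefore a one-step linearization of the bound in Theorem \ref{thm:stability}.
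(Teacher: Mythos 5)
Your argument is correct and is essentially the same as the paper's: both amount to a one-step asymptotic linearization of the stability bound, observing that $\delta_{k,d}=\mathcal{O}(\epsilon_k)$, $\delta_{k,u}=\mathcal{O}(\epsilon_{k+1})$, and that the exponential factors stay bounded for small perturbations. The only cosmetic difference is that the paper re-traces the intermediate Laplacian and matrix-exponential perturbation bounds from the proof of Theorem \ref{thm:stability} (via the sensitivity inequality $\|e^{-t\tilde{\mathbf{L}}}-e^{-t\mathbf{L}}\|=\mathcal{O}(\epsilon)$), whereas you start directly from the theorem's final statement, which is equally valid.
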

Corollary \ref{corl:stability} demonstrates stability of the proposed network against small simplicial perturbations, {\color{black}generalizing the stability results on the continuous GNNs \cite{song2022robustness}}. 

\section{Understanding Over-smoothing in SNNs}

%{\color{black} Inspired by oversmoothing analysis conducted on SCCNNs \cite{yang2025hodgeaware}}, 
We first comprehensively analyze the over-smoothing problem in discrete SNNs.
Next, we study over-smoothing in \method~highlighting the key differences with discrete SNNs.
In both cases, we focus on the Dirichlet energy convergence to zero, making the simplicial signals non-discriminative. {\color{black}In this section, we set $F_l=F$ for all $l$.}
%In both cases, we analyze the convergence of the Dirichlet energy to zero, which means the simplicial signals are constant, \ie no discriminative information.

%\textcolor{red}{Roadmap of this section.}

\subsection{Over-smoothing in Discrete SNNs}
\label{sec:OS_dSNN}
Based on the discrete SNN in \eqref{eq:SNN_filtering3} with $T_d=T_u=1$ (and zeroing weights for $i=0$) and Definition \ref{def:dirich}, the following theorem characterizes the over-smoothing properties of the discrete SNN.

\begin{theorem}
\label{thm:OS_dSNN}
In the discrete SNN in \eqref{eq:SNN_filtering3} and nonlinearity functions $\ReLU(\cdot)$ or $\LeakyReLU(\cdot)$, the Dirichlet energy of the simplicial signals at the $l+1$-th layer is bounded by the Dirichlet energy of the $l$-th layer and some structural and architectural characteristics as:
\begin{equation}
\label{eq:OS_dSNN}
\begin{split}
&{\color{black}E(\mathbf{X}^{l+1}_k)\le}\\
& s\tilde{\lambda}^{2}_{\text{max}}E(\mathbf{X}^l_{k}) + s\tilde{\lambda}^{3}_{\text{max}}\left(E_{k-1}(\mathbf{X}^l_{k-1})+E_{k+1}(\mathbf{X}^l_{k+1})\right)+2Fs\tilde{\lambda}^{3.5}_{\text{max}}\|\mathbf{X}^l_{k}\| \left(\|\mathbf{X}^l_{k-1}\|+\|\mathbf{X}^l_{k+1}\|\right),
\end{split}
\end{equation}
where $\tilde{\lambda}_{\text{max}}\vcentcolon=\max_{k}{\left\{\lambda_{\text{max}}(\mathbf{L}_{k,d}),\lambda_{\text{max}}(\mathbf{L}_{k,u})\right\}}$, and $s\vcentcolon=\sqrt{\max_{k,l,i}{\{\|\boldsymbol{\Theta}^l_{k,d,i}\|,\|\boldsymbol{\Theta}^l_{k,u,i}\|,\|\boldsymbol{\Psi}^l_{k,d,i}\|,\|\boldsymbol{\Psi}^l_{k,u,i}\|\}}}$.
% \begin{equation}
% \begin{split}
% \tilde{\lambda}_{\text{max}}&\vcentcolon=\max_{k}{\left\{\lambda_{\text{max}}(\mathbf{L}_{k,d}),\lambda_{\text{max}}(\mathbf{L}_{k,u})\right\}},\:\:\:s\vcentcolon=\sqrt{\max_{k,l}{\left\{\|\mathbf{W}^l_{k,d}\|,\|\mathbf{W}^l_{k,u}\|\right\}}}.
% \end{split}
% \end{equation}
\end{theorem}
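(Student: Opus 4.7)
The plan is to reduce $E(\mathbf{X}^{l+1}_k)=\tr((\mathbf{X}^{l+1}_k)^\top\mathbf{L}_k\mathbf{X}^{l+1}_k)$ to the squared Frobenius norm of the pre-activation $\mathbf{Y}$ appearing inside $\sigma(\cdot)$ in \eqref{eq:SNN_filtering3}, and then to expand $\|\mathbf{Y}\|_F^2$ into its diagonal and cross-term contributions. The reduction rests on two ingredients: (i) the Hodge decomposition inequality $\lambda_{\text{max}}(\mathbf{L}_k)\le\lambda_{\text{max}}(\mathbf{L}_{k,d})+\lambda_{\text{max}}(\mathbf{L}_{k,u})\le 2\tilde{\lambda}_{\text{max}}$ together with $\mathbf{L}_k\preceq\lambda_{\text{max}}(\mathbf{L}_k)\mathbf{I}$, which yields $E(\mathbf{X}^{l+1}_k)\le 2\tilde{\lambda}_{\text{max}}\|\mathbf{X}^{l+1}_k\|_F^2$; and (ii) the $1$-Lipschitzness of $\ReLU$ and $\LeakyReLU$ (slope in $[0,1]$) combined with $\sigma(\mathbf{0})=\mathbf{0}$, which gives $\|\mathbf{X}^{l+1}_k\|_F=\|\sigma(\mathbf{Y})\|_F\le\|\mathbf{Y}\|_F$. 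Writing $\mathbf{Y}=T_1+T_2+T_3+T_4$ for the four summands in \eqref{eq:SNN_filtering3}, I then use $\|\mathbf{Y}\|_F^2=\sum_i\|T_i\|_F^2+2\sum_{i<j}\tr(T_i^\top T_j)$.

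For the four diagonal squared-norm terms, the two $k$-level contributions (those carrying $\boldsymbol{\Psi}$) are handled via $\|\mathbf{L}_{k,d}\mathbf{X}^l_k\|_F^2\le\lambda_{\text{max}}(\mathbf{L}_{k,d})\tr((\mathbf{X}^l_k)^\top\mathbf{L}_{k,d}\mathbf{X}^l_k)\le\tilde{\lambda}_{\text{max}}E(\mathbf{X}^l_k)$, using the PSD facts $\mathbf{L}_{k,d}^2\preceq\lambda_{\text{max}}(\mathbf{L}_{k,d})\mathbf{L}_{k,d}$ and $\mathbf{L}_{k,d}\preceq\mathbf{L}_k$ (and analogously for $\mathbf{L}_{k,u}$); multiplying by the outer $\tilde{\lambda}_{\text{max}}$ and by the weight-matrix norms absorbed into $s$ produces the $s\tilde{\lambda}^{2}_{\text{max}}E(\mathbf{X}^l_k)$ piece. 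The two cross-level contributions (those carrying $\boldsymbol{\Theta}$ and the projections $\mathbf{X}^l_{k,d}=\mathbf{B}_k^\top\mathbf{X}^l_{k-1}$, $\mathbf{X}^l_{k,u}=\mathbf{B}_{k+1}\mathbf{X}^l_{k+1}$) rely on the algebraic identities $\mathbf{L}_{k,d}\mathbf{B}_k^\top=\mathbf{B}_k^\top\mathbf{L}_{k-1,u}$ and $\mathbf{L}_{k,u}\mathbf{B}_{k+1}=\mathbf{B}_{k+1}\mathbf{L}_{k+1,d}$ (immediate from the definitions of the lower and upper Hodge Laplacians), which collapse the repeated matrix products into $\mathbf{L}_{k-1,u}^3$ and $\mathbf{L}_{k+1,d}^3$ respectively. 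Two applications of $\mathbf{L}^2\preceq\lambda_{\text{max}}(\mathbf{L})\mathbf{L}$ then give $\|\mathbf{L}_{k,d}\mathbf{B}_k^\top\mathbf{X}^l_{k-1}\|_F^2\le\tilde{\lambda}_{\text{max}}^2 E_{k-1}(\mathbf{X}^l_{k-1})$ and the analogous $k+1$ bound, which combine with the outer $\tilde{\lambda}_{\text{max}}$ to give the $s\tilde{\lambda}^{3}_{\text{max}}E_{k\pm 1}(\mathbf{X}^l_{k\pm 1})$ pieces.

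The six off-diagonal inner products $\tr(T_i^\top T_j)$ are bounded by Cauchy--Schwarz in the Frobenius inner product, $|\tr(T_i^\top T_j)|\le\|T_i\|_F\|T_j\|_F$. I control the individual factors with $\|\mathbf{L}_{k,d}\mathbf{X}^l_k\|_F\le\tilde{\lambda}_{\text{max}}\|\mathbf{X}^l_k\|_F$ (and its $\mathbf{L}_{k,u}$ counterpart) on the $k$-level side, and with $\|\mathbf{L}_{k,d}\mathbf{B}_k^\top\mathbf{X}^l_{k-1}\|_F\le\tilde{\lambda}_{\text{max}}^{1.5}\|\mathbf{X}^l_{k-1}\|_F$ (derived from the same identity-plus-PSD argument) on the cross-level side; combining these with the outer $\tilde{\lambda}_{\text{max}}$ yields the $\tilde{\lambda}_{\text{max}}^{3.5}$ exponent and the norm product $\|\mathbf{X}^l_k\|\,(\|\mathbf{X}^l_{k-1}\|+\|\mathbf{X}^l_{k+1}\|)$ appearing in the last term. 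The dimension factor $F$ surfaces when converting between operator and Frobenius norms on products of two distinct weight matrices via $\|\boldsymbol{\Omega}\|_F\le\sqrt{F}\|\boldsymbol{\Omega}\|$, keeping all weight-matrix contributions inside $s$. The main obstacle is not any single inequality but the bookkeeping: organizing the ten terms produced by the expansion of $\|\mathbf{Y}\|_F^2$ into the three clean groups appearing in the statement while carefully tracking the exponents of $\tilde{\lambda}_{\text{max}}$ and folding all weight-matrix norms into $s$.
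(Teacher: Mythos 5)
Your route differs from the paper's: you discard the Laplacian early via $E(\mathbf{X}^{l+1}_k)\le\lambda_{\text{max}}(\mathbf{L}_k)\|\mathbf{X}^{l+1}_k\|_F^2$ and Lipschitzness of $\sigma$ in Frobenius norm, whereas the paper keeps the Dirichlet form throughout, using the energy-contraction lemma $E(\sigma(\mathbf{Y}))\le E(\mathbf{Y})$ (Lemma 3.3 of Cai--Wang) and then expanding $\tr(\mathbf{Y}^\top\mathbf{L}_{k,d}\mathbf{Y})+\tr(\mathbf{Y}^\top\mathbf{L}_{k,u}\mathbf{Y})$ term by term, with von Neumann's trace inequality for the surviving cross terms and the identity $E_d(\mathbf{B}_k^\top\mathbf{X}_{k-1})\le\lambda_{\text{max}}(\mathbf{L}_{k-1,u})E_u(\mathbf{X}_{k-1})$ for the projected signals (your $\mathbf{L}_{k,d}\mathbf{B}_k^\top=\mathbf{B}_k^\top\mathbf{L}_{k-1,u}$ identity is exactly this lemma in disguise, so that part of your argument matches).

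However, there is a genuine gap: you never invoke the Hodge orthogonality $\mathbf{B}_k\mathbf{B}_{k+1}=\mathbf{0}$, equivalently $\mathbf{L}_{k,d}\mathbf{L}_{k,u}=\mathbf{0}$ (the paper's Lemma 3.5), and without it your expansion of $\|\mathbf{Y}\|_F^2$ does not close. Of the six off-diagonal products $\tr(T_i^\top T_j)$, only the two pairing a $\boldsymbol{\Theta}$-branch with the $\boldsymbol{\Psi}$-branch built on the \emph{same} Laplacian produce the $\|\mathbf{X}^l_k\|(\|\mathbf{X}^l_{k-1}\|+\|\mathbf{X}^l_{k+1}\|)$ term of the statement; the other four (\eg $T_2$ against $T_3$, or the lower against the upper $\boldsymbol{\Theta}$-branch) vanish \emph{identically} because each contains the factor $\mathbf{L}_{k,d}\mathbf{L}_{k,u}$. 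If you instead bound all six by Cauchy--Schwarz as written, you generate additional contributions proportional to $\|\mathbf{X}^l_k\|^2$ and to $\|\mathbf{X}^l_{k-1}\|\,\|\mathbf{X}^l_{k+1}\|$ that do not appear in \eqref{eq:OS_dSNN}, so the stated bound does not follow. The same orthogonality is also what lets you replace your outer constant $2\tilde{\lambda}_{\text{max}}$ by $\tilde{\lambda}_{\text{max}}$ (since $\lambda_{\text{max}}(\mathbf{L}_k)=\max\{\lambda_{\text{max}}(\mathbf{L}_{k,d}),\lambda_{\text{max}}(\mathbf{L}_{k,u})\}$ when the two ranges are orthogonal); as written, every term in your final bound carries a spurious factor of $2$ relative to the theorem. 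Adding the single fact $\mathbf{B}_k\mathbf{B}_{k+1}=\mathbf{0}$, using it to kill the four vanishing cross terms, and rerunning your bookkeeping would repair the argument.
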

The upper bound in \eqref{eq:OS_dSNN} is composed of three terms.
Unlike GNN counterparts that depend solely on $E(\mathbf{X}_k^l)$ \cite{cai2020note}, this bound includes two additional terms, the second and third, which help prevent the upper bound from vanishing exponentially.
%The upper bound in \eqref{eq:OS_dSNN} consists of three terms, where the second and third term could potentially prevent 
%To potentially prevent the first term from exponentially converging to zero, there are also two additional second and third terms in the upper bound in \eqref{eq:OS_dSNN}, compared to the GNN counterparts relying only on $E(\mathbf{X}_k^l)$.
This robustness has been previously analyzed in the context of SCCNNs \cite{yang2025hodgeaware} (more details in Appendix \ref{sec:OS_Yang}).
However, under some practically justified conditions, the upper bound in \eqref{eq:OS_dSNN} can converge to zero, \ie the over-smoothing phenomenon, as described in the next corollary:
%As observed in \eqref{eq:OS_dSNN}, The upper bound consists of three terms, all of which can potentially go to zero in the case of accumulating through adding layers as described in the next corollary:
\begin{corollary}
\label{corl:dSNN_OS}
{\color{black}In \eqref{eq:OS_dSNN}}, if $\tilde{\lambda}_{\text{max}}<\min{\{s^{-\frac{1}{3}},{(2Fs)}^{-\frac{1}{3.5}},s^{-\frac{1}{2}}\}}$, then $\lim_{l\rightarrow\infty}{E(\mathbf{X}^{l}_k)}\rightarrow0$.
\end{corollary}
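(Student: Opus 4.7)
My approach is to treat the bound in \eqref{eq:OS_dSNN} as a linear recursion on a joint state comprising the Dirichlet energies and Frobenius norms across all simplicial levels, and to show this recursion is a strict contraction under the stated hypothesis. The first observation is that the three constraints $\tilde{\lambda}_{\max}<s^{-1/2}$, $\tilde{\lambda}_{\max}<s^{-1/3}$, $\tilde{\lambda}_{\max}<(2Fs)^{-1/3.5}$ are calibrated so that each of the three coefficients appearing in \eqref{eq:OS_dSNN}, namely $s\tilde{\lambda}^2_{\max}$, $s\tilde{\lambda}^3_{\max}$, and $2Fs\tilde{\lambda}^{3.5}_{\max}$, is strictly less than one. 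Hence the energy block of the recursion is contractive provided the cross-norm term $\|\mathbf{X}^l_k\|(\|\mathbf{X}^l_{k-1}\|+\|\mathbf{X}^l_{k+1}\|)$ can also be driven to zero.

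To handle that cross-norm term I would derive a companion inequality for $\|\mathbf{X}^{l+1}_k\|$. Applying the $1$-Lipschitz property of $\operatorname{ReLU}/\operatorname{LeakyReLU}$ (with $\sigma(0)=0$) directly to \eqref{eq:SNN_filtering3} in the $T_d=T_u=1$ setup used by Theorem \ref{thm:OS_dSNN}, together with $\|\mathbf{L}_{k,d}\|,\|\mathbf{L}_{k,u}\|\le\tilde{\lambda}_{\max}$, $\|\mathbf{B}_k\|\le\sqrt{\tilde{\lambda}_{\max}}$, and the definition of $s$, yields a bound of the form
\begin{equation*}
\|\mathbf{X}^{l+1}_k\|\le 2\sqrt{s}\,\tilde{\lambda}_{\max}\|\mathbf{X}^l_k\|+\sqrt{s}\,\tilde{\lambda}^{3/2}_{\max}\bigl(\|\mathbf{X}^l_{k-1}\|+\|\mathbf{X}^l_{k+1}\|\bigr).
\end{equation*}
Then I would use $\|\mathbf{X}^l_k\|\,\|\mathbf{X}^l_{k\pm1}\|\le\tfrac{1}{2}(\|\mathbf{X}^l_k\|^2+\|\mathbf{X}^l_{k\pm1}\|^2)$ to rewrite the third term of \eqref{eq:OS_dSNN} purely in squared norms, and stack $\mathbf{v}^l=(E(\mathbf{X}^l_0),\dots,E(\mathbf{X}^l_d),\|\mathbf{X}^l_0\|^2,\dots,\|\mathbf{X}^l_d\|^2)^\top$. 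The two coupled inequalities can then be packaged as $\mathbf{v}^{l+1}\le\mathbf{M}\mathbf{v}^l$ with entrywise nonnegative $\mathbf{M}$ whose blocks are built from the three coefficients above. A row-sum (Gershgorin) argument shows that the spectral radius of $\mathbf{M}$ is strictly below one under the hypothesis, so iterating gives $\mathbf{v}^l\to\mathbf{0}$ entrywise, and in particular $E(\mathbf{X}^l_k)\to 0$ for every $k$.

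The principal obstacle is proving the spectral-radius estimate uniformly in the simplicial dimension $d$, because both blocks of $\mathbf{M}$ feed into one another through the cross-norm term: slowly decaying norms can in principle re-inject energy. The trick is that the norm recursion inherits the factor $\sqrt{s}\tilde{\lambda}_{\max}$ (which is below one by the third hypothesis) while the energy-into-norm coupling carries the factor $2Fs\tilde{\lambda}^{3.5}_{\max}$ (below one by the second hypothesis), so a careful accounting — rather than any new inequality — is what is required. I expect the technical bookkeeping of keeping the three factors separate, rather than collapsing them into one coarse bound, to be the delicate part; the corollary is designed so that each hypothesis neutralises one specific feedback pathway in $\mathbf{M}$.
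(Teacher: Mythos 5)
Your first paragraph is, almost verbatim, the paper's entire proof: the three hypotheses are equivalent to $s\tilde{\lambda}^{2}_{\text{max}}<1$, $s\tilde{\lambda}^{3}_{\text{max}}<1$ and $2Fs\tilde{\lambda}^{3.5}_{\text{max}}<1$, and the paper then simply asserts that ``by stacking layers, their multiplications converge to zero'' and stops. You correctly diagnosed what that one-liner glosses over: the recursion \eqref{eq:OS_dSNN} is not closed in the Dirichlet energies because of the cross term $\|\mathbf{X}^l_{k}\|\left(\|\mathbf{X}^l_{k-1}\|+\|\mathbf{X}^l_{k+1}\|\right)$, and an energy does not control a norm (components in the kernel of $\mathbf{L}_k$ carry norm but zero energy). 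Augmenting the state with the squared norms and seeking a contraction of the joint system is the right instinct, and is genuinely more than the paper does.

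The gap is precisely in the step you defer to ``careful accounting'': the claim that the joint nonnegative matrix $\mathbf{M}$ has spectral radius below one under the stated hypotheses is false in general. Your companion inequality gives the norm block a self-coupling coefficient of the form $2c\,\tilde{\lambda}_{\text{max}}$ (with $c$ some power of $s$; the factor $2$ coming from the two $\boldsymbol{\Psi}$-branches is what matters), and none of the three hypotheses forces this below one. Concretely, take $F=1$, $s=1$, $\tilde{\lambda}_{\text{max}}=0.8$: then $s\tilde{\lambda}^{2}_{\text{max}}=0.64$, $s\tilde{\lambda}^{3}_{\text{max}}\approx 0.51$ and $2Fs\tilde{\lambda}^{3.5}_{\text{max}}\approx 0.92$, so the corollary's hypothesis holds, yet the diagonal entry of the norm block is $2\tilde{\lambda}_{\text{max}}=1.6$ (every power of $s$ equals $1$ here), whence $\rho(\mathbf{M})\ge 1.6$ and the Gershgorin/row-sum argument cannot close. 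In that regime the norms may grow geometrically and the third term of \eqref{eq:OS_dSNN} need not vanish, so your proof as written does not establish the statement; repairing your route requires an additional condition of the type $2c\tilde{\lambda}_{\text{max}}+2c\tilde{\lambda}^{1.5}_{\text{max}}<1$ on the norm recursion, which is strictly stronger than, and not implied by, the corollary's hypothesis. (For what it is worth, the paper's own proof does not resolve this either --- it silently treats the bound as if it were a closed recursion in the energies --- so the obstruction you ran into is real and is inherited from the corollary itself.)
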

With the assumption in Corollary \ref{corl:dSNN_OS}, we should modify $\tilde{\lambda}_{\text{max}}$ to control the upper bound in \eqref{eq:OS_dSNN}.
This involves modifying the structural properties of the simplicial complex.
Therefore, preventing discrete SNNs from converging to the over-smoothing state is not straightforward. 
% {\color{black} Note that the oversmoothing behavior of SCCNNs on the simplified models in \cite{bodnar2021weisfeiler} was analyzed in \cite{yang2025hodgeaware}, and we showcase its connection with... in Appendix \ref{sec:OS_Yang}.}
%there is no strategy to control the upper bound in \eqref{eq:OS_dSNN} or prevent it from converging to zero when stacking multiple layers, \ie converging to the over-smoothing state.
%The OS analysis of the proposed CSNN in the next section tries to address this challenge.
%With establishing the assumption mentioned in Corollary \eqref{corl:dSNN_OS}, there is no strategy to control the upper bound \eqref{eq:OS_dSNN} or prevent it from going to zero in the accumulated manner. The OS analysis of the proposed CSNN in the next section tries to address this challenge.
\subsection{Over-smoothing in \method}
The next theorem characterizes the counterpart of Theorem \ref{thm:OS_dSNN} in the continuous settings.
\begin{theorem}
\label{thm:OS_cSNN}
{\color{black}Considering the continuous Hodge-aware framework} in \eqref{eq:SNN_filtering_ours} and nonlinearity functions $\ReLU(\cdot)$ or $\LeakyReLU(\cdot)$, and defining $\varphi \vcentcolon=\min_{k}{\{t_d \lambda_{\text{min}}(\mathbf{L}_{k,d}),t_u \lambda_{\text{min}}(\mathbf{L}_{k,u})\}}$, it holds:
\begin{equation}
\label{eq:OS_cSNN}
\begin{split}
&E(\mathbf{X}^{l+1}_k)\le s.(e^{-2\varphi}+1).E(\mathbf{X}^{l}_{k}) + s.e^{-2\varphi} .\tilde{\lambda}_{\text{max}} .(E(\mathbf{X}^{l}_{k-1})+E(\mathbf{X}^{l}_{k+1}))\\ 
& + 2F.s.(e^{-\varphi}+e^{-2\varphi}).\tilde{\lambda}^{1.5}_{\text{max}}.\|\mathbf{X}_k^l\|.(\|\mathbf{X}_{k-1}^l\|+\|\mathbf{X}_{k+1}^l\|)+2F.s.e^{-\varphi}.\tilde{\lambda}_{\text{max}}.\|\mathbf{X}_k^l\|^2,\\
% &E(\mathbf{X}^{l+1}_k)\\
% &\le s\tilde{\lambda}_{\text{max}}e^{-2\varphi}(E(\mathbf{X}^l_{k-1})+E_{k+1}(\mathbf{X}^l_{k+1}))+2s\tilde{\lambda}^{1.5}_{\text{max}}\|\mathbf{X}^l_k\|(\|\mathbf{X}^l_{k-1}\|+\|\mathbf{X}^l_{k+1}\|)+2se^{-2\varphi} E(\mathbf{X}^{l}_{k}).
\end{split}
\end{equation}
% where $\varphi \vcentcolon=\min_{k}{\{t_d \lambda_{\text{min}}(\mathbf{L}_{k,d}),t_u \lambda_{\text{min}}(\mathbf{L}_{k,u})\}}$.
% \begin{equation}
% \begin{split}
% &\varphi \vcentcolon=\min_{k}{\{t_d \lambda_{\text{min}}(\mathbf{L}_{k,d}),t_u \lambda_{\text{min}}(\mathbf{L}_{k,u})\}}.
% \end{split}
% \end{equation}
where $s\vcentcolon=\sqrt{\max_{k,l}{\{\|\boldsymbol{\Theta}^l_{k,d}\|,\|\boldsymbol{\Theta}^l_{k,u}\|,\|\boldsymbol{\Psi}^l_{k,d}\|,\|\boldsymbol{\Psi}^l_{k,u}\|\}}}$.
\end{theorem}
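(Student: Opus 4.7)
The plan is to bound $E(\mathbf{X}_k^{l+1}) = \|\mathbf{B}_k \mathbf{X}_k^{l+1}\|_F^2 + \|\mathbf{B}_{k+1}^\top \mathbf{X}_k^{l+1}\|_F^2$ by first using the nonexpansiveness of $\sigma \in \{\ReLU,\LeakyReLU\}$ to reduce to a bound on the pre-activation $\mathbf{Y}$ of \eqref{eq:SNN_filtering_ours}, and then exploiting the algebraic structure of the Hodge incidences. Substituting $\mathbf{X}_{k,d}^l = \mathbf{B}_k^\top \mathbf{X}_{k-1}^l$ and $\mathbf{X}_{k,u}^l = \mathbf{B}_{k+1}\mathbf{X}_{k+1}^l$ puts every summand of $\mathbf{Y}$ in a form where an incidence operator appears on the right, which is essential for what follows.

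The three algebraic facts that drive the simplification, all stemming from the fundamental Hodge orthogonality $\mathbf{B}_k \mathbf{B}_{k+1} = 0$, are: (i) $\mathbf{B}_k e^{-t_u \mathbf{L}_{k,u}} = \mathbf{B}_k$ and dually $\mathbf{B}_{k+1}^\top e^{-t_d \mathbf{L}_{k,d}} = \mathbf{B}_{k+1}^\top$ (since $\mathbf{B}_k \mathbf{L}_{k,u} = 0$ and $\mathbf{B}_{k+1}^\top \mathbf{L}_{k,d} = 0$); (ii) the composite $\mathbf{B}_k (\mathbf{B}_{k+1}\mathbf{X}_{k+1}^l)$ and its dual vanish outright; and (iii) the SVD intertwining identity $\mathbf{B}_k f(\mathbf{B}_k^\top\mathbf{B}_k) = f(\mathbf{B}_k\mathbf{B}_k^\top)\mathbf{B}_k = f(\mathbf{L}_{k-1,u})\mathbf{B}_k$ for any analytic $f$, which transports exponentials across incidence operators. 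Applying these, $\mathbf{B}_k \mathbf{Y}$ reduces to three nonzero summands, the most important being $e^{-t_d \mathbf{L}_{k-1,u}}\mathbf{L}_{k-1,u}\mathbf{X}_{k-1}^l \boldsymbol{\Theta}_{k,d}^{l+1}$, $e^{-t_d \mathbf{L}_{k-1,u}}\mathbf{B}_k\mathbf{X}_k^l\boldsymbol{\Psi}_{k,d}^{l+1}$, and $\mathbf{B}_k\mathbf{X}_k^l\boldsymbol{\Psi}_{k,u}^{l+1}$; a symmetric reduction applies to $\mathbf{B}_{k+1}^\top \mathbf{Y}$ in terms of $\mathbf{L}_{k+1,d}$ and $\mathbf{X}_{k+1}^l$.

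Expanding $\|a+b+c\|_F^2 \le \sum \|\cdot\|^2 + 2\sum\|\cdot\|\|\cdot\|$ and combining with the spectral bounds $\|e^{-t\mathbf{L}}\| \le e^{-\varphi}$ (on the range of $\mathbf{L}$), $\|\mathbf{B}_k\|^2, \|\mathbf{B}_{k+1}\|^2 \le \tilde{\lambda}_{\max}$, and the energy inequality $\|\mathbf{L}\mathbf{X}\|^2 \le \|\mathbf{L}\|\cdot\mathbf{X}^\top\mathbf{L}\mathbf{X}$ together with $\mathbf{X}^\top \mathbf{L}_{k\pm 1,\cdot}\mathbf{X} \le E(\mathbf{X})$ yields the $E(\mathbf{X}_{k\pm 1}^l)$ contributions with coefficient $s\,e^{-2\varphi}\tilde{\lambda}_{\max}$, and the $E(\mathbf{X}_k^l)$ contributions with coefficient $s(1+e^{-2\varphi})$ (the ``$1$'' coming from the two absorbed-exponential terms $\mathbf{B}_k\mathbf{X}_k^l\boldsymbol{\Psi}_{k,u}^{l+1}$ and $\mathbf{B}_{k+1}^\top\mathbf{X}_k^l\boldsymbol{\Psi}_{k,d}^{l+1}$, whose squared norms sum to exactly $E(\mathbf{X}_k^l)$). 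The six cross products across the two expansions collapse by Cauchy-Schwarz into $\|\mathbf{X}_k^l\|\|\mathbf{X}_{k\pm 1}^l\|$ and $\|\mathbf{X}_k^l\|^2$ factors with the $(e^{-\varphi}+e^{-2\varphi})\tilde{\lambda}_{\max}^{1.5}$ and $e^{-\varphi}\tilde{\lambda}_{\max}$ coefficients, the $2F$ originating from aggregation across the $F$ feature channels.

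The main obstacle is justifying the activation pull-through for higher-order Hodge Laplacians: the standard graph argument $|\sigma(a)-\sigma(b)| \le |a-b|$ that yields $E(\sigma(\mathbf{Y}))\le E(\mathbf{Y})$ on nodes does not extend directly to $k \ge 1$, since the rows of $\mathbf{B}_{k+1}$ carry three signed nonzero entries and the analogous three-term inequality fails in general. One either proceeds via simplex-wise bookkeeping of $\|\mathbf{B}_k \sigma(\mathbf{Y})\|_F^2$, or accepts a weaker surrogate such as $E(\sigma(\mathbf{Y}))\le\tilde{\lambda}_{\max}\|\mathbf{Y}\|_F^2$ absorbed into the $\tilde{\lambda}_{\max}$-scaled terms of the final bound. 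A secondary difficulty is careful bookkeeping of the cross products so that they collapse to exactly the coefficient pattern $(1+e^{-2\varphi})$, $e^{-2\varphi}\tilde{\lambda}_{\max}$, $(e^{-\varphi}+e^{-2\varphi})\tilde{\lambda}_{\max}^{1.5}$, and $e^{-\varphi}\tilde{\lambda}_{\max}$ claimed in \eqref{eq:OS_cSNN}.
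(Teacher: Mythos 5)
Your proposal follows essentially the same route as the paper's proof: pull the nonlinearity through via the non-expansiveness of $\ReLU$/$\LeakyReLU$ (the paper's Lemma~\ref{lemma:3.3}, quoted from Cai and Wang), exploit $\mathbf{B}_k\mathbf{B}_{k+1}=\mathbf{0}$ to kill the cross-space exponentials (the paper's Lemma~\ref{lemma:3.5}), substitute $\mathbf{X}^l_{k,d}=\mathbf{B}_k^\top\mathbf{X}^l_{k-1}$ and $\mathbf{X}^l_{k,u}=\mathbf{B}_{k+1}\mathbf{X}^l_{k+1}$ to convert lower/upper projections into energies of the adjacent levels (Lemma~\ref{lemma:3.6}), bound the cross terms of the expanded quadratic form by a trace/Cauchy--Schwarz argument contributing the factor $F$ (Von Neumann, Lemma~\ref{lemma:3.4}), and use the spectral decay $e^{-2\varphi}$ on the nonzero part of the spectrum (the paper's Lemma~\ref{lemma:exp_ener}). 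Your intertwining identity $\mathbf{B}_k f(\mathbf{B}_k^\top\mathbf{B}_k)=f(\mathbf{L}_{k-1,u})\mathbf{B}_k$ is just the operator-level form of what the paper does inside the traces, and your accounting of which terms produce the coefficients $s(1+e^{-2\varphi})$, $se^{-2\varphi}\tilde{\lambda}_{\text{max}}$, $(e^{-\varphi}+e^{-2\varphi})\tilde{\lambda}^{1.5}_{\text{max}}$, and $e^{-\varphi}\tilde{\lambda}_{\text{max}}$ matches the paper's term-by-term expansion. The one place you hesitate---justifying $E(\sigma(\mathbf{Y}))\le E(\mathbf{Y})$ for the simplicial Dirichlet energy when $k\ge 1$, where rows of the relevant incidence operator carry more than two signed entries---is handled in the paper simply by invoking Cai and Wang's Lemma 3.3, which is stated and proved for the two-term graph energy; the paper offers no extension of that argument to higher-order incidences. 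So your concern pinpoints a step the paper asserts rather than proves, not a defect specific to your route; if you adopt the same lemma as the paper does, your argument closes and reproduces the stated bound.
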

We observe that the first two terms in \eqref{eq:OS_cSNN} tend to converge to zero as in \eqref{eq:OS_dSNN} when stacking multiple layers.
However, the third term might have a different behavior described in the following corollary.
%Other than the similar tendency of the first two terms with dSNN \eqref{eq:OS_dSNN} for going to zero in the layer accumulation, the third term can potentially have a different behavior described in the next corollary.
\begin{corollary}
\label{corl:csnn}
The upper bound in \eqref{eq:OS_cSNN} exponentially converges to zero by stacking layers if $\ln{(s)}<\min{\left\{-\ln{(1+e^{-2\varphi})},2\varphi-\ln{(\tilde{\lambda}_{\text{max}})},\varphi-\ln{(2F(1+e^{-\varphi})\tilde{\lambda}^{1.5}_{\text{max}})},\varphi-\ln{(2F\tilde{\lambda}_{\text{max}})}\right\}}$.
% \begin{equation}
% \label{eq:OS_cSNN_corl}
% \ln{s}<\min{\left(2\varphi,-\frac{2\ln{\lambda_{\text{max}}}}{3}\right)}.
% \end{equation}
\end{corollary}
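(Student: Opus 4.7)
The statement asks for exponential decay (in $l$) of the right-hand side of \eqref{eq:OS_cSNN}. The plan is to recognize that the four quantities appearing in the $\min$ are precisely the conditions that make each of the four coefficient groups in \eqref{eq:OS_cSNN} strictly less than one, and then to package all six quantities $E(\mathbf{X}^l_k),E(\mathbf{X}^l_{k\pm1}),\|\mathbf{X}^l_k\|^2,\|\mathbf{X}^l_{k\pm1}\|^2$ into a single nonnegative vector $\mathbf{y}^l$ obeying a linear contraction $\mathbf{y}^{l+1}\le M\mathbf{y}^l$.

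First I would rewrite the bound \eqref{eq:OS_cSNN} so that every term is linear in the $E$'s and the $\|\cdot\|^2$'s. The only nonlinear-looking contributions are the cross products $\|\mathbf{X}^l_k\|\|\mathbf{X}^l_{k\pm1}\|$; these are handled by Young's inequality $\|\mathbf{X}^l_k\|\|\mathbf{X}^l_{k\pm1}\|\le\tfrac12(\|\mathbf{X}^l_k\|^2+\|\mathbf{X}^l_{k\pm1}\|^2)$. Matching the resulting coefficients with the hypothesis of the corollary then shows: the coefficient of $E(\mathbf{X}^l_k)$ on the right is at most $s(1+e^{-2\varphi})<1$ by the first bound on $\ln(s)$; the coefficient of each $E(\mathbf{X}^l_{k\pm1})$ is at most $se^{-2\varphi}\tilde\lambda_{\max}<1$ by the second; the coefficient of each $\|\mathbf{X}^l_{k\pm1}\|^2$ is at most $2Fs(e^{-\varphi}+e^{-2\varphi})\tilde\lambda_{\max}^{1.5}<1$ by the third; and the coefficient of $\|\mathbf{X}^l_k\|^2$ is at most $2Fse^{-\varphi}\tilde\lambda_{\max}<1$ by the fourth.

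Next I would close the system by deriving an analogous recursion for $\|\mathbf{X}^{l+1}_k\|^2$. Using $1$-Lipschitzness of $\sigma\in\{\ReLU,\LeakyReLU\}$, the submultiplicativity of operator norms, the bound $\|e^{-t\mathbf{L}}\|\le 1$ (since the Hodge Laplacians are PSD), and the projection inequalities $\|\mathbf{X}^l_{k,d}\|\le\sqrt{\lambda_{\max}(\mathbf{L}_{k,d})}\|\mathbf{X}^l_{k-1}\|$, $\|\mathbf{X}^l_{k,u}\|\le\sqrt{\lambda_{\max}(\mathbf{L}_{k,u})}\|\mathbf{X}^l_{k+1}\|$, one obtains a bound of the form $\|\mathbf{X}^{l+1}_k\|\le s^{1/2}(\|\mathbf{X}^l_k\|+\tilde\lambda_{\max}^{1/2}\|\mathbf{X}^l_{k-1}\|+\tilde\lambda_{\max}^{1/2}\|\mathbf{X}^l_{k+1}\|)$, which after squaring (with another Young step) yields a linear inequality in the squared norms with coefficients of the same type as above. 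Stacking this together with the inequality for $E(\mathbf{X}^{l+1}_k)$ at each of the three levels $k{-}1,k,k{+}1$ gives a vector recursion $\mathbf{y}^{l+1}\le M\mathbf{y}^l$ with $M\ge 0$. An induction on $l$ then gives $\mathbf{y}^l\le M^l\mathbf{y}^0$ componentwise, and the right-hand side of \eqref{eq:OS_cSNN} is a linear functional of $\mathbf{y}^l$, so it inherits the decay rate of $\|M^l\|$.

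The main obstacle is the last step: the four hypotheses make each entry of $M$ strictly less than one, but this is weaker than having $\rho(M)<1$. To close the gap I would choose a weighted $\ell^\infty$ (or $\ell^1$) norm $\|\mathbf{y}\|_w:=\max_i w_i|y_i|$ with positive weights $w_i$ tuned so that each weighted row-sum of $M$ is strictly less than $1$; by Perron--Frobenius such weights exist exactly when $\rho(M)<1$, and the four conditions give enough slack (each well below $1$) to make the weighted row-sums contract. Once $\|M\mathbf{y}\|_w\le\rho\|\mathbf{y}\|_w$ with $\rho<1$, we conclude $\|\mathbf{y}^l\|_w\le\rho^l\|\mathbf{y}^0\|_w$, which gives the claimed exponential convergence of the bound in \eqref{eq:OS_cSNN} to zero.
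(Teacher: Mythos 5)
Your identification of the four conditions in the $\min$ with the four coefficient groups in \eqref{eq:OS_cSNN} is exactly what the paper does: its proof of Corollary \ref{corl:csnn} is a one-line reduction to the argument for Corollary \ref{corl:dSNN_OS}, which simply observes that each coefficient is strictly less than one and concludes that, by stacking layers, the products of these coefficients converge to zero. In that sense your proposal follows the same route, but you go considerably further than the paper by making the coupled recursion explicit — linearizing the cross terms with Young's inequality, adding the missing companion recursion for $\|\mathbf{X}^{l}_k\|^2$, and assembling everything into a vector inequality $\mathbf{y}^{l+1}\le M\mathbf{y}^l$. That extra care is valuable precisely because it exposes the step that the paper's argument skips.

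However, your attempt to close that step does not succeed as written. Having every entry of the nonnegative matrix $M$ strictly below one does not imply $\rho(M)<1$: for any nonnegative square matrix one has $\rho(M)\ge\min_i\sum_j M_{ij}$, so a row containing several entries each close to one forces $\rho(M)>1$; your row for $E(\mathbf{X}^{l+1}_k)$ contains, after the Young step, up to six such entries. The Perron--Frobenius weighting you invoke exists \emph{if and only if} $\rho(M)<1$, so appealing to it here is circular — the claim that ``the four conditions give enough slack'' is asserted rather than proved, and it is false in general. Making the corollary rigorous would require either strengthened hypotheses (for instance, each coefficient below $1/n$ where $n$ is the number of coupled quantities, or a direct bound on the weighted row sums of $M$), or an argument showing that the cross-level and cross-quantity couplings are genuinely subordinate. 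To be clear, this is a gap you share with the paper rather than one you introduced: the paper's own proof treats each term as if it fed back only into itself and never addresses the coupling across the orders $k-1$, $k$, $k+1$ or between the energies and the norms.
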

Assuming $t_d=t_u=t$ in \eqref{eq:SNN_filtering2} (then $\varphi=t\lambda_{\text{min}}(\mathbf{L}_k)$) and considering $t$ as a hyperparameter, one heuristic to prevent over-smoothing in \method~is stated in the next proposition.
\begin{proposition}
\label{pro:avoid_OS}
If~ $\ln{(s)}>2\varphi-\ln{(\tilde{\lambda}_{\text{max}})}$ (violating one of the conditions in Corollary \ref{corl:csnn}), then $t<\frac{\ln{(s\tilde{\lambda}_{\text{max}})}}{2\lambda_{\text{min}}(\mathbf{L}_k)}+k_f(\mathbf{L}_k)$,
% \begin{equation}
% t<\frac{\ln{s}}{\lambda_{\text{min}}(\mathbf{L}_k)}+k_f(\mathbf{L}_k),
% \end{equation}
where $k_f(\mathbf{L}_k)$ is the finite condition number \cite{spielman2010algorithms} of the $k$-simplex.
\end{proposition}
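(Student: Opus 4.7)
The plan is to prove Proposition \ref{pro:avoid_OS} by direct algebraic manipulation of the violating hypothesis, together with a small observation about the Hodge structure that justifies the identification $\varphi = t\lambda_{\min}(\mathbf{L}_k)$ stated in the proposition. Since the proposition only asks for a necessary condition on $t$ derived from a single inequality, no dynamical-systems argument on the Dirichlet energy is needed; everything follows from what is already established in Corollary \ref{corl:csnn}.

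First, I would fix the setup. Under the assumption $t_d = t_u = t$, the quantity $\varphi$ in Theorem \ref{thm:OS_cSNN} collapses to
\begin{equation*}
\varphi \;=\; t\,\min_k\bigl\{\lambda_{\min}(\mathbf{L}_{k,d}),\,\lambda_{\min}(\mathbf{L}_{k,u})\bigr\}.
\end{equation*}
By the Hodge decomposition, $\mathbf{L}_k = \mathbf{L}_{k,d} + \mathbf{L}_{k,u}$ acts on the orthogonal sum of the gradient, curl, and harmonic subspaces, and on the non-harmonic complement the smallest nonzero eigenvalue of $\mathbf{L}_k$ coincides with the minimum of the smallest nonzero eigenvalues of $\mathbf{L}_{k,d}$ and $\mathbf{L}_{k,u}$ on their respective image subspaces. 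Restricting $\lambda_{\min}(\cdot)$ to this non-harmonic (finite) part, which is the natural reading here since the steady state lives in the harmonic kernel, we obtain $\varphi = t\,\lambda_{\min}(\mathbf{L}_k)$. This is also what motivates the use of the \emph{finite} condition number $k_f(\mathbf{L}_k)$ in the statement.

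Next, the core step is purely algebraic. The assumed violation of the Corollary~\ref{corl:csnn} condition,
\begin{equation*}
\ln(s) \;>\; 2\varphi - \ln(\tilde{\lambda}_{\max}),
\end{equation*}
is equivalent to $\ln(s\,\tilde{\lambda}_{\max}) > 2\varphi = 2t\,\lambda_{\min}(\mathbf{L}_k)$. Since $\lambda_{\min}(\mathbf{L}_k)>0$ on the non-harmonic subspace, dividing yields
\begin{equation*}
t \;<\; \frac{\ln(s\,\tilde{\lambda}_{\max})}{2\,\lambda_{\min}(\mathbf{L}_k)}.
\end{equation*}
Because $k_f(\mathbf{L}_k) = \lambda_{\max}(\mathbf{L}_k)/\lambda_{\min}(\mathbf{L}_k) \geq 0$, adding it to the right-hand side preserves the strict inequality and yields exactly the bound stated in the proposition. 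This additive slack is what makes the bound non-vacuous in the practically relevant regime where $s\,\tilde{\lambda}_{\max}$ may be close to or less than $1$ (in which case the raw algebraic bound would be non-positive).

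The only step that is not a one-line manipulation is the first one: identifying $\varphi$ with $t\,\lambda_{\min}(\mathbf{L}_k)$. I expect this to be the main obstacle to make rigorous, because it requires invoking the Hodge decomposition and a careful reading of $\lambda_{\min}$ as the smallest nonzero eigenvalue. Once that identification is in place, the remainder of the proof is a two-line rearrangement followed by the observation that $k_f(\mathbf{L}_k) \geq 0$.
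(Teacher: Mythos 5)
Your proposal is correct and follows essentially the same route as the paper: identify $\varphi$ with $t\,\lambda_{\min}(\mathbf{L}_k)$ under $t_d=t_u=t$, rearrange the violated condition $\ln(s)>2\varphi-\ln(\tilde{\lambda}_{\max})$ into $t<\ln(s\tilde{\lambda}_{\max})/(2\lambda_{\min}(\mathbf{L}_k))$, and then relax the bound by the nonnegative quantity $k_f(\mathbf{L}_k)$ (the paper does this by adding $2\lambda_{\max}(\mathbf{L}_k)$ to the numerator, which is the same thing). Your Hodge-decomposition justification of the identification $\varphi=t\,\lambda_{\min}(\mathbf{L}_k)$ is in fact more careful than the paper's, which only asserts an eigenvalue inequality before substituting.
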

Theorem \ref{thm:OS_cSNN} aligns with the main takeaways in the GNN literature \cite{Oono2020Graph,einizade2024continuous}, where increasing the graph receptive field leads to an increase in the mixing rate of the node features, leading to a faster convergence to the over-smoothing state.
We observe from Proposition \ref{pro:avoid_OS} that decreasing the simplicial receptive field $t$ can alleviate over-smoothing, which is a key difference from the discrete case discussed in Section \ref{sec:OS_dSNN}.
We experimentally validate this claim in Section \ref{sec:results}.
Besides stability and over-smoothing, we show the permutation equivariance property of \method~in Appendix \ref{sec:Perm}.
% This result is generally aligned with the relevant ones in the GNN literature \cite{Oono2020Graph}, in which increasing the graph receptive field and in the case of the underlying graph being dense enough, accumulating the layers leads to an increase in the mixing rate of the node features. Therefore, limiting the simplicial receptive field $t$ can hopefully alleviate or at least slow down the OS. We experimentally validate this claim in the experimental sections.

\begin{remark}
Due to the differentiability of our model in \eqref{eq:SNN_filtering_ours}, the simplicial receptive fields $\{t_d, t_u\}$ can be treated as learnable parameters \cite{behmanesh2023tide}, which is the case with all of our experiments except Section \ref{sec:OverS}. This provides a significant advantage over the discrete SNN formulation in \eqref{eqn:simplicial_NN}, where the graph filter orders $\{T_d, T_u\}$ must be manually tuned, leading to additional computational cost.
\end{remark}

\section{Experiments and Results}
\label{sec:results}

We evaluate \method~against SOTA methods in applications of trajectory prediction, mesh regression, node and graph classification. 
We compare \method~with a wide range of graph and simplicial models, including GCN \cite{kipf2017semi}, GraphSAGE \cite{hamilton2017inductive}, GIN  \cite{xu2018how}, GAT \cite{veličković2018graph}, SNN \cite{ebli2020simplicial}, SCoNe \cite{roddenberry2021principled}, SCNN \cite{yang2022simplicial}, Bunch \cite{bunch2020simplicial}, HSN \cite{hajij2022high}, SCACMPS \cite{papillon2023architectures}, SAN \cite{giusti2022simplicial}, SaNN \cite{gurugubelli2024sann}, GSAN \cite{battiloro2024generalized} and SCCNN \cite{yang2025hodgeaware}.
Then, we experimentally validate the theoretical claims in this paper. More details on the datasets and implementation are outlined in the Appendix \ref{sec:datasets} and  \ref{sec:imp}, respectively.

% \begin{table}
% \centering
% %\renewcommand{\arraystretch}{1.2}
% \caption{Accuracy (\%) results in trajectory prediction task on the synthetic and ocean drifts datasets.}
% \vskip 0.15in
% \label{tab:Traj}
% \setlength{\tabcolsep}{8pt} % Adjust column spacing
% \resizebox{.5\columnwidth}{!}{%
% \begin{tabular}{lcc}
%     \toprule
%     \textbf{Method} & \textbf{Synthetic} $\uparrow$ & \textbf{Ocean Drifts} $\uparrow$ \\
%     \midrule
%     SNN    & 65.5 $\pm$ 2.4  & 52.5 $\pm$ 6.0  \\
%     SCoNe   & 63.1 $\pm$ 3.1  & 49.0 $\pm$ 8.0  \\
%     SCNN   & \textbf{67.7 $\pm$ 1.7}  & 53.0 $\pm$ 7.8  \\
%     Bunch  & 62.3 $\pm$ 4.0  & 46.0 $\pm$ 6.2  \\
%     SCCNN  & 65.2 $\pm$ 4.1  & \underline{54.5 $\pm$ 7.9}  \\
%     \method & \underline{65.9 $\pm$ 4.4}  & \textbf{55.0 $\pm$ 5.5}  \\
%     \bottomrule
% \end{tabular}
% }
% \vskip -0.1in
% \end{table}

% \begin{table}
% \centering
% \caption{MSE in the task of regression on partial deformable shapes on the Shrec-16 dataset.}
% \vskip 0.15in
% \label{tab:Shrec16}
% \setlength{\tabcolsep}{8pt}
% \resizebox{.5\columnwidth}{!}{%
% \begin{tabular}{lcc}
%     \toprule
%     \textbf{Method} & \textbf{Small} $\downarrow$ & \textbf{Full} $\downarrow$ \\
%     \midrule
%     HSN     & 0.138 $\pm$ 0.001  & 0.133 $\pm$ 0.001  \\
%     SCACMPS & 0.137 $\pm$ 0.011  & 0.432 $\pm$ 0.001  \\
%     SAN     & 0.052 $\pm$ 0.011  & 0.075 $\pm$ 0.002  \\
%     SCCNN   & \underline{0.020 $\pm$ 0.003}  & \underline{0.063 $\pm$ 0.003}  \\
%     \method & \textbf{0.010 $\pm$ 0.004}  & \textbf{0.027 $\pm$ 0.007}  \\
%     \bottomrule
% \end{tabular}
% }
% \vskip -0.1in
% \end{table}

\subsection{Real-world Applications}
%We evaluate the performance of \method~in two real-world applications of trajectory prediction and classification of deformed meshes compared to the related state-of-the-art.
% \textbf{Trajectory prediction.}

\textbf{Trajectory prediction.}
Trajectory prediction involves forecasting paths within simplicial complexes.
To evaluate the effectiveness of \method, we assess its performance on two datasets: a synthetic simplicial complex and the \texttt{ocean-drifts} dataset from \cite{roddenberry2021principled,schaub2020random}. As shown in Table \ref{tab:Traj}, \method, SCCNN, and Bunch, which incorporate inter-simplicial couplings, do not outperform SCNN on the synthetic dataset.
This is likely because the input data assigned to nodes and triangles is zero, as noted in \cite{roddenberry2021principled}, making inter-simplicial couplings ineffective.
However, in the \texttt{ocean-drifts} dataset, where higher-order information plays a more significant role, incorporating higher-order convolutions—as in \method~and SCCNN—improves the average accuracy.

\textbf{Regression on partial deformable shapes.}
The \texttt{Shrec-16} benchmark \cite{cosmo2016shrec} extends prior mesh classification datasets to meshes with missing parts, where each class has a full template in a neutral pose for evaluation. To increase complexity, all shapes were sampled to $10$K vertices before introducing missing parts in two regular and irregular ways. This results in a dataset of $599$ shapes across eight classes (humans and animals). The dataset is divided into a training set ($199$ shapes) and a test set ($400$ shapes).
The main task here is to regress the correct mesh class under missing parts. We compare \method~against SOTA methods on two \textit{small} and \textit{full} versions of the dataset. As shown in Table \ref{tab:Shrec16}, \method~achieves the lowest mean square error (MSE) in mesh regression on both dataset versions, outperforming all baselines. Notably, its superior performance on both small and full versions highlights its adaptability to different amounts of data, \eg even limited data.

\begin{table}[!t]
\centering
\begin{minipage}{0.48\textwidth}
\centering
\caption{Accuracies in trajectory prediction on synthetic and \texttt{ocean-drifts} datasets.}
\label{tab:Traj}
\resizebox{\textwidth}{!}{
\renewcommand{\arraystretch}{0.95}
\begin{tabular}{lcc}
\toprule
\textbf{Method} & \texttt{synthetic} $\uparrow$ & \texttt{ocean-drifts} $\uparrow$ \\
\midrule
SNN \cite{ebli2020simplicial}    & 0.655 $\pm$ 0.02 & 0.525 $\pm$ 0.06 \\
SCoNe \cite{roddenberry2021principled}   & 0.631 $\pm$ 0.03 & 0.490 $\pm$ 0.08 \\
SCNN \cite{yang2022simplicial}   & \textbf{0.677 $\pm$ 0.02} & 0.530 $\pm$ 0.08 \\
Bunch \cite{bunch2020simplicial}  & 0.623 $\pm$ 0.04 & 0.460 $\pm$ 0.06 \\
SCCNN \cite{yang2025hodgeaware}  & 0.652 $\pm$ 0.04 & \underline{0.545 $\pm$ 0.08} \\
\method  & \underline{0.659 $\pm$ 0.04} & \textbf{0.550 $\pm$ 0.06} \\
\bottomrule
\end{tabular}
}
\end{minipage}
\hfill
\begin{minipage}{0.48\textwidth}
\centering
\caption{MSE in regression on partial deformable shapes on the \texttt{Shrec-16} dataset.}
\label{tab:Shrec16}
\resizebox{\textwidth}{!}{
\renewcommand{\arraystretch}{0.95}
\begin{tabular}{lcc}
\toprule
\textbf{Method} & \texttt{small} $\downarrow$ & \texttt{full} $\downarrow$ \\
\midrule
HSN \cite{hajij2022high}     & 0.138 $\pm$ 0.001 & 0.133 $\pm$ 0.001 \\
SCACMPS \cite{papillon2023architectures} & 0.137 $\pm$ 0.011 & 0.432 $\pm$ 0.001 \\
SAN  \cite{giusti2022simplicial}    & 0.052 $\pm$ 0.011 & 0.075 $\pm$ 0.002 \\
SCCNN \cite{yang2025hodgeaware}   & \underline{0.020 $\pm$ 0.003} & \underline{0.063 $\pm$ 0.003} \\
\method   & \textbf{0.010 $\pm$ 0.004} & \textbf{0.027 $\pm$ 0.007} \\
\bottomrule
\end{tabular}
}
\end{minipage}
\end{table}
% \vskip -1mm

% \begin{table}[!t]
% \centering
% %\renewcommand{\arraystretch}{1.2}
% %\setlength{\tabcolsep}{10pt} % Adjust column spacing
% \caption{Accuracy results on node classification (NC) and graph classification (GC) tasks.}
% % \vskip 0.15in
% \label{tab:NC_GC}
% \setlength{\tabcolsep}{3pt}
% \renewcommand{\arraystretch}{0.95}
% \resizebox{0.85\textwidth}{!}{%
% \begin{tabular}{lccc}
% \toprule
% %\texttt{Method} & \texttt{contact-high-school} (NC) & \texttt{senate-bills} (NC)& \texttt{proteins} (GC)\\
% \textbf{Method} & \texttt{high-school} $\uparrow$ (NC) & \texttt{senate-bills} $\uparrow$ (NC)& \texttt{proteins} $\uparrow$ (GC)\\
% \midrule
% GCN \cite{kipf2017semi} & 0.40 $\pm$ 0.04  & \underline{0.67 $\pm$ 0.06} & 0.58 $\pm$ 0.05 \\
% GraphSAGE \cite{hamilton2017inductive} & 0.27 $\pm$ 0.05  & 0.54 $\pm$ 0.03 & 0.61 $\pm$ 0.03\\
% GIN  \cite{xu2018how}   & 0.18 $\pm$ 0.04  & 0.53 $\pm$ 0.04 & 0.61 $\pm$ 0.03\\
% GAT \cite{veličković2018graph}  & 0.34 $\pm$ 0.05  & 0.50 $\pm$ 0.04 & 0.57 $\pm$ 0.06\\ \midrule
% SCNN \cite{yang2025hodgeaware} & 0.81 $\pm$ 0.01  & 0.62 $\pm$ 0.05 & 0.61 $\pm$ 0.03\\
% SAN \cite{goh2022simplicial,giusti2022simplicial}  & 0.86 $\pm$ 0.04 & 0.53 $\pm$ 0.09 & 0.64 $\pm$ 0.05\\
% SaNN \cite{gurugubelli2024sann}  & 0.83 $\pm$ 0.03  & 0.61 $\pm$ 0.08 & \underline{0.77 $\pm$ 0.02}\\
% GSAN \cite{battiloro2024generalized} & \underline{0.88 $\pm$ 0.05}& \textit{OOM} & 0.77 $\pm$ 0.04  \\
% \method & \textbf{0.90 $\pm$ 0.05}  & \textbf{0.69 $\pm$ 0.08} & \textbf{0.79 $\pm$ 0.01} \\
% \bottomrule
% \end{tabular}
% }
% %\vskip -0.1in
% \end{table}

\begin{table}[!t]
\centering
\caption{Accuracy results on node classification (NC) and graph classification (GC) tasks.}
% \vskip 0.15in
\label{tab:NC_GC}
\setlength{\tabcolsep}{4pt}
\renewcommand{\arraystretch}{0.95}
\resizebox{0.75\textwidth}{!}{%
\begin{tabular}{lccc}
\toprule
%\texttt{Method} & \texttt{contact-high-school} (NC) & \texttt{senate-bills} (NC)& \texttt{proteins} (GC)\\
\textbf{Method} & \texttt{high-school} $\uparrow$ (NC) & \texttt{senate-bills} $\uparrow$ (NC)& \texttt{proteins} $\uparrow$ (GC)\\
\midrule
GCN \cite{kipf2017semi} & 0.40 $\pm$ 0.04  & \underline{0.67 $\pm$ 0.06} & 0.58 $\pm$ 0.05 \\
GraphSAGE \cite{hamilton2017inductive} & 0.27 $\pm$ 0.05  & 0.54 $\pm$ 0.03 & 0.61 $\pm$ 0.03\\
GIN  \cite{xu2018how}   & 0.18 $\pm$ 0.04  & 0.53 $\pm$ 0.04 & 0.61 $\pm$ 0.03\\
GAT \cite{veličković2018graph}  & 0.34 $\pm$ 0.05  & 0.50 $\pm$ 0.04 & 0.57 $\pm$ 0.06\\ 
\midrule
SCNN \cite{yang2022simplicial} & 0.81 $\pm$ 0.01  & 0.62 $\pm$ 0.05 & 0.61 $\pm$ 0.03\\
SCCNN \cite{yang2025hodgeaware} & \underline{0.88 $\pm$ 0.04}  & 0.64 $\pm$ 0.09 & 0.69 $\pm$ 0.06\\
SAN \cite{goh2022simplicial,giusti2022simplicial}  & 0.86 $\pm$ 0.04 & 0.53 $\pm$ 0.09 & 0.64 $\pm$ 0.05\\
SaNN \cite{gurugubelli2024sann}  & 0.83 $\pm$ 0.03  & 0.61 $\pm$ 0.08 & \underline{0.77 $\pm$ 0.02}\\
GSAN \cite{battiloro2024generalized} & {0.88 $\pm$ 0.05}& \textit{OOM} & 0.77 $\pm$ 0.04  \\
\method & \textbf{0.90 $\pm$ 0.05}  & \textbf{0.69 $\pm$ 0.08} & \textbf{0.79 $\pm$ 0.01} \\
\bottomrule
%\vspace{5pt}
%\textit{OOM}: Out of memory.
\end{tabular}
}
%\vskip -0.1in
\end{table}

\textbf{Node classification (NC).} In this task, the objective is to infer categorical labels associated with 0-dimensional simplices (nodes) embedded in a simplicial complex on two benchmark datasets: \texttt{high-school} \cite{chodrow2021generative,benson2018simplicial}, and \texttt{senate-bills} \cite{chodrow2021generative,fowler2006connecting,fowler2006legislative}. 
We partition each dataset chronologically, reserving the initial 80\% for training the encoder and the final 20\% for evaluation, as in the literature \cite{gurugubelli2024sann,madhu2023toposrl}.
Table \ref{tab:NC_GC} shows the results for node classification.
%In this way, the node classification results are shown in Table \ref{tab:NC_GC}. 
We observe that \method~outperforms previous simplicial and graph-based SOTA methods.

%has the superior performance compared to both the simplicial and graph SOTA.
%As observed, the proposed \method~method has the superior performance compared to both the simplicial and graph SOTA. 

\textbf{Graph classification (GC).} This task entails assigning discrete labels to entire graphs that result from a clique-lifting process applied to simplicial complexes conducted on the standard \texttt{proteins} dataset \cite{Morris2020}. Model performance is assessed by computing the average classification accuracy over ten stratified folds \cite{gurugubelli2024sann,madhu2023toposrl,battiloro2024generalized} in Table \ref{tab:NC_GC}.
Similar to the NC task, \method~obtains superior results by leveraging the higher-order information. 
In fact, the Hodge-aware SNNs are mostly effective in cases of existing higher-order information (\eg on the edges, triangles, etc.).
But, the poor performance of GNNs is probably due to relying only on the data over 0-order simplices, \ie the nodes, and neglecting the other simplices like edge flows or triangular dynamics.

\begin{figure*}[!t]
\centering    
\includegraphics[width=0.9\textwidth]{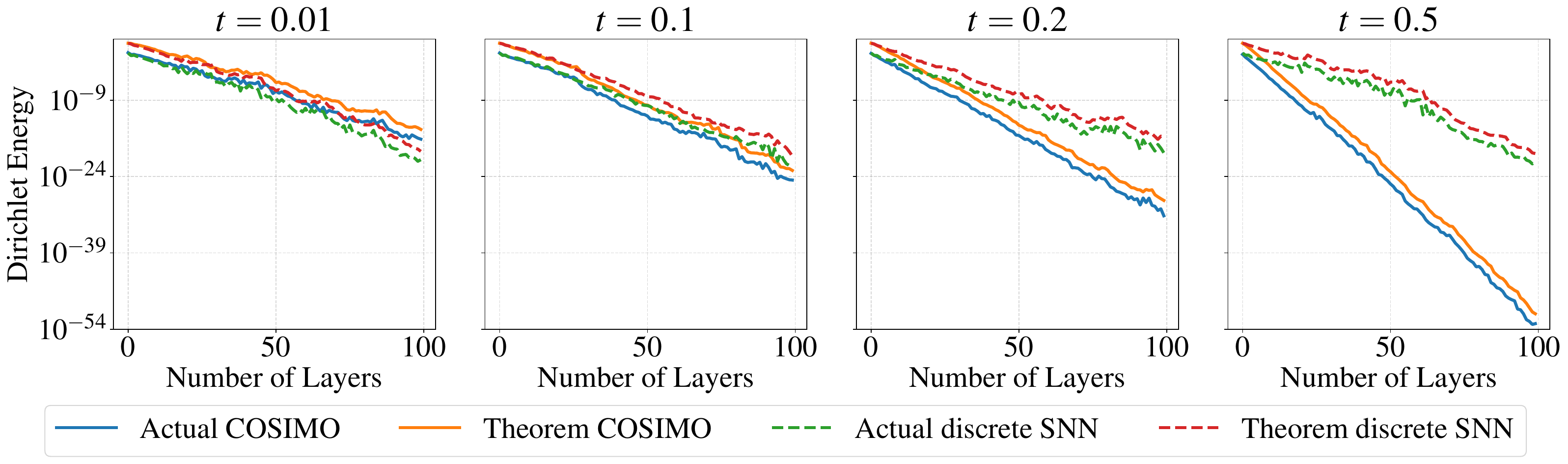}
\caption{Over-smoothing results of discrete SNNs and \method~across different layer depths.}
\label{fig:oversmoothing}
\end{figure*}

% \begin{wrapfigure}{!t}{0.35\textwidth}
% % \vspace{-10pt}
% \centering
% \includegraphics[width=0.35\columnwidth,trim={0cm 0cm 0cm 0cm},clip=true]{}
% \caption{Stability analysis under varying SNRs.}
% \label{fig:stability}
% \vskip-0.1in
% \end{wrapfigure}

% \begin{wrapfigure}{r}{0.35\textwidth}
%     \vspace{-10pt}
%     \includegraphics[width=0.35\columnwidth]{figures/abstract_simplicial_complex.pdf}
% \caption{Example of an abstract simplicial complex.}
% \label{fig:abstract_simplicial_complex}
% \vspace{-15pt}
% \end{wrapfigure}

% The x-axis represents different values of SNR$_1$, while the y-axis shows the corresponding prediction error. Each curve represents a different SNR$_2$ setting, with shaded regions indicating standard deviations.

\subsection{Over-smoothing Analysis}
\label{sec:OverS}
The goals of this section are twofold: (i) to validate Theorems \ref{thm:OS_dSNN} and \ref{thm:OS_cSNN}, and (ii) to study the behavior of discrete SNNs in \eqref{eq:SNN_filtering3} and \method~in \eqref{eq:SNN_filtering_ours} when facing over-smoothing. For the discrete SNN, we consider $T_d=T_u=1$ ($i=1$) in \eqref{eq:SNN_filtering3}.
For \method~in \eqref{eq:SNN_filtering_ours}, we explore different scenarios by setting the receptive fields $t_d=t_u=t$ where $t \in \{10^{-2}, 10^{-1}, 0.2, 0.5\}$. In both cases, the linear projections with hidden units $F_{l-1}=F_l=4$ are generated from normal distributions. 
Figure \ref{fig:oversmoothing} shows the left-hand side (LHS) and right-hand side (RHS) of Theorems \ref{thm:OS_dSNN} and \ref{thm:OS_cSNN} averaged over $50$ random realizations with number of layers varying from $1$ to $100$.
These results validate Theorems \ref{thm:OS_dSNN} and \ref{thm:OS_cSNN}, confirming that the LHSs are upper bounded by the RHSs.
%related to the LHSs being upper bounded by RHSs.
We also observe that adjusting $t$ in \method~provides control over the over-smoothing rate, \ie how quickly the output of the SNN converges to zero Dirichlet energy.
%the over-smoothing rate can be controlled, \ie how fast the output of the SNN converges to zero Dirichlet energy.
Specifically, setting $t=10^{-2}$ results in a slower over-smoothing rate in \method~compared to the discrete SNN.
%the over-smoothing rate in \method~is slower than in the discrete SNN.
In contrast, increasing $t$ leads to a faster over-smoothing rate in \method~than in the discrete SNN.
%On the contrary, the over-smoothing in \method~happens at a faster rate than in the discrete SNN by increasing $t$.
This shows that variations in the continuous receptive fields in \eqref{eq:SNN_filtering_ours} directly influence the rate of convergence to the over-smoothing state. Additional analysis has been provided in the Appendix \ref{sec:os2}.

% \begin{figure*}[!t]
% \centering    
% \includegraphics[width=0.95\textwidth]{figures/Oversmoothing_with_SCCNN_v4.pdf}
% \caption{Over-smoothing results of discrete SNNs and \method~across different layer depths.}
% \label{fig:oversmoothing}
% \end{figure*}

\begin{wrapfigure}{!t}{0.4\textwidth}
\vspace{-10pt}
\centering
\includegraphics[width=0.4\columnwidth,trim={0cm 0cm 0cm 0cm},clip=true]{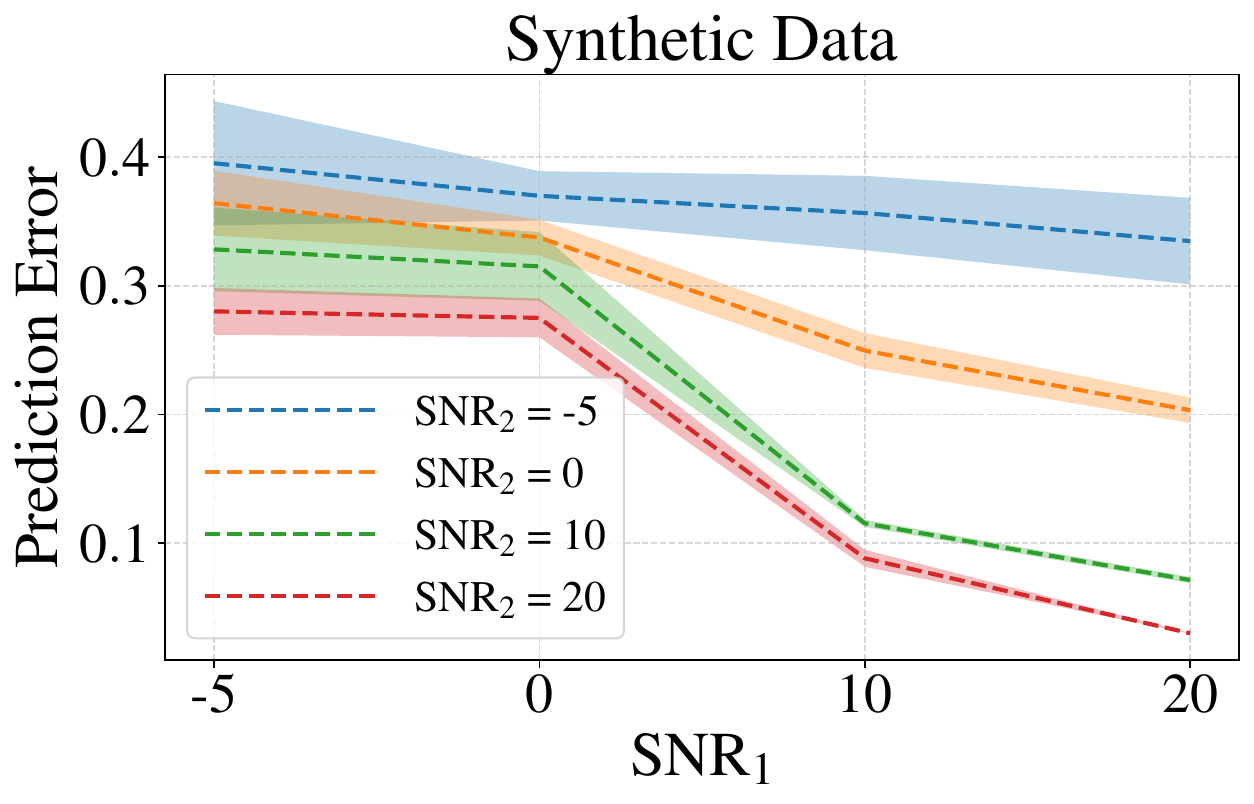}
\caption{Stability analysis under varying SNRs.}
\label{fig:stability}
\vskip-0.1in
\end{wrapfigure}

\subsection{Stability Analysis}
We generate 2-order simplicial complexes, following the approach in \cite{roddenberry2021principled}: \textit{i}) we uniformly sample $N=30$ random points from the unit square and construct the Delaunay triangulation, \textit{ii}) we remove triangles contained within predefined disk regions, and \textit{iii}) we use the generative model in \eqref{eq:SNN_filtering2} with $k = 1$, $t_d=1$, and $t_u=2$, generating $\{\mathbf{X}_k^0\in\mathbb{R}^{|\mathcal{X}_k|\times1}\}_{k=0}^2$ from normal probability distributions. After extracting the incidence matrices $\mathbf{B}_1$ and $\mathbf{B}_2$, we include noise by varying their respective signal-to-noise ratios (SNRs) in $\{-5,0,10,20\}$ dB. For each setting, we train \method~and evaluate its prediction performance, averaged over $30$ random realizations.
Figure \ref{fig:stability} presents the results, including standard deviations.
The results confirm that the model’s overall stability arises from the stability on upper and lower subspaces, validating theoretical findings of Theorem \ref{thm:stability}.

\section{Conclusion and Limitations}

In this paper, we have introduced \method, a novel Hodge-aware model for filtering simplicial signals that addresses the limitations of discrete SNNs by incorporating dynamic receptive fields. 
We provided rigorous theoretical analyses of the stability and over-smoothing behavior of our model, offering new insights into its performance.
Through extensive experiments, we validated our theoretical findings, demonstrating that \method~is not only stable but also allows for effective control over the over-smoothing rate through its continuous receptive fields.
Our experimental results highlight the superiority of \method~over existing SOTA SNNs, particularly in challenging trajectory prediction, regression of partial shapes, node and graph classification tasks.
% This is especially evident in complex and noisy datasets, such as the \texttt{ocean-drifts} and \texttt{Shrec-16} datasets, where our method consistently outperforms previous models.

Regarding future work, although we discussed in detail how to reduce the computational complexity of EVD operations, we will seek to alleviate the need for performing EVDs and potential alternatives to ease this requirement, like non-negative matrix factorization and Cholesky decomposition. Precisely, we will explore the direct implicit Euler method \cite{sharp2022diffusionnet} for matrix exponential approximations to reduce the EVD computational complexity at the expense of higher runtime.

\section*{Acknowledgments}
This research was supported by  DATAIA Convergence Institute as part of the «Programme d’Investissement d’Avenir», (ANR-17-CONV-0003) operated by the center Hi! PARIS. This work was also partially supported by the EuroTech Universities Alliance, and the ANR French National Research Agency under the JCJC projects DeSNAP (ANR-24-CE23-1895-01) and GraphIA (ANR-20-CE23-0009-01). 
% {\color{black} Additionally, the authors thank Maosheng Yang, the first author of the paper \cite{yang2025hodgeaware}, for providing helpful constructive comments in citing and discussing the related work more properly.} 

\bibliography{main}
\bibliographystyle{ieeetr}

%%%%%%%%%%%%%%%%%%%%%%%%%%%%%%%%%%%%%%%%%%%%%%%%%%%%%%%%%%%%%%%%%%%%%%%%%%%%%%%
%%%%%%%%%%%%%%%%%%%%%%%%%%%%%%%%%%%%%%%%%%%%%%%%%%%%%%%%%%%%%%%%%%%%%%%%%%%%%%%
% APPENDIX
%%%%%%%%%%%%%%%%%%%%%%%%%%%%%%%%%%%%%%%%%%%%%%%%%%%%%%%%%%%%%%%%%%%%%%%%%%%%%%%
%%%%%%%%%%%%%%%%%%%%%%%%%%%%%%%%%%%%%%%%%%%%%%%%%%%%%%%%%%%%%%%%%%%%%%%%%%%%%%%
\newpage
\appendix
\onecolumn

\section{Proof of Proposition \ref{prop:sul}}
\label{sec:proof1}

\begin{proof}
By considering the proposed solution in \eqref{eq:SNN_filtering2} as follows:
\begin{equation}
\label{eq:SNN_filtering_proof}
\begin{aligned}
\mathbf{x}'_k(t_d, t_u) 
&= \overbrace{e^{-t_d \mathbf{L}_{k,d}} \mathbf{x}_{k,d}(0)}^{\mathbf{x}_{k,d}(t_d)} 
+ \overbrace{e^{-t_u \mathbf{L}_{k,u}} \mathbf{x}_{k,u}(0)}^{\mathbf{x}_{k,u}(t_u)} \quad + \overbrace{e^{-t_d \mathbf{L}_{k,d}} \mathbf{x}_{k}(0, 0) + e^{-t_u \mathbf{L}_{k,u}} \mathbf{x}_{k}(0, 0)}^{\mathbf{x}_{k}(t_d, t_u)},
\end{aligned}
\end{equation}
the simplified generative PDE of this formulation can be expressed as:
\begin{equation}
\begin{split}
&\frac{\partial \mathbf{x}_{k,d}(t_d)}{\partial t_d}=-\mathbf{L}_{k,d}e^{-t_d \mathbf{L}_{k,d}} \mathbf{x}_{k,d}(0)=-\mathbf{L}_{k,d}\mathbf{x}_{k,d}(t_d),\\
&\frac{\partial \mathbf{x}_{k,u}(t_u)}{\partial t_u}=-\mathbf{L}_{k,u}e^{-t_u \mathbf{L}_{k,u}} \mathbf{x}_{k,u}(0)=-\mathbf{L}_{k,u}\mathbf{x}_{k,u}(t_u),\\
&\mathbf{x}_{k}(t_d, t_u)=e^{-t_d \mathbf{L}_{k,d}} \mathbf{x}_{k}(0, 0) + e^{-t_u \mathbf{L}_{k,u}} \mathbf{x}_{k}(0, 0),\\
&\rightarrow\lim_{t_d\rightarrow\infty}{\mathbf{x}_{k}(t_d,t_u)}=e^{-t_u \mathbf{L}_{k,u}} \mathbf{x}_{k}(0, 0),\:\:\:\lim_{t_u\rightarrow\infty}{\mathbf{x}_{k}(t_d,t_u)}=e^{-t_d \mathbf{L}_{k,d}} \mathbf{x}_{k}(0, 0),\\
&\rightarrow\frac{\partial \mathbf{x}_{k}(t_d,t_u)}{\partial t_d}=-\mathbf{L}_{k,d}e^{-t_d \mathbf{L}_{k,d}} \mathbf{x}_{k}(0, 0),\:\:\:\frac{\partial \mathbf{x}_{k}(t_d,t_u)}{\partial t_u}=-\mathbf{L}_{k,u}e^{-t_u \mathbf{L}_{k,u}} \mathbf{x}_{k}(0, 0).\\
\end{split}
\end{equation}
Therefore,
\begin{equation}
\frac{\partial \mathbf{x}_{k}(t_d,t_u)}{\partial t_d}+\frac{\partial \mathbf{x}_{k}(t_d,t_u)}{\partial t_u}=-\mathbf{L}_{k,d}\overbrace{\mathbf{x}_{k}(t_d,\infty)}^{\lim_{t_u\rightarrow\infty}{\mathbf{x}_{k}(t_d,t_u)}}-\mathbf{L}_{k,u}\overbrace{\mathbf{x}_{k}(\infty,t_u)}^{\lim_{t_d\rightarrow\infty}{\mathbf{x}_{k}(t_d,t_u)}},
\end{equation}
which concludes the proof.
\end{proof}

\section{Proof of Theorem \ref{thm:stability}}
\label{proof:stab}

\begin{proof}
Using the additive perturbation models and the triangular inequality principle, one can write:
\begin{equation}
\begin{split}
&\|~\adjustedtilde{\mathbf{L}}_{k,d}-\mathbf{L}_{k,d}\|\le2\|\mathbf{B}_k\|~\|\mathbf{E}_{k}\|+\|\mathbf{E}^\top_{k}\mathbf{E}_{k}\|\le 2\epsilon_k\sqrt{\lambda_{\text{max}}(\mathbf{L}_{k,d})}+\epsilon^2_k,\\
&\|~\adjustedtilde{\mathbf{L}}_{k,u}-\mathbf{L}_{k,u}\|\le2\|\mathbf{B}_{k+1}\|~\|\mathbf{E}_{k+1}\|+\|\mathbf{E}^\top_{k+1}\mathbf{E}_{k+1}\|\le 2\epsilon_{k+1}\sqrt{\lambda_{\text{max}}(\mathbf{L}_{k,u})}+\epsilon^2_{k+1}.
\end{split}
\end{equation}
Now, for a Laplacian perturbation model on $\mathbf{L}$, one can write \cite{van1977sensitivity,song2022robustness}
\begin{equation}
\label{eq:eps2}
\begin{split}
\|e^{-t~\adjustedtilde{\mathbf{L}}}-e^{-t\mathbf{L}}\|\le t \|~\adjustedtilde{\mathbf{L}}-\mathbf{L}\|~\|e^{-t\mathbf{L}}\|e^{\|t(~\adjustedtilde{\mathbf{L}}-\mathbf{L})\|},
\end{split}
\end{equation}
for a positive constant $\rho$. Then, by the definitions $\delta_{k,d}\vcentcolon=2\epsilon_k\sqrt{\lambda_{\text{max}}(\mathbf{L}_{k,d})}+\epsilon^2_k$ ~and~ $\delta_{k,u}\vcentcolon=2\epsilon_{k+1}\sqrt{\lambda_{\text{max}}(\mathbf{L}_{k,u})}+\epsilon^2_{k+1}$, the bound on the exponential Hodge filters can be obtained as:
\begin{equation}
\begin{split}
&\|e^{-t_d~\adjustedtilde{\mathbf{L}}_{k,d}}-e^{-t_d\mathbf{L}_{k,d}}\|\le t_d\overbrace{\|~\adjustedtilde{\mathbf{L}}_{k,d}-\mathbf{L}_{k,d}\|}^{\delta_{k,d}}~\overbrace{\|e^{-t_d\mathbf{L}_{k,d}}\|}^{e^{-t_d(0)}=1}e^{\|t_d(~\adjustedtilde{\mathbf{L}}_{k,d}-\mathbf{L}_{k,d})\|}\le t_d \delta_{k,d}e^{t_d\delta_{k,d}},\\
&\|e^{-t_u~\adjustedtilde{\mathbf{L}}_{k,u}}-e^{-t_u\mathbf{L}_{k,u}}\|\le t_u\overbrace{\|~\adjustedtilde{\mathbf{L}}_{k,u}-\mathbf{L}_{k,u}\|}^{\delta_{k,u}}~\overbrace{\|e^{-t_u\mathbf{L}_{k,u}}\|}^{e^{-t_u(0)}=1}e^{\|t_u(~\adjustedtilde{\mathbf{L}}_{k,u}-\mathbf{L}_{k,u})\|}\le t_u \delta_{k,u}e^{t_u\delta_{k,u}}.
\end{split}
\end{equation}
Next, by considering the solution in \eqref{eq:SNN_filtering2}, the perturbation bound can be expressed as:
\begin{equation}
\label{eq:SNN_filtering4}
\begin{aligned}
&\|\tilde{\mathbf{x}}_k(t_d, t_u)-\mathbf{x}_k(t_d, t_u)\|\\ 
&\le \|e^{-t_d ~\adjustedtilde{\mathbf{L}}_{k,d}}-e^{-t_d \mathbf{L}_{k,d}}\|~\|\mathbf{x}_{k,d}(0)\| 
+ \|e^{-t_u ~\adjustedtilde{\mathbf{L}}_{k,u}}-e^{-t_u \mathbf{L}_{k,u}}\|~\|\mathbf{x}_{k,u}(0)\| \\
& + \left(\|e^{-t_d ~\adjustedtilde{\mathbf{L}}_{k,d}}-e^{-t_d \mathbf{L}_{k,d}}\| + \|e^{-t_u ~\adjustedtilde{\mathbf{L}}_{k,u}}-e^{-t_u \mathbf{L}_{k,u}}\|\right)\|\mathbf{x}_{k}(0, 0)\|\\
&\le t_d \delta_{k,d}e^{t_d\delta_{k,d}}(\|\mathbf{x}_{k,d}(0)\|+\|\mathbf{x}_{k}(0, 0)\|)+t_u \delta_{k,u}e^{t_u\delta_{k,u}}(\|\mathbf{x}_{k,u}(0)\|+\|\mathbf{x}_{k}(0, 0)\|).
\end{aligned}
\end{equation}
\end{proof}

\section{Proof of Corollary \ref{corl:stability}}

\begin{proof}
Based on the proof stated in Appendix \ref{proof:stab}, we can bound $\|~\adjustedtilde{\mathbf{L}}_k-\mathbf{L}_k\|$ as:
\begin{equation}
\begin{split}
\|~\adjustedtilde{\mathbf{L}}_k-\mathbf{L}_k\|&\le\|~\adjustedtilde{\mathbf{L}}_{k,d}-\mathbf{L}_{k,d}\|+\|~\adjustedtilde{\mathbf{L}}_{k,u}-\mathbf{L}_{k,u}\|\\
&\le2\|\mathbf{B}_k\|~\|\mathbf{E}_{k}\|+\|\mathbf{E}^\top_{k}\mathbf{E}_{k}\|+2\|\mathbf{B}_{k+1}\|~\|\mathbf{E}_{k+1}\|+\|\mathbf{E}_{k+1}\mathbf{E}^\top_{k+1}\|\\
&\le2\epsilon_k\sqrt{\lambda_{\text{max}}(\mathbf{L}_{k,d})}+\epsilon^2_k+2\epsilon_{k+1}\sqrt{\lambda_{\text{max}}(\mathbf{L}_{k,u})}+\epsilon^2_{k+1}\\
&\overset{\text{if $\epsilon_k$ and $\epsilon_{k+1}$ are small}}{\approx}\mathcal{O}(\epsilon_k)+\mathcal{O}(\epsilon_{k+1}).
\end{split}
\end{equation}
Using this direction, it can be easily seen that:
\begin{equation}
\begin{split}
\|~\adjustedtilde{\mathbf{L}}_{k,d}-\mathbf{L}_{k,d}\|&\le2\|\mathbf{B}_k\|~\|\mathbf{E}_{k}\|+\|\mathbf{E}^\top_{k}\mathbf{E}_{k}\|\le2\epsilon_k\sqrt{\lambda_{\text{max}}(\mathbf{L}_{k,d})}+\epsilon^2_k\overset{\text{if $\epsilon_k$ is small}}{\approx}\mathcal{O}(\epsilon_k)\\
\|~\adjustedtilde{\mathbf{L}}_{k,u}-\mathbf{L}_{k,u}\|&\le2\|\mathbf{B}_{k+1}\|~\|\mathbf{E}_{k+1}\|+\|\mathbf{E}^\top_{k+1}\mathbf{E}_{k+1}\|\\
&\le2\epsilon_{k+1}\sqrt{\lambda_{\text{max}}(\mathbf{L}_{k,u})}+\epsilon^2_{k+1}\overset{\text{if $\epsilon_{k+1}$ is small}}{\approx}\mathcal{O}(\epsilon_{k+1}).
\end{split}
\end{equation}

Now, for a sample Laplacian $\mathbf{L}$ with $\|~\adjustedtilde{\mathbf{L}}-\mathbf{L}\|=\mathcal{O}(\epsilon)$, one can write:
\begin{equation}
\label{eq:eps}
\begin{split}
\|e^{-t~\adjustedtilde{\mathbf{L}}}-e^{-t\mathbf{L}}\|\le t \|~\adjustedtilde{\mathbf{L}}-\mathbf{L}\|~\|e^{-t\mathbf{L}}\|e^{\|t(~\adjustedtilde{\mathbf{L}}-\mathbf{L})\|}=\mathcal{O}(\epsilon te^{-\rho t})=\mathcal{O}(\epsilon),
\end{split}
\end{equation}
for a positive constant $\rho$ \cite{van1977sensitivity,song2022robustness}. Therefore, by adapting \eqref{eq:eps}, on can write:
\begin{equation}
\label{eq:eps2}
\begin{split}
\|e^{-t_d~\adjustedtilde{\mathbf{L}}_{k,d}}-e^{-t_d\mathbf{L}_{k,d}}\|&\le t_d\|~\adjustedtilde{\mathbf{L}}_{k,d}-\mathbf{L}_{k,d}\|~\|e^{-t_d\mathbf{L}_{k,d}}\|e^{\|t_d(\tilde{\mathbf{L}}_{k,d}-\mathbf{L}_{k,d})\|}\\
&=\mathcal{O}(\epsilon_k t_de^{-\rho_d t_d})=\mathcal{O}(\epsilon_k),\\
\|e^{-t_u~\adjustedtilde{\mathbf{L}}_{k,u}}-e^{-t_u\mathbf{L}_{k,u}}\|&\le t_u\|~\adjustedtilde{\mathbf{L}}_{k,u}-\mathbf{L}_{k,u}\|~\|e^{-t_u\mathbf{L}_{k,u}}\|e^{\|t_u(\tilde{\mathbf{L}}_{k,u}-\mathbf{L}_{k,u})\|}\\
&=\mathcal{O}(\epsilon_{k+1} t_u e^{-\rho_u t_u})=\mathcal{O}(\epsilon_{k+1}).
\end{split}
\end{equation}
By considering \eqref{eq:SNN_filtering4} and \eqref{eq:eps2}, the proof is completed.
\end{proof}

\section{Proof of Theorem \ref{thm:OS_dSNN}}
\begin{proof}

First, by stating the Dirichlet with $E(.)$, we need the following lemmas from \cite{cai2020note}:
\begin{lemma}
\label{lemma:3.2}
(Lemma 3.2 in \cite{cai2020note}). \(E(\mathbf{X}\mathbf{W})\le\|\mathbf{W}^\top\|_2^2 E(\mathbf{X})\).    
\end{lemma}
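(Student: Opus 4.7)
The plan is to reduce the claim to a standard trace inequality for positive semidefinite matrices. First I would spell out the multidimensional convention for the Dirichlet energy that is implicit in the paper: for $\mathbf{X}\in\mathbb{R}^{|\mathcal{X}_k|\times F}$, set
\begin{equation}
E(\mathbf{X}) \;=\; \tr\bigl(\mathbf{X}^\top \mathbf{L}_k \mathbf{X}\bigr),
\end{equation}
so that $E(\mathbf{X})$ is the sum of the scalar Dirichlet energies of the columns of $\mathbf{X}$ in the sense of Definition \ref{def:dirich}. Then expanding directly,
\begin{equation}
E(\mathbf{XW}) \;=\; \tr\bigl(\mathbf{W}^\top \mathbf{X}^\top \mathbf{L}_k \mathbf{X}\mathbf{W}\bigr) \;=\; \tr\bigl(\mathbf{W}\mathbf{W}^\top\, \mathbf{X}^\top \mathbf{L}_k \mathbf{X}\bigr)
\end{equation}
by cyclicity of the trace, reducing the problem to bounding $\tr(\mathbf{P}\mathbf{Q})$ with $\mathbf{P}\vcentcolon=\mathbf{W}\mathbf{W}^\top$ and $\mathbf{Q}\vcentcolon=\mathbf{X}^\top\mathbf{L}_k\mathbf{X}$.

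The second step is to observe that both $\mathbf{P}$ and $\mathbf{Q}$ are positive semidefinite: $\mathbf{P}$ by construction, and $\mathbf{Q}$ because $\mathbf{L}_k=\mathbf{B}_k^\top\mathbf{B}_k+\mathbf{B}_{k+1}\mathbf{B}_{k+1}^\top\succeq 0$. For any two PSD matrices there is a standard inequality
\begin{equation}
\tr(\mathbf{P}\mathbf{Q}) \;\le\; \lambda_{\max}(\mathbf{P})\, \tr(\mathbf{Q}),
\end{equation}
which I would prove in one line by diagonalizing $\mathbf{P}=\mathbf{U}\boldsymbol{\Lambda}\mathbf{U}^\top$ and writing $\tr(\mathbf{P}\mathbf{Q})=\sum_i \lambda_i(\mathbf{P})\,[\mathbf{U}^\top \mathbf{Q}\mathbf{U}]_{ii}$; the diagonal entries of $\mathbf{U}^\top\mathbf{Q}\mathbf{U}$ are nonnegative because $\mathbf{U}^\top\mathbf{Q}\mathbf{U}\succeq 0$, so bounding each $\lambda_i(\mathbf{P})$ by $\lambda_{\max}(\mathbf{P})$ and using invariance of the trace under orthogonal conjugation yields the inequality.

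To finish, I would identify $\lambda_{\max}(\mathbf{W}\mathbf{W}^\top)=\|\mathbf{W}\|_2^2=\|\mathbf{W}^\top\|_2^2$ (spectral norm equals its transpose's), and recognize $\tr(\mathbf{Q})=\tr(\mathbf{X}^\top\mathbf{L}_k\mathbf{X})=E(\mathbf{X})$. Combining these with the PSD trace inequality produces $E(\mathbf{X}\mathbf{W})\le\|\mathbf{W}^\top\|_2^2\, E(\mathbf{X})$, as claimed. I do not expect any real obstacle here; the only subtlety is to justify the PSD trace bound cleanly and to state explicitly the matrix-valued extension of $E(\cdot)$, since the main body of the paper defined the Dirichlet energy only for vectors.
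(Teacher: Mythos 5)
Your proof is correct: the reduction to $\tr(\mathbf{W}\mathbf{W}^\top\,\mathbf{X}^\top\mathbf{L}_k\mathbf{X})$ by cyclicity, followed by the PSD trace bound $\tr(\mathbf{P}\mathbf{Q})\le\lambda_{\max}(\mathbf{P})\tr(\mathbf{Q})$ and the identification $\lambda_{\max}(\mathbf{W}\mathbf{W}^\top)=\|\mathbf{W}^\top\|_2^2$, is exactly the standard argument. The paper itself gives no proof of this statement---it imports it verbatim as Lemma~3.2 of the cited reference \cite{cai2020note}---and your argument coincides with the one given there, so there is nothing to reconcile; your remark about making the matrix-valued extension $E(\mathbf{X})=\tr(\mathbf{X}^\top\mathbf{L}_k\mathbf{X})$ explicit is a fair point, since the main text defines $E(\cdot)$ only for vectors.
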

\begin{lemma}
\label{lemma:3.3}
(Lemma 3.3 in \cite{cai2020note}). For ReLU and Leaky-ReLU nonlinearities \(E(\sigma(\mathbf{X}))\le E(\mathbf{X})\).    
\end{lemma}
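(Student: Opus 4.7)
Since the statement is imported from Cai and Wang (Lemma 3.3 in \cite{cai2020note}), the plan is to follow their 1-Lipschitz argument for graph Dirichlet energy and extend it to the simplicial Dirichlet energy of Definition \ref{def:dirich}. First I would reduce to the single-column case: writing $\mathbf{X}$ with columns $\mathbf{x}^{(f)}$, we have $E(\mathbf{X}) = \tr(\mathbf{X}^\top \mathbf{L}_k \mathbf{X}) = \sum_f (\mathbf{x}^{(f)})^\top \mathbf{L}_k \mathbf{x}^{(f)}$, and since $\sigma$ acts entrywise, $\sigma(\mathbf{X})$ has columns $\sigma(\mathbf{x}^{(f)})$. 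It therefore suffices to prove $E(\sigma(\mathbf{x})) \le E(\mathbf{x})$ for a single column $\mathbf{x}$.

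The key tool is the 1-Lipschitz property of $\sigma$: for both $\sigma = \ReLU$ and $\sigma = \LeakyReLU$ with negative slope $\alpha \in [0,1]$, a direct case split on the signs of $a$ and $b$ yields $|\sigma(a) - \sigma(b)| \le |a - b|$ for all $a,b \in \mathbb{R}$. Using $E(\mathbf{x}) = \|\mathbf{B}_k \mathbf{x}\|_2^2 + \|\mathbf{B}_{k+1}^\top \mathbf{x}\|_2^2$, in the canonical graph case ($k=0$) the upper term expands as $\|\mathbf{B}_1^\top \mathbf{x}\|_2^2 = \sum_{(i,j)\in\text{edges}} (x_i - x_j)^2$, a sum of pairwise squared differences, so applying the 1-Lipschitz bound termwise yields $\|\mathbf{B}_1^\top \sigma(\mathbf{x})\|_2^2 \le \|\mathbf{B}_1^\top \mathbf{x}\|_2^2$. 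Summing the two Hodge contributions then gives the desired $E(\sigma(\mathbf{x})) \le E(\mathbf{x})$.

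The main obstacle is that for $k \ge 1$, a row of $\mathbf{B}_k$ (resp.\ a column of $\mathbf{B}_{k+1}$) generally carries more than two nonzero entries, so $(\mathbf{B}_k \mathbf{x})_{s^{k-1}}^2$ and $(\mathbf{B}_{k+1}^\top \mathbf{x})_{s^{k+1}}^2$ are squared signed sums rather than pairwise differences, and the entrywise 1-Lipschitz bound does not propagate immediately. I would therefore either invoke the Cai--Wang lemma componentwise on each pairwise cross term that appears when expanding $(\sum_i \epsilon_i x_i)^2 = \sum_{i,j} \epsilon_i \epsilon_j x_i x_j$ and regrouping $k$-simplices sharing a face or coface, or rely on a Jensen/Cauchy--Schwarz refinement to upper bound each squared signed sum by a convex combination of pairwise squared differences between incident $k$-simplices. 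The delicate point is preserving the unit constant on the right-hand side so that Theorem \ref{thm:OS_dSNN} and Corollary \ref{corl:dSNN_OS} inherit the intended contraction rate rather than absorbing a new multiplicative factor into $\tilde{\lambda}_{\text{max}}$.
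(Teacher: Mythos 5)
Your reduction to a single column and your $k=0$ argument are correct, and your diagnosis of the obstacle at $k\ge 1$ is exactly right --- but the gap cannot be closed, by either of your proposed patches or by any other means, because the inequality is false for the simplicial Dirichlet energy of Definition \ref{def:dirich} once a row of $\mathbf{B}_k$ carries more than two nonzero entries. Concrete counterexample: take the star graph with center $0$, leaves $1,2,3$, edges $e_i=\{0,i\}$ oriented away from the center, and the edge signal ($k=1$) $\mathbf{x}=(1,1,-1)^\top$. There are no triangles, so $\mathbf{L}_{1,u}=\mathbf{0}$ and
\begin{equation*}
E(\mathbf{x})=\|\mathbf{B}_1\mathbf{x}\|_2^2=(x_1+x_2+x_3)^2+x_1^2+x_2^2+x_3^2=4,
\qquad
E(\ReLU(\mathbf{x}))=(1+1+0)^2+1+1+0=6>4.
\end{equation*}
For $\LeakyReLU$ with negative slope $\alpha$ one gets $E(\sigma(\mathbf{x}))=E(\mathbf{x})+2(1-\alpha)^2$, strictly larger for every $\alpha\neq 1$. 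The mechanism is precisely the cross terms you isolated: the divergence row $(-1,-1,-1)$ produces products $\epsilon_i\epsilon_j x_i x_j$ with $\epsilon_i\epsilon_j=+1$, and ReLU raises a negative product ($x_ix_j=-1$) to a nonnegative one ($\sigma(x_i)\sigma(x_j)=0$). No regrouping into pairwise squared differences exists for such rows, and any Jensen or Cauchy--Schwarz bound necessarily loses the unit constant --- which the counterexample shows is unavoidable, not merely delicate.

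For comparison, the paper contains no proof of this lemma at all: it is imported verbatim by citation from \cite{cai2020note}, whose own proof is exactly your $k=0$ argument (graph Dirichlet energy written as a sum of pairwise squared differences $(x_i-x_j)^2$, then entrywise $1$-Lipschitzness). The extension to the Hodge energy $\mathbf{x}_k^\top\mathbf{L}_k\mathbf{x}_k$ that the paper needs when invoking this lemma in the proofs of Theorems \ref{thm:OS_dSNN} and \ref{thm:OS_cSNN} is precisely the step you flagged, and it is not justified there either. So your attempt is incomplete, but the incompleteness is not a defect of your strategy: you proved the only case that holds in general ($k=0$), and the honest conclusion is that the lemma as stated for arbitrary $k$-simplicial signals requires additional hypotheses, not a cleverer proof.
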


{\color{black}
\begin{lemma}
\label{lemma:3.4}
(Von Neumann's trace inequality \cite{vonNeumann1937}) For two square matrices $\mathbf{A}$ and $\mathbf{B}$ of size $m$ and singular values $\sigma_i(\mathbf{A})$ and $\sigma_i(\mathbf{B})$, respectively, $tr(\mathbf{AB})\le \sum_{i=1}^{m}{\sigma_i(\mathbf{A})\sigma_i(\mathbf{B})}\le m \|\mathbf{A}\|_2\|\mathbf{B}\|_2$.    
\end{lemma}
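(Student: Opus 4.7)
The plan is to split the two-part statement into the sharp Von~Neumann bound $\tr(\mathbf{A}\mathbf{B})\le\sum_{i=1}^{m}\sigma_i(\mathbf{A})\sigma_i(\mathbf{B})$ and the cruder bound $\sum_{i=1}^{m}\sigma_i(\mathbf{A})\sigma_i(\mathbf{B})\le m\|\mathbf{A}\|_2\|\mathbf{B}\|_2$. The second inequality is essentially immediate: the spectral norm equals the largest singular value, so $\sigma_i(\mathbf{A})\le\|\mathbf{A}\|_2$ and $\sigma_i(\mathbf{B})\le\|\mathbf{B}\|_2$ for every $i$, and summing $m$ such terms gives the bound.

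The sharp inequality contains the genuine content. First I would write the SVDs $\mathbf{A}=\mathbf{U}_A\boldsymbol{\Sigma}_A\mathbf{V}_A^\top$ and $\mathbf{B}=\mathbf{U}_B\boldsymbol{\Sigma}_B\mathbf{V}_B^\top$ and apply cyclic invariance of the trace to reduce the problem to an orthogonally-sandwiched form $\tr(\mathbf{A}\mathbf{B})=\tr(\boldsymbol{\Sigma}_A\mathbf{M}\boldsymbol{\Sigma}_B\mathbf{N})$, where $\mathbf{M}\vcentcolon=\mathbf{V}_A^\top\mathbf{U}_B$ and $\mathbf{N}\vcentcolon=\mathbf{V}_B^\top\mathbf{U}_A$ are orthogonal matrices. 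Expanding componentwise,
\begin{equation*}
\tr(\mathbf{A}\mathbf{B})=\sum_{i,j=1}^{m}\sigma_i(\mathbf{A})\sigma_j(\mathbf{B})\,M_{ij}N_{ji}.
\end{equation*}
Applying the AM--GM bound $M_{ij}N_{ji}\le\tfrac{1}{2}(M_{ij}^2+N_{ji}^2)$ yields the symmetric estimate
\begin{equation*}
\tr(\mathbf{A}\mathbf{B})\le\tfrac{1}{2}\sum_{i,j=1}^{m}\sigma_i(\mathbf{A})\sigma_j(\mathbf{B})\bigl(M_{ij}^2+N_{ji}^2\bigr).
\end{equation*}

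The central observation is that the entrywise squared matrices $P_{ij}=M_{ij}^2$ and $Q_{ij}=N_{ji}^2$ are doubly stochastic, because the rows and columns of an orthogonal matrix have unit Euclidean norm. By Birkhoff's theorem each such matrix is a convex combination of permutation matrices, so it suffices to bound $\sum_{i=1}^{m}\sigma_i(\mathbf{A})\sigma_{\pi(i)}(\mathbf{B})$ for an arbitrary permutation $\pi$. Assuming the singular values are sorted non-increasingly (as is conventional), the classical rearrangement inequality gives $\sum_{i}\sigma_i(\mathbf{A})\sigma_{\pi(i)}(\mathbf{B})\le\sum_{i}\sigma_i(\mathbf{A})\sigma_i(\mathbf{B})$, and taking a convex combination preserves the bound. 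Combining the two doubly stochastic terms yields the sharp inequality.

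The main obstacle is the doubly-stochastic reduction, which chains Birkhoff--von~Neumann with the rearrangement inequality. An alternative is to invoke Ky~Fan's dominance principle, which packages exactly this argument, but then the proof becomes an appeal to a named result; either route is purely linear-algebraic and introduces no structure from simplicial complexes or Hodge Laplacians. The lemma enters later in the proof of Theorem~\ref{thm:OS_dSNN} solely as a generic device to control $\tr(\mathbf{A}\mathbf{B})$-type quantities by spectral norms.
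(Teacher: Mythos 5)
Your proof is correct; note, however, that the paper does not prove this lemma at all — it is invoked as a classical result, dispatched with the citation to von Neumann, and then used as a black box inside the proof of Theorem \ref{thm:OS_dSNN} to control trace cross-terms of the form $\tr(\mathbf{A}\mathbf{B})$ by spectral norms. Your argument supplies the content the paper delegates to the citation, and it is the standard proof of the (real, weak) form of the inequality: SVD plus cyclicity of the trace to reach $\tr(\boldsymbol{\Sigma}_A\mathbf{M}\boldsymbol{\Sigma}_B\mathbf{N})$ with $\mathbf{M}=\mathbf{V}_A^\top\mathbf{U}_B$ and $\mathbf{N}=\mathbf{V}_B^\top\mathbf{U}_A$ orthogonal; the AM--GM symmetrization, legitimate because the weights $\sigma_i(\mathbf{A})\sigma_j(\mathbf{B})$ are nonnegative; the observation that the entrywise squares $M_{ij}^2$ and $N_{ji}^2$ form doubly stochastic matrices (rows and columns of an orthogonal matrix have unit Euclidean norm); and the finish by Birkhoff--von Neumann together with the rearrangement inequality for similarly ordered sequences, where your explicit assumption that singular values are sorted non-increasingly is exactly what the lemma's statement implicitly needs. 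The trailing bound $\sum_{i=1}^{m}\sigma_i(\mathbf{A})\sigma_i(\mathbf{B})\le m\|\mathbf{A}\|_2\|\mathbf{B}\|_2$ is, as you say, immediate since the spectral norm is the largest singular value. What your route buys is a self-contained verification that the inequality holds in the exact form stated; what the paper's route buys is brevity, which is defensible here because the lemma is generic linear algebra with no simplicial or Hodge-theoretic content — precisely your closing observation.
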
}
{\color{black}
\begin{lemma}
\label{lemma:3.5}
Since $\mathbf{B}_k\mathbf{B}_{k+1}=\mathbf{0}$, one can obtain $\mathbf{L}_{k,d}\mathbf{L}_{k,u}=\mathbf{0}$ and $\mathbf{L}_{k,d}e^{-t\mathbf{L}_{k,u}}=\mathbf{L}_{k,d}$. Similar deductions can be obtained for $\mathbf{L}_{k,u}$.    
\end{lemma}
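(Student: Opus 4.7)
The proof plan splits naturally into two parts, corresponding to the two claimed identities, each followed by a symmetric argument for the upper Laplacian.

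First, I would substitute the definitions $\mathbf{L}_{k,d} = \mathbf{B}_k^\top \mathbf{B}_k$ and $\mathbf{L}_{k,u} = \mathbf{B}_{k+1} \mathbf{B}_{k+1}^\top$ directly into the product $\mathbf{L}_{k,d}\mathbf{L}_{k,u}$. This yields $\mathbf{L}_{k,d}\mathbf{L}_{k,u} = \mathbf{B}_k^\top (\mathbf{B}_k \mathbf{B}_{k+1}) \mathbf{B}_{k+1}^\top$, so the hypothesis $\mathbf{B}_k \mathbf{B}_{k+1}=\mathbf{0}$ (the fundamental boundary-of-boundary relation in simplicial complexes) immediately collapses the middle factor and gives $\mathbf{L}_{k,d}\mathbf{L}_{k,u}=\mathbf{0}$.

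Second, to handle $\mathbf{L}_{k,d}e^{-t\mathbf{L}_{k,u}}$, I would invoke the power series definition of the matrix exponential, $e^{-t\mathbf{L}_{k,u}}=\sum_{n=0}^{\infty}\frac{(-t)^n}{n!}\mathbf{L}_{k,u}^n$, and distribute the left multiplication by $\mathbf{L}_{k,d}$ through the sum. Every term with $n\ge 1$ contains the factor $\mathbf{L}_{k,d}\mathbf{L}_{k,u}$, which vanishes by the first part, so only the $n=0$ term survives and we obtain $\mathbf{L}_{k,d}e^{-t\mathbf{L}_{k,u}}=\mathbf{L}_{k,d}$.

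Third, the symmetric claims for $\mathbf{L}_{k,u}$ follow by transposing: since $(\mathbf{B}_k \mathbf{B}_{k+1})^\top=\mathbf{B}_{k+1}^\top\mathbf{B}_k^\top=\mathbf{0}$, the same substitution gives $\mathbf{L}_{k,u}\mathbf{L}_{k,d}=\mathbf{B}_{k+1}(\mathbf{B}_{k+1}^\top\mathbf{B}_k^\top)\mathbf{B}_k=\mathbf{0}$, and repeating the series argument yields $\mathbf{L}_{k,u}e^{-t\mathbf{L}_{k,d}}=\mathbf{L}_{k,u}$.

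There is no real obstacle here; the lemma is essentially a bookkeeping consequence of the boundary identity. The only mild subtlety is justifying term-by-term multiplication into the exponential series, which is immediate because the series converges absolutely in operator norm for any finite matrix $\mathbf{L}_{k,u}$, so left multiplication by the bounded operator $\mathbf{L}_{k,d}$ commutes with the infinite sum.
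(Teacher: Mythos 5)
Your proof is correct and is exactly the argument the paper implicitly relies on: the paper states this lemma without proof inside the proof of Theorem \ref{thm:OS_dSNN}, treating it as an immediate consequence of the boundary identity $\mathbf{B}_k\mathbf{B}_{k+1}=\mathbf{0}$, and your substitution of the definitions plus the term-by-term expansion of the matrix exponential supplies precisely the omitted details. Nothing is missing.
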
}
{\color{black}
\begin{lemma}
\label{lemma:3.6}
Since $\mathbf{X}_{k,d}=\mathbf{B}^\top_k\mathbf{X}_{k-1}$ and $\mathbf{L}_{k,d}=\mathbf{B}^\top_k\mathbf{B}_k$ and $\mathbf{L}_{k-1,u}=\mathbf{B}_k\mathbf{B}^\top_k$, one can obtain $E_d(\mathbf{X}_{k,d})=\tr(\mathbf{X}^\top_{k-1}\mathbf{B}_k(\mathbf{B}^\top_k\mathbf{B}_k)\mathbf{B}^\top_k\mathbf{X}_{k-1})\le \lambda_{\text{max}}(\mathbf{L}_{k-1,u})E_u(\mathbf{X}_{k-1})$. Similar deductions can be obtained for $\mathbf{L}_{k,u}$.    
\end{lemma}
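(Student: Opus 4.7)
The plan is to unfold the Dirichlet-energy definition and then apply an elementary spectral inequality on the symmetric positive semidefinite matrix $\mathbf{L}_{k-1,u}$. First, I would substitute the identity $\mathbf{X}_{k,d}=\mathbf{B}_k^\top \mathbf{X}_{k-1}$ into $E_d(\mathbf{X}_{k,d})\vcentcolon=\tr(\mathbf{X}_{k,d}^\top \mathbf{L}_{k,d}\mathbf{X}_{k,d})$ and use $\mathbf{L}_{k,d}=\mathbf{B}_k^\top\mathbf{B}_k$ to obtain
\begin{equation*}
E_d(\mathbf{X}_{k,d}) = \tr\bigl(\mathbf{X}_{k-1}^\top \mathbf{B}_k \mathbf{B}_k^\top \mathbf{B}_k \mathbf{B}_k^\top \mathbf{X}_{k-1}\bigr) = \tr\bigl(\mathbf{X}_{k-1}^\top \mathbf{L}_{k-1,u}^{2} \mathbf{X}_{k-1}\bigr),
\end{equation*}
after recognizing $\mathbf{B}_k\mathbf{B}_k^\top = \mathbf{L}_{k-1,u}$. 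This reproduces the intermediate equality stated in the lemma.

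Next, since $\mathbf{L}_{k-1,u}$ is symmetric positive semidefinite, its spectral decomposition gives the matrix inequality $\mathbf{L}_{k-1,u}^{2}\preceq \lambda_{\text{max}}(\mathbf{L}_{k-1,u})\mathbf{L}_{k-1,u}$, because each eigenvalue $\mu\ge 0$ satisfies $\mu^{2}\le \lambda_{\text{max}}(\mathbf{L}_{k-1,u})\mu$. Conjugating by $\mathbf{X}_{k-1}$ and taking the trace — which amounts to summing the nonnegative quadratic forms evaluated on the columns of $\mathbf{X}_{k-1}$ — preserves the inequality and yields
\begin{equation*}
E_d(\mathbf{X}_{k,d}) \le \lambda_{\text{max}}(\mathbf{L}_{k-1,u})\tr\bigl(\mathbf{X}_{k-1}^\top \mathbf{L}_{k-1,u}\mathbf{X}_{k-1}\bigr) = \lambda_{\text{max}}(\mathbf{L}_{k-1,u})\, E_u(\mathbf{X}_{k-1}),
\end{equation*}
which is exactly the claimed bound.

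The ``similar deduction for $\mathbf{L}_{k,u}$'' follows by mirroring the argument with the roles of the lower and upper structures swapped: starting from $\mathbf{X}_{k,u}=\mathbf{B}_{k+1}\mathbf{X}_{k+1}$, $\mathbf{L}_{k,u}=\mathbf{B}_{k+1}\mathbf{B}_{k+1}^\top$, and the identification $\mathbf{B}_{k+1}^\top \mathbf{B}_{k+1}=\mathbf{L}_{k+1,d}$, the same two steps produce $E_u(\mathbf{X}_{k,u})\le \lambda_{\text{max}}(\mathbf{L}_{k+1,d})\, E_d(\mathbf{X}_{k+1})$. The derivation is purely algebraic with a single one-line spectral estimate, so I do not expect any real obstacle; the only care needed is in the bookkeeping of simplicial indices, noting that the lower Laplacian at level $k$ couples to the upper Laplacian at level $k-1$ (and, symmetrically, the upper at $k$ to the lower at $k+1$) through the shared incidence matrix on the corresponding face-coface pair.
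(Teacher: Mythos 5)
Your proof is correct and follows essentially the same route the paper takes (the paper leaves the argument implicit in the lemma statement itself): substitute $\mathbf{X}_{k,d}=\mathbf{B}_k^\top\mathbf{X}_{k-1}$ to recognize $\mathbf{L}_{k-1,u}^2$ in the trace, then apply $\mathbf{L}_{k-1,u}^{2}\preceq \lambda_{\text{max}}(\mathbf{L}_{k-1,u})\mathbf{L}_{k-1,u}$. Your explicit spectral justification and the mirrored bound $E_u(\mathbf{X}_{k,u})\le \lambda_{\text{max}}(\mathbf{L}_{k+1,d})E_d(\mathbf{X}_{k+1})$ are exactly what the paper's ``similar deductions'' clause intends.
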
}
{\color{black}Then}, 
\begin{equation}
\begin{split}
&E(\mathbf{X}^{l+1}_k)=\tr({\mathbf{X}^{l+1}_k}^\top\mathbf{L}_{k,d}\mathbf{X}^{l+1}_k)+\tr({\mathbf{X}^{l+1}_k}^\top\mathbf{L}_{k,u}\mathbf{X}^{l+1}_k)\\
&\le s\lambda^{2}_{\text{max}}(\mathbf{L}_{k,d})E(\mathbf{X}^l_{k,d})+2Fs\lambda^{3}_{\text{max}}(\mathbf{L}_{k,d})\|\mathbf{X}^l_{k,d}\|_2.\|\mathbf{X}^l_{k}\|_2+s\lambda^{2}_{\text{max}}(\mathbf{L}_{k,d})E_d(\mathbf{X}^l_{k})\\
&+s\lambda^{2}_{\text{max}}(\mathbf{L}_{k,u})E(\mathbf{X}^l_{k,u})+2Fs\lambda^{3}_{\text{max}}(\mathbf{L}_{k,u})\|\mathbf{X}^l_{k,u}\|_2.\|\mathbf{X}^l_{k}\|_2+s\lambda^{2}_{\text{max}}(\mathbf{L}_{k,u})E_u(\mathbf{X}^l_{k})\\
&\le s\lambda^{2}_{\text{max}}(\mathbf{L}_{k,d})\lambda_{\text{max}}(\mathbf{L}_{k-1,u})E_{u}(\mathbf{X}^l_{k-1})+2Fs\lambda^{3.5}_{\text{max}}(\mathbf{L}_{k,d})\|\mathbf{X}^l_{k}\|.\|\mathbf{X}^l_{k-1}\|_2+s\lambda^{2}_{\text{max}}(\mathbf{L}_{k,d})E_d(\mathbf{X}^l_{k})\\
&+s\lambda^{2}_{\text{max}}(\mathbf{L}_{k,u})\lambda_{\text{max}}(\mathbf{L}_{k+1,d})E_{d}(\mathbf{X}^l_{k+1})+2Fs\lambda^{3.5}_{\text{max}}(\mathbf{L}_{k,u})\|\mathbf{X}^l_{k}\|.\|\mathbf{X}^l_{k+1}\|_2+s\lambda^{2}_{\text{max}}(\mathbf{L}_{k,u})E_u(\mathbf{X}^l_{k})\\
&\le s\tilde{\lambda}^{3}_{\text{max}}E_{u}(\mathbf{X}^l_{k-1})+2Fs\tilde{\lambda}^{3.5}_{\text{max}}\|\mathbf{X}^l_{k}\|.\|\mathbf{X}^l_{k-1}\|_2+s\tilde{\lambda}^{2}_{\text{max}}E_d(\mathbf{X}^l_{k})\\
&+s\tilde{\lambda}^{3}_{\text{max}}E_{d}(\mathbf{X}^l_{k+1})+2Fs\tilde{\lambda}^{3.5}_{\text{max}}\|\mathbf{X}^l_{k}\|.\|\mathbf{X}^l_{k+1}\|_2+s\tilde{\lambda}^{2}_{\text{max}}E_u(\mathbf{X}^l_{k})\\
&\le s\tilde{\lambda}^{3}_{\text{max}}(E(\mathbf{X}^l_{k-1})+E(\mathbf{X}^l_{k+1}))+2Fs\tilde{\lambda}^{3.5}_{\text{max}}\|\mathbf{X}^l_{k}\|.(\|\mathbf{X}^l_{k-1}\|_2+\|\mathbf{X}^l_{k+1}\|_2)+s\tilde{\lambda}^{2}_{\text{max}}E(\mathbf{X}^l_{k}).
\end{split}
\end{equation}
\end{proof}

\section{Proof of Corollary \ref{corl:dSNN_OS}}
\label{proof:dSNN_OS}

\begin{proof}
Using the results of Theorem \ref{thm:OS_dSNN}, if the constraints of $s\tilde{\lambda}^3_{\text{max}}<1$, $2Fs\tilde{\lambda}^{3.5}_{\text{max}}<1$, and $s\tilde{\lambda}^2_{\text{max}}<1$ are simultaneously satisfied, by stacking layers, their multiplications converge to zero making the RHS in Theorem \ref{thm:OS_dSNN} converge to zero as well. 
Holding the mentioned conditions together completes the proof.
\end{proof}

\section{Proof of Theorem \ref{thm:OS_cSNN}}
\begin{proof}
First, consider that $E(\mathbf{x})$ can be stated by \(\tilde{\mathbf{x}}\), \ie the Graph Fourier Transform (GFT) \cite{ortega2018graph} of \(\mathbf{x}\) (where \(\{\lambda_i\}_{i=1}^N\) eigenvalues of the Laplacian \(\hat{\mathbf{L}}\)), as follows \cite{cai2020note}:
\begin{equation}
E(\mathbf{x})=\mathbf{x}^\top \hat{\mathbf{L}}\mathbf{x}=\sum_{i=1}^{N}{\lambda_i\tilde{x}^2_i}.
\end{equation}
Next, taking \(\lambda\) as the smallest nonzero eigenvalue of the Laplacian \(\hat{\mathbf{L}}\), the following lemma describes the behavior of a heat kernel in the most basic scenario of over-smoothing.
\begin{lemma}
\label{lemma:exp_ener}
We have:
\begin{equation}
E(e^{-\hat{\mathbf{L}}}\mathbf{x})\le e^{-2\lambda}E(\mathbf{x}).
\end{equation}
\end{lemma}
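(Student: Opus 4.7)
The plan is to work in the spectral domain defined by the Graph Fourier Transform (GFT) of $\hat{\mathbf{L}}$, since both the Dirichlet energy and the heat kernel $e^{-\hat{\mathbf{L}}}$ are diagonalized by the eigenbasis of $\hat{\mathbf{L}}$. Writing $\hat{\mathbf{L}} = \mathbf{V}\boldsymbol{\Lambda}\mathbf{V}^{\top}$ with eigenvalues $0 = \lambda_1 \le \lambda_2 \le \dots \le \lambda_N$ and letting $\tilde{\mathbf{x}} = \mathbf{V}^{\top}\mathbf{x}$, the energy is $E(\mathbf{x}) = \sum_{i=1}^{N} \lambda_i \tilde{x}_i^2$, exactly as recalled before the lemma.

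First I would compute $e^{-\hat{\mathbf{L}}}\mathbf{x}$ in this basis: since $e^{-\hat{\mathbf{L}}} = \mathbf{V}\,\mathrm{diag}(e^{-\lambda_i})\,\mathbf{V}^{\top}$, the GFT of $e^{-\hat{\mathbf{L}}}\mathbf{x}$ has components $e^{-\lambda_i}\tilde{x}_i$. Substituting into the spectral formula for the Dirichlet energy gives
\begin{equation}
E(e^{-\hat{\mathbf{L}}}\mathbf{x}) = \sum_{i=1}^{N} \lambda_i \bigl(e^{-\lambda_i}\tilde{x}_i\bigr)^2 = \sum_{i=1}^{N} \lambda_i\, e^{-2\lambda_i}\, \tilde{x}_i^2.
\end{equation}

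Next I would discard the zero modes: the terms with $\lambda_i = 0$ contribute nothing to either side, so the sum effectively runs over indices with $\lambda_i \ge \lambda$, where $\lambda$ is the smallest nonzero eigenvalue. On this index set, $e^{-2\lambda_i} \le e^{-2\lambda}$, a bound that I can factor out of the sum:
\begin{equation}
\sum_{i:\lambda_i \ge \lambda} \lambda_i\, e^{-2\lambda_i}\, \tilde{x}_i^2 \;\le\; e^{-2\lambda} \sum_{i:\lambda_i \ge \lambda} \lambda_i\, \tilde{x}_i^2 \;=\; e^{-2\lambda}\, E(\mathbf{x}),
\end{equation}
which is exactly the claimed inequality.

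There is no real obstacle here beyond being careful that the bound $e^{-2\lambda_i}\le e^{-2\lambda}$ is applied only on the nonzero spectrum (where it holds); on the kernel of $\hat{\mathbf{L}}$ one would get $e^{-2\cdot 0} = 1 > e^{-2\lambda}$, but those modes are annihilated by the $\lambda_i$ factor in the energy, so they are harmless. The argument is the standard Poincaré-type spectral-gap estimate adapted to the heat semigroup and should slot in cleanly as a one-paragraph computation.
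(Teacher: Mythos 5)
Your argument is correct and is essentially identical to the paper's proof: both diagonalize $\hat{\mathbf{L}}$, express the energy spectrally as $\sum_i \lambda_i e^{-2\lambda_i}\tilde{x}_i^2$, and bound $e^{-2\lambda_i}\le e^{-2\lambda}$ after noting that the zero eigenvalues contribute nothing. Your explicit remark about restricting the bound to the nonzero spectrum is the same observation the paper makes in its closing sentence, just stated more carefully.
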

\begin{proof} 
By showing the EVD of $\hat{\mathbf{L}}=\mathbf{V} \boldsymbol{\Lambda}\mathbf{V}^\top$ and $e^{-\hat{\mathbf{L}}}=\mathbf{V} e^{-\boldsymbol{\Lambda}}\mathbf{V}^\top$, we have:
\begin{equation}
\begin{split}
&E(e^{-\hat{\mathbf{L}}}\mathbf{x})\\
&=\mathbf{x}^\top \overbrace{{e^{-\hat{\mathbf{L}}}}^\top}^{\mathbf{V} e^{-\boldsymbol{\Lambda}}\mathbf{V}^\top} \overbrace{\hat{\mathbf{L}}}^{\mathbf{V} \boldsymbol{\Lambda}\mathbf{V}^\top} \overbrace{e^{-\hat{\mathbf{L}}}}^{\mathbf{V} e^{-\boldsymbol{\Lambda}}\mathbf{V}^\top}\mathbf{x}=\sum_{i=1}^{N}{\lambda_i\tilde{x}^2_i e^{-2\lambda_i}}\le e^{-2\lambda}\left(\sum_{i=1}^{N}{\lambda_i\tilde{x}^2_i}\right)=e^{-2\lambda}E(\mathbf{x}).
\end{split}
\end{equation}
Note that we excluded the zero eigenvalues because they do not engage in the calculation of the Dirichlet energy.
\end{proof}
Leveraging from Lemmas \ref{lemma:3.2}- \ref{lemma:3.6}, and \ref{lemma:exp_ener}, and also the triangle principle in inequalities, one can simply derive the following useful inequalities:
\begin{equation}
\label{eq:useful_ineq}
{
\begin{split}
{\color{green}\mathbf{\boxed{1}}}\:\:\:&E_d(e^{-t_d\mathbf{L}_{k,d}}\mathbf{X}^{l}_{k,d}\boldsymbol{\Theta}^{l}_{k,d})\le E_d(e^{-t_d\mathbf{L}_{k,d}}\mathbf{X}^{l}_{k,d})\boldsymbol{\Theta}^{l}_{k,d}\|_2^2\\ 
{\color{green}\mathbf{\boxed{2}}}\:\:\: &E_d(e^{-t_d\mathbf{L}_{k,d}}\mathbf{X}^{l}_{k}\boldsymbol{\Psi}^{l}_{k,d})\le E_d(e^{-t_d\mathbf{L}_{k,d}}\mathbf{X}^{l}_{k})\|\boldsymbol{\Psi}^{l}_{k,d}\|_2^2\\
{\color{green}\mathbf{\boxed{3}}}\:\:\: &E_d(e^{-t_d\mathbf{L}_{k,u}}\mathbf{X}^{l}_{k}\boldsymbol{\Psi}^{l}_{k,u})\le E_d(\mathbf{X}^{l}_{k})\|\boldsymbol{\Psi}^{l}_{k,u}\|_2^2\\
{\color{green}\mathbf{\boxed{4}}}\:\:\:&\tr({{\mathbf{X}^{l}_{k,d}}^\top e^{-t_d\mathbf{L}_{k,d}}\mathbf{L}_{k,d}e^{-t_d\mathbf{L}_{k,d}}\mathbf{X}^{l}_{k}\boldsymbol{\Psi}^{l}_{k,d}\boldsymbol{\Theta}^{l}_{k,d}}^\top)\\
&\le F.\|{{\mathbf{X}^{l}_{k,d}}^\top e^{-t_d\mathbf{L}_{k,d}}\mathbf{L}_{k,d}e^{-t_d\mathbf{L}_{k,d}}\mathbf{X}^{l}_{k}\|_2~\|\boldsymbol{\Psi}^{l}_{k,d}\boldsymbol{\Theta}^{l}_{k,d}}^\top\|_2\\
{\color{green}\mathbf{\boxed{5}}}\:\:\:&\tr({{\mathbf{X}^{l}_{k,d}}^\top e^{-t_d\mathbf{L}_{k,d}}\mathbf{L}_{k,d}e^{-t_u\mathbf{L}_{k,u}}\mathbf{X}^{l}_{k}\boldsymbol{\Psi}^{l}_{k,u}\boldsymbol{\Theta}^{l}_{k,d}}^\top)\\
&\le F.\|{{\mathbf{X}^{l}_{k,d}}^\top e^{-t_d\mathbf{L}_{k,d}}\mathbf{L}_{k,d}e^{-t_u\mathbf{L}_{k,u}}\mathbf{X}^{l}_{k}\|_2~\|\boldsymbol{\Psi}^{l}_{k,u}\boldsymbol{\Theta}^{l}_{k,d}}^\top\|_2\\
{\color{green}\mathbf{\boxed{6}}}\:\:\:&\tr({{\mathbf{X}^{l}_{k}}^\top e^{-t_d\mathbf{L}_{k,d}}\mathbf{L}_{k,d}e^{-t_u\mathbf{L}_{k,u}}\mathbf{X}^{l}_{k}\boldsymbol{\Psi}^{l}_{k,u}\boldsymbol{\Theta}^{l}_{k,d}}^\top)\\
&\le F.\|{{\mathbf{X}^{l}_{k}}^\top e^{-t_d\mathbf{L}_{k,d}}\mathbf{L}_{k,d}e^{-t_u\mathbf{L}_{k,u}}\mathbf{X}^{l}_{k}\|_2~\|\boldsymbol{\Psi}^{l}_{k,u}\boldsymbol{\Theta}^{l}_{k,d}}^\top\|_2\\
\end{split}}
\end{equation}
{\color{black}Then, building upon \eqref{eq:useful_ineq} and for obtaining an upper bound on $E(\mathbf{X}^{l+1}_k)=\tr({\mathbf{X}^{l+1}_k}^\top\mathbf{L}_{k,d}\mathbf{X}^{l+1}_k)+\tr({\mathbf{X}^{l+1}_k}^\top\mathbf{L}_{k,u}\mathbf{X}^{l+1}_k)$, we first elaborate on the first term:}
\begin{equation}
{\tiny
\begin{split}
&\tr({\mathbf{X}^{l+1}_k}^\top\mathbf{L}_{k,d}\mathbf{X}^{l+1}_k)\\
&=\overbrace{\tr({\boldsymbol{\Theta}^{l}_{k,d}}^\top{\mathbf{X}^{l}_{k,d}}^\top e^{-t_d\mathbf{L}_{k,d}}\mathbf{L}_{k,d}e^{-t_d \mathbf{L}_{k,d}} \mathbf{X}^{l}_{k,d} \boldsymbol{\Theta}^{l}_{k,d})}^{{\color{green}\mathbf{\boxed{1}}}}+\overbrace{\tr({\boldsymbol{\Psi}^{l}_{k,d}}^\top{\mathbf{X}^{l}_{k}}^\top e^{-t_d \mathbf{L}_{k,d}}\mathbf{L}_{k,d}e^{-t_d \mathbf{L}_{k,d}} \mathbf{X}^{l}_{k} \boldsymbol{\Psi}^{l}_{k,d} )}^{{\color{green}\mathbf{\boxed{2}}}}\\
&+\overbrace{\tr({\boldsymbol{\Psi}^{l}_{k,u}}^\top {\mathbf{X}^{l}_{k}}^\top e^{-t_u \mathbf{L}_{k,u}}\mathbf{L}_{k,d}e^{-t_u \mathbf{L}_{k,u}} \mathbf{X}^{l}_{k} \boldsymbol{\Psi}^{l}_{k,u})}^{{\color{green}\mathbf{\boxed{3}}}}+\overbrace{2\tr( {\boldsymbol{\Theta}^{l}_{k,d}}^\top{\mathbf{X}^{l}_{k,d}}^\top e^{-t_d\mathbf{L}_{k,d}}\mathbf{L}_{k,d}e^{-t_d \mathbf{L}_{k,d}} \mathbf{X}^{l}_{k} \boldsymbol{\Psi}^{l}_{k,d} )}^{{\color{green}\mathbf{\boxed{4}}}}\\
&+\overbrace{2\tr( {\boldsymbol{\Theta}^{l}_{k,d}}^\top{\mathbf{X}^{l}_{k,d}}^\top e^{-t_d\mathbf{L}_{k,d}}\mathbf{L}_{k,d}e^{-t_u \mathbf{L}_{k,u}} \mathbf{X}^{l}_{k} \boldsymbol{\Psi}^{l}_{k,u} )}^{{\color{green}\mathbf{\boxed{5}}}} + \overbrace{2\tr( {\boldsymbol{\Psi}^{l}_{k,d}}^\top{\mathbf{X}^{l}_{k}}^\top e^{-t_d\mathbf{L}_{k,d}}\mathbf{L}_{k,d}e^{-t_u \mathbf{L}_{k,u}} \mathbf{X}^{l}_{k} \boldsymbol{\Psi}^{l}_{k,u} )}^{{\color{green}\mathbf{\boxed{6}}}}\\
&{\color{black}\le} e^{-2t_d \lambda^{(d)}_{\text{min}}} E_{d}(\overbrace{\mathbf{X}^{l}_{k,d}}^{\mathbf{B}^\top_k\mathbf{X}^{l}_{k-1}})\|\boldsymbol{\Theta}^{l}_{k,d}\|_2^2+e^{-2t_d \lambda^{(d)}_{\text{min}}} E_{d}(\mathbf{X}^{l}_{k})\|\boldsymbol{\Psi}^l_{k,d}\|_2^2 + E_{d}(\mathbf{X}^{l}_{k})\|\boldsymbol{\Psi}^l_{k,u}\|_2^2 \\
&+2.F.\lambda^{(d)}_{\text{max}} e^{-2t_d \lambda^{(d)}_{\text{min}}}.\|\mathbf{X}_k^l\|.\|\mathbf{X}_{k,d}^l\|.\|\boldsymbol{\Theta}^l_{k,d}\|.\|\boldsymbol{\Psi}^l_{k,d}\|+2.F.\lambda^{(d)}_{\text{max}} e^{-t_d \lambda^{(d)}_{\text{min}}}.\|\mathbf{X}_k^l\|.\|\mathbf{X}_{k,d}^l\|.\|\boldsymbol{\Theta}^l_{k,d}\|.\|\boldsymbol{\Psi}^l_{k,d}\|\\
&+2.F.\lambda^{(d)}_{\text{max}}e^{-t_d \lambda^{(d)}_{\text{min}}}\|\mathbf{X}_k\|^2\\
&\le s.e^{-2\varphi} \lambda^{(u)}_{\text{max}} E_{u}(\mathbf{X}^{l}_{k-1}) + s.e^{-2\varphi} E_{d}(\mathbf{X}^{l}_{k}) + s. E_{d}(\mathbf{X}^{l}_{k}) + 2.F.s.e^{-2\varphi}.\lambda^{(d)}_{\text{max}}.\|\mathbf{X}_k^l\|.\|\mathbf{X}_{k,d}^l\|\\
& + 2.F.s.e^{-\varphi}.\lambda^{(d)}_{\text{max}}.\|\mathbf{X}_k^l\|.\|\mathbf{X}_{k,d}^l\| + 2.F.s.e^{-\varphi}.\lambda^{(d)}_{\text{max}}.\|\mathbf{X}_k^l\|^2\\
&\le s.e^{-2\varphi} \tilde{\lambda}_{\text{max}} E_{u}(\mathbf{X}^{l}_{k-1}) + s.(e^{-2\varphi}+1).E_{d}(\mathbf{X}^{l}_{k}) + 2.F.s.(e^{-\varphi}+e^{-2\varphi}).\tilde{\lambda}^{1.5}_{\text{max}}.\|\mathbf{X}_k^l\|.\|\mathbf{X}_{k-1}^l\|+2.F.s.e^{-\varphi}.\tilde{\lambda}_{\text{max}}.\|\mathbf{X}_k^l\|^2\\
% &\resizebox{\hsize}{!}{$+e^{-2t'_u \lambda_{\text{min}}(\mathbf{L}_{k,u})}E_{k+1}(\mathbf{X}^l_{k+1})\lambda_{\text{max}}(\mathbf{L}_{k+1,d})s+2\|\mathbf{X}^l_k\|~\|\mathbf{X}^l_{k+1}\|~\lambda^{1.5}_{\text{max}}(\mathbf{L}_{k,u})s+(e^{-2t_d \lambda_{\text{min}}(\mathbf{L}_{k,d})}+e^{-2t_u \lambda_{\text{min}}(\mathbf{L}_{k,u})}) E_{k}(\mathbf{X}^{l}_{k})s$},\\
\end{split}
}
\end{equation}
and similarly for the second term
\begin{equation}
{\tiny
\begin{split}
&\tr({\mathbf{X}^{l+1}_k}^\top\mathbf{L}_{k,u}\mathbf{X}^{l+1}_k)\\
&\le s.e^{-2\varphi} \tilde{\lambda}_{\text{max}} E_{d}(\mathbf{X}^{l}_{k+1}) + s.(e^{-2\varphi}+1).E_{u}(\mathbf{X}^{l}_{k}) + 2.F.s.(e^{-\varphi}+e^{-2\varphi}).\tilde{\lambda}^{1.5}_{\text{max}}.\|\mathbf{X}_k^l\|.\|\mathbf{X}_{k+1}^l\|+2.F.s.e^{-\varphi}.\tilde{\lambda}_{\text{max}}.\|\mathbf{X}_k^l\|^2.
\end{split}
}
\end{equation}
Therefore, the final upper bound on $E(\mathbf{X}^{l+1}_k)$ can be expressed in a combined form as
\begin{equation}
{\tiny
\begin{split}
&E(\mathbf{X}^{l+1}_k)\le s.e^{-2\varphi} \tilde{\lambda}_{\text{max}} (E(\mathbf{X}^{l}_{k-1})+E(\mathbf{X}^{l}_{k+1}))\\ & + s.(e^{-2\varphi}+1)E(\mathbf{X}^{l}_{k}) + 2Fs.(e^{-\varphi}+e^{-2\varphi})\tilde{\lambda}^{1.5}_{\text{max}}\|\mathbf{X}_k^l\|(\|\mathbf{X}_{k-1}^l\|+\|\mathbf{X}_{k+1}^l\|)+2Fs.e^{-\varphi}\tilde{\lambda}_{\text{max}}\|\mathbf{X}_k^l\|^2
\end{split}
}
\end{equation}
where
\begin{equation}
\begin{split}
&\varphi\vcentcolon=\min_{k}{\{t_d \lambda_{\text{min}}(\mathbf{L}_{k,d}),t_u \lambda_{\text{min}}(\mathbf{L}_{k,u})\}}\\
% &\varphi'\vcentcolon=\min_{k}{\{t'_d \lambda_{\text{min}}(\mathbf{L}_{k,d}),t'_u \lambda_{\text{min}}(\mathbf{L}_{k,u})\}}\\
&\tilde{\lambda}_{\text{max}}\vcentcolon=\max_{k}{\{\lambda_{\text{max}}(\mathbf{L}_{k,d}),\lambda_{\text{max}}(\mathbf{L}_{k,u})\}}\\
&s\vcentcolon=\sqrt{\max_{k,l}{\{\|\boldsymbol{\Theta}^l_{k,d}\|,\|\boldsymbol{\Theta}^l_{k,u}\|,\|\boldsymbol{\Psi}^l_{k,d}\|,\|\boldsymbol{\Psi}^l_{k,u}\|\}}}.
\end{split}
\end{equation}
Therefore, the proof is completed.
\end{proof}

\section{Proof of Corollary \ref{corl:csnn}}
It follows similar to the justifications mentioned in Section \ref{proof:dSNN_OS} for the second and third terms of the results of Theorem \ref{thm:OS_cSNN}.

\section{Proof of Proposition \ref{pro:avoid_OS}}
\begin{proof}
We start from $\ln{(s)}>2\varphi-\ln{(\tilde{\lambda}_{\text{max}})}$. By considering the fact that $\lambda_{\text{min}}(\mathbf{L}_{k,d})\le \text{min}(\lambda_{\text{min}}(\mathbf{L}_{k,u}),\lambda_{\text{min}}(\mathbf{L}_{k})$), assuming $t_d=t_u=t$, replacing $\varphi=t\lambda_{\text{min}}(\mathbf{L}_{k})$, and the definition $k_f(\mathbf{L}_{k})\vcentcolon=\frac{\lambda_{\text{max}}(\mathbf{L}_{k})}{\lambda_{\text{min}}(\mathbf{L}_{k})}$ (with $\lambda_{\text{min}}(\mathbf{L}_{k})\ne0$) \cite{spielman2010algorithms}, one can write:
\begin{equation}
\ln{(s)}>2\varphi -\ln{(\tilde{\lambda}_{\text{max}})} \rightarrow t<\frac{\ln{(s\tilde{\lambda}_{\text{max}})}}{2\lambda_{\text{min}}(\mathbf{L}_{k})}<\frac{\ln{(s\tilde{\lambda}_{\text{max}})}+2\lambda_{\text{max}}(\mathbf{L}_{k})}{2\lambda_{\text{min}}(\mathbf{L}_{k})}<\frac{\ln{(s\tilde{\lambda}_{\text{max}})}}{2\lambda_{\text{min}}(\mathbf{L}_{k})}+k_f(\mathbf{L}_{k}).
\end{equation}
\end{proof}
\label{sec:proof_end}

\section{Runtime and Memory Usage Comparison in the Performance Trade-off}
\label{sec:runtime}
We provide additional comparison results of runtime (in seconds) and memory usage (in GB) on both Small and Full versions of the \texttt{Shrec-16} dataset compared to the SOTA in Fig. \ref{fig:runtime}. 
In general and as observed from these results, the proposed \method~method enjoys a trade-off between runtime and memory usage while performing considerably better.
More precisely, the proposed \method~method is ranked in the middle of the SOTA runtime and memory usage while providing significantly better accuracy performance.
Note that, in \method~these metrics are heavily relying on the number of selected eigenvalue-eigenvector pairs in simplicial subspaces and therefore it can be optimized in a data-driven manner.

\begin{figure}
\centering    \includegraphics[width=\textwidth,trim={4.5cm 0cm 4.5cm 0cm}, clip=true]{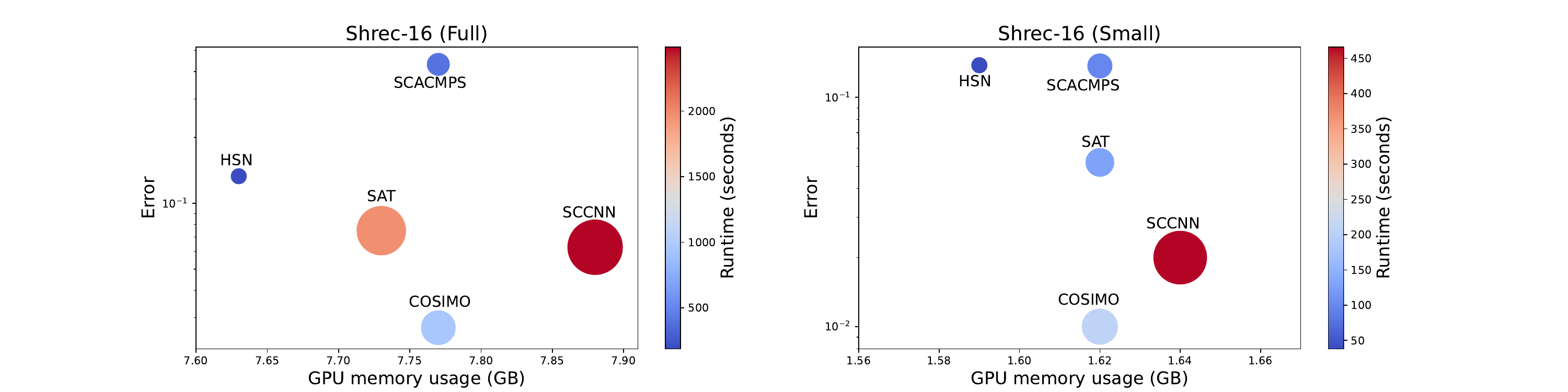}
\caption{Comparison of the performance error and GPU memory usage (in GB) across runtime (in seconds) (in both color and circle size) on both the Small and Full versions of the \texttt{Shrec-16} dataset. The proposed \method~method has a good trade-off between runtime and memory usage while performing considerably better.}
    \label{fig:runtime}
\end{figure}

\section{Permutation Equivariance Property of \method}
\label{sec:Perm}
\textbf{Property (Permutation equivariance \cite{roddenberry2021principled}).}  
Consider a simplicial complex \(\mathcal{X}\) characterized by boundary operators \( \mathcal{B} = \{\mathbf{B}_k\}_{k=1}^{K} \).  
Let \( \mathcal{P} = \{\mathbf{P}_k\}_{k=0}^{K} \) represent a sequence of permutation matrices, where each \( \mathbf{P}_k \) is of size \( |\mathcal{X}_k| \times |\mathcal{X}_k| \) and corresponds to the chain complex dimensions \( \{C_k\}_{k=0}^{K} \), ensuring \( \mathbf{P}_k \in \mathbb{R}^{|\mathcal{X}_k| \times |\mathcal{X}_k|} \).  
We define the permuted boundary operator as  
\[
[PB]_k := \mathbf{P}_{k-1} \mathbf{B}_k \mathbf{P}_k^\top.
\]
A simplicial convolutional network (SCN) with the learnable weight matrix $\mathbf{W}$ is said to be permutation equivariant if, for any such transformation \( \mathbf{P} \), the following holds:  
\begin{equation}
\operatorname{SCN}_{\mathbf{W}, \mathbf{B}} (\mathbf{c}_j) = \mathbf{P}_\ell \, \operatorname{SCN}_{\mathbf{W}, PB} (\mathbf{P}_j \mathbf{c}_j).
\end{equation}
Based on the above-mentioned properties, we show that \method~governs them in the following proposition.
\begin{proposition}
The \method~model stated in \eqref{eq:SNN_filtering2} exhibits the property of Permutation Equivariance.
\end{proposition}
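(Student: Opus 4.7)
The plan is to verify equivariance term by term in the definition of the COSIMO layer in \eqref{eq:SNN_filtering_ours}, exploiting the fact that each operation used (Laplacian action, matrix exponential, right multiplication by a weight matrix, and entrywise nonlinearity) behaves well under the permutation $\mathbf{P}_k$ acting from the left on $k$-simplicial signals. I would begin with the preparatory observations that (i) for the perturbed boundary operators $[PB]_k=\mathbf{P}_{k-1}\mathbf{B}_k\mathbf{P}_k^\top$ the induced lower and upper Hodge Laplacians satisfy $\mathbf{P}_k\mathbf{L}_{k,d}\mathbf{P}_k^\top$ and $\mathbf{P}_k\mathbf{L}_{k,u}\mathbf{P}_k^\top$ respectively (using $\mathbf{P}_{k\pm1}^\top\mathbf{P}_{k\pm1}=\mathbf{I}$), and (ii) the lower/upper projections transform covariantly, $[PB]_k^\top(\mathbf{P}_{k-1}\mathbf{X}_{k-1})=\mathbf{P}_k\mathbf{X}_{k,d}$ and $[PB]_{k+1}(\mathbf{P}_{k+1}\mathbf{X}_{k+1})=\mathbf{P}_k\mathbf{X}_{k,u}$.

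Next, I would establish the key identity for the exponential filter: since $(\mathbf{P}_k\mathbf{L}\mathbf{P}_k^\top)^n=\mathbf{P}_k\mathbf{L}^n\mathbf{P}_k^\top$ for every $n$, summing the defining power series gives $e^{-t\,\mathbf{P}_k\mathbf{L}\mathbf{P}_k^\top}=\mathbf{P}_k e^{-t\mathbf{L}}\mathbf{P}_k^\top$. Combining this with the covariance of the projections, a generic summand in \eqref{eq:SNN_filtering_ours} transforms as
\begin{equation*}
e^{-t_d\,\mathbf{P}_k\mathbf{L}_{k,d}\mathbf{P}_k^\top}(\mathbf{P}_k\mathbf{X}^{l-1}_{k,d})\boldsymbol{\Theta}^{l}_{k,d}
=\mathbf{P}_k\, e^{-t_d\mathbf{L}_{k,d}}\mathbf{X}^{l-1}_{k,d}\boldsymbol{\Theta}^{l}_{k,d},
\end{equation*}
since the trailing $\mathbf{P}_k^\top\mathbf{P}_k$ collapses to the identity and weight matrices act on the feature axis, which is untouched by $\mathbf{P}_k$. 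The same manipulation applies verbatim to the three remaining summands involving $e^{-t_u\mathbf{L}_{k,u}}$, $\boldsymbol{\Psi}^l_{k,d}$ and $\boldsymbol{\Psi}^l_{k,u}$.

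Then, factoring $\mathbf{P}_k$ out of the sum and using that $\sigma$ is applied entrywise, hence commutes with any permutation of rows, yields $\sigma(\mathbf{P}_k\cdot\,)=\mathbf{P}_k\sigma(\,\cdot\,)$. This shows layerwise equivariance of one COSIMO block at level $k$. A short induction on $l$ (with consistent action of $\{\mathbf{P}_k\}_{k=0}^{K}$ across all levels, so that the projections at the next layer pick up the correct $\mathbf{P}_k$) extends the result to the full network, completing the proof.

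I do not anticipate a serious obstacle: the exponential-conjugation identity $e^{-t\mathbf{P}\mathbf{L}\mathbf{P}^\top}=\mathbf{P}e^{-t\mathbf{L}}\mathbf{P}^\top$ is the only nontrivial ingredient, and it is immediate from the power-series definition. The main care point is bookkeeping, i.e.\ making sure that the permutations at levels $k-1$, $k$, and $k+1$ are applied consistently when one passes to the projections $\mathbf{X}^l_{k,d}$ and $\mathbf{X}^l_{k,u}$, so that every $\mathbf{P}_j^\top\mathbf{P}_j$ cancels cleanly and the outcome is exactly $\mathbf{P}_k$ acting on the unperturbed output.
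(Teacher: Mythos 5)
Your proposal is correct and follows essentially the same route as the paper's own proof: conjugation of the lower/upper Hodge Laplacians by $\mathbf{P}_k$, covariance of the projections $\mathbf{X}_{k,d}$ and $\mathbf{X}_{k,u}$, the identity $e^{-t\,\mathbf{P}\mathbf{L}\mathbf{P}^\top}=\mathbf{P}e^{-t\mathbf{L}}\mathbf{P}^\top$, and commutation of the entrywise nonlinearity with row permutations. The only differences are that you spell out the power-series justification of the exponential-conjugation step and add the layerwise induction, both of which the paper leaves implicit.
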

\begin{proof}
First, by considering $P\mathbf{L}_{k,d} = (\mathbf{P}_{k-1}\mathbf{B}_k\mathbf{P}_{k})^\top (\mathbf{P}_{k-1}\mathbf{B}_k\mathbf{P}_{k})=\mathbf{P}_{k}^\top\mathbf{L}_{k,d}\mathbf{P}_{k}$ and similarly $P\mathbf{L}_{k,u} =\mathbf{P}_{k}^\top\mathbf{L}_{k,u}\mathbf{P}_{k}$, and also $P\mathbf{X}_{k,d}=(\mathbf{P}_{k-1}\mathbf{B}_k\mathbf{P}^\top_{k})^\top(\mathbf{P}_{k-1}\mathbf{X}_{k-1})=\mathbf{P}_{k}\mathbf{B}^\top_k\mathbf{X}_{k-1}=\mathbf{P}_{k}\mathbf{X}_{k,d}$ and similarly $P\mathbf{X}_{k,u}=\mathbf{P}_{k}\mathbf{X}_{k,u}$, the permuted exponential expansion can be written as follows:
\begin{equation}
\begin{split}
&\mathbf{P}_k \, \operatorname{\method}_{\mathbf{W}, PB} (\{\mathbf{P}_{k-1} \mathbf{c}_{k-1},\mathbf{P}_k \mathbf{c}_k,\mathbf{P}_{k+1} \mathbf{c}_{k+1}\})\\
&=\mathbf{P}_k\sigma\left(
\mathbf{P}^\top_ke^{-t_d \mathbf{L}_{k,d}} \mathbf{X}^{l-1}_{k,d} \boldsymbol{\Theta}^{l}_{k,d}+ \mathbf{P}^\top_k e^{-t_u \mathbf{L}_{k,u}} \mathbf{X}^{l-1}_{k,u} \boldsymbol{\Theta}^{l}_{k,u} + \mathbf{P}^\top_k e^{-t_d \mathbf{L}_{k,d}} \mathbf{X}^{l-1}_{k} \boldsymbol{\Psi}^{l}_{k,d} 
\right.\\
&\left.+ \mathbf{P}^\top_k e^{-t_u \mathbf{L}_{k,u}} \mathbf{X}^{l-1}_{k} \boldsymbol{\Psi}^{l}_{k,u} \right)\\
&=\sigma\Big(
e^{-t_d \mathbf{L}_{k,d}} \mathbf{X}^{l-1}_{k,d} \boldsymbol{\Theta}^{l}_{k,d}+ e^{-t_u \mathbf{L}_{k,u}} \mathbf{X}^{l-1}_{k,u} \boldsymbol{\Theta}^{l}_{k,u} + e^{-t_d \mathbf{L}_{k,d}} \mathbf{X}^{l-1}_{k} \boldsymbol{\Psi}^{l}_{k,d} 
+  e^{-t_u \mathbf{L}_{k,u}} \mathbf{X}^{l-1}_{k} \boldsymbol{\Psi}^{l}_{k,u} \Big)\\
&=\operatorname{\method}_{\mathbf{W}, \mathbf{B}} (\{\mathbf{c}_{k-1}, \mathbf{c}_k, \mathbf{c}_{k+1}\}),
\end{split}
\end{equation}
which completes the proof.
\end{proof}

\section{More Details on the Datasets}
\label{sec:datasets}

\textbf{Trajectory prediction.}
It is important to note that trajectory prediction in this context involves identifying a candidate node within the neighborhood of the target node, a process influenced by node degree. Given that the average node degree is $5.24$ in the synthetic dataset and $4.81$ in the \texttt{ocean-drifts} dataset, a random guess would achieve approximately $20\%$ accuracy. The high standard deviations observed, particularly in the \texttt{ocean-drifts} dataset, may be attributed to its limited size. For more information, please refer to \cite{roddenberry2021principled}.

\textbf{Regression on partial deformable shapes.}
The sampling of the shapes were in regular cuts, where template shapes were sliced at six orientations, producing $320$ partial shapes, and irregular holes, where surface erosion was applied based on area budgets ($40\%$, $70\%$, and $90\%$), yielding $279$ shapes.
This creates to a dataset of $599$ shapes across eight classes (humans and animals), with varying missing areas ($10\%$–$60\%$).

\textbf{Node classification.} To provide more details about the NC datasets, we have outlined their statistics in Table \ref{tab:stat}.

% \textbf{Graph classification.} This task entails assigning discrete labels to entire graphs that result from a clique-lifting process applied to simplicial complexes. To obtain graph-level representations, we aggregate embeddings across simplices of various orders via concatenation. The experimental analysis was conducted on the standard \texttt{proteins} dataset \cite{gurugubelli2024sann,battiloro2024generalized}. Model performance is assessed by computing the average classification (GC) accuracy over ten stratified folds \cite{battiloro2024generalized} in Table \ref{tab:NC_GC}. Similar to the NC task, the proposed \method~obtain superior results by leveraging the higher-order information. In fact, the Hodge-aware SNNs are mostly effective in cases of existing higher-order information (\eg information on the edges, triangles, etc.). However, the poor performance of typical GNNs is probably due to working only on the 0-order simplices, \ie the nodes.

\section{Implementation Details}
\label{sec:imp}
In certain cases, we mostly use TopoModelX \cite{hajij2022topological,hajij2024topox,papillon2023architectures} to implement previous SOTA methods. 
For accessing and processing real-world datasets, we employ Torch TopoNetX \cite{hajij2024topox}.
For the experiments on trajectory prediction, we use the aggregation of $M$ branches discussed in Remark \ref{rmk:aggregation_layer}.
{\color{black}
We use cross-validation for tuning the possible hyperparameters with the selected values provided in Table \ref{tab:Hyp_details}.
%Regarding the set of selected hyperparameters provided in Table \ref{tab:Hyp_details}, firstly, we should clarify that we have used cross-validation for tuning each possible hyperparameter on a reasonable corresponding range. 
}
Detailed hyperparameter configurations for both synthetic and real-world datasets are provided in the code in \href{https://github.com/ArefEinizade2/COSIMO}{https://github.com/ArefEinizade2/COSIMO}.
{\color{black}
For experimental results on \texttt{synthetic} and \texttt{ocean-drifts}, we followed the experimental settings from reference papers \cite{roddenberry2021principled,yang2025hodgeaware}. %For more information, please refer to the implementation codes.
}
The experiments were conducted on an A100 NVIDIA GPU with 40 GB of memory.

{\color{black}\textbf{Number of selected eigenvalue-eigenvector $K$.} 
In our framework, the selection of an appropriate $K$ can be approached in two ways: (1) supervised and (2) unsupervised. In the supervised setting — which serves as the primary approach in this work — $K$ is determined through cross-validation over a reasonable range of values to identify the configuration yielding the best performance. For the unsupervised case, alternative strategies can be employed. For example, we explore the use of spectral entropy \cite{de2016spectral} as a proof of concept to assess its potential effectiveness, which is defined as follows:}
% \begin{enumerate}
% \item \textbf{Eigengap heuristic}\cite{shi2000normalized}: The number $k$ is often found using the eigengap heuristic:
% \begin{equation}
%     k = \argmax_{i}{(\lambda_{i+1}-\lambda_i)}
% \end{equation}
% A large gap indicates a natural cutoff point.
% \item \textbf{Energy-based Criterion} \cite{belkin2003laplacian}: The Laplacian EVD can be viewed like PCA, such that one can choose enough eigenvectors to explain a certain percentage of \textbf{spectral energy}. In this way, the total spectral energy is defined as:
% \begin{equation}
% E_{total}:=\sum_{i=1}^{n}{\lambda_i}
% \end{equation}
% In this way, one can choose the smallest $k$ such that $\frac{\sum_{i=1}^{k}{\lambda_i}}{E_{total}}\ge\eta$, where $\eta\in[0,1]$ is a threshold like 90\%.
% \item \textbf{Spectral Gap Ratio or Knee Detection} \cite{johnstone2001distribution}: Instead of just picking the largest gap, use a normalized eigengap ratio:
% \begin{equation}
% \text{gap ratio}(k):=\frac{\lambda_{k+1}-\lambda_{k}}{\lambda_{k}}
% \end{equation}
% This method accounts for the scaling of eigenvalues and can find relative jumps.
% \textbf{Spectral Entropy-Information-Theoretic Criter: The spectral entropy can be defined as:
\begin{equation}
H:=-\sum_{i=1}^{n}{p_i\log{p_i}};\:\:\:\text{where}\:\:\:p_i=\frac{\lambda_i}{\sum_{j}{\lambda_j}}.
\end{equation}
One can choose $K$ where the entropy contribution of the next eigenvalues becomes negligible.

% Apart from introducing and listing the relevant methods, we have also experimentally applied these methods on the \texttt{high-school} node classification dataset in Table \ref{tab:Sel_K}. Note that we also picked $\sim3\%$ of eigenvalues for $\textbf{L}_0$, which is fairly consistent with the referred methods ($\sim 4.5\%$).

We experimentally apply supervised and unsupervised methods for the selection of $K$ on the \texttt{high-school} node classification dataset in Table \ref{tab:Sel_K}. 
Note that the cross-validation method selects $\sim3\%$ of eigenvalues for $\textbf{L}_0$, which is fairly consistent with the spectral entropy methodology ($\sim 4.5\%$).

\begin{table}[!t]
\centering
\caption{Node classification dataset statistics.}
%\begin{tabular}{llrrrr}
\label{tab:stat}
\resizebox{\textwidth}{!}{
\begin{tabular}{lccccc}
\toprule
\textbf{Dataset} & \textbf{Simplex} & \textbf{\# 0-simplicies} & \textbf{\# 1-simplicies} & \textbf{\# 2-simplicies} & \textbf{Order} \\
\midrule
\texttt{high-school}          & Group of people              & 327 & 5818 & 2370  & 3  \\
\texttt{senate-bills}         & Co-sponsors                  & 294 & 6974 & 3013  & 3  \\
\bottomrule
\end{tabular}
}
\end{table}

\begin{table}[!t]
\centering
\caption{{\color{black}Hyperparameter details for each dataset; $lr$ and $n_{\text{epochs}}$ are the learning rate and number of epochs, respectively.}}
% \vskip 0.15in
\label{tab:Hyp_details}
\setlength{\tabcolsep}{4pt}
\renewcommand{\arraystretch}{0.95}
\resizebox{\textwidth}{!}{%
\begin{tabular}{lccccccc}
\toprule
%\texttt{Method} & \texttt{contact-high-school} (NC) & \texttt{senate-bills} (NC)& \texttt{proteins} (GC)\\
\textbf{Hyperparam} & \texttt{synthetic}& \texttt{ocean-drifts} & \texttt{Shrec-small}&\texttt{Shrec-full}&\texttt{high-school} & \texttt{senate-bills}& \texttt{proteins}\\
\midrule
$lr$ &  $5\times10^{-3}$ & $5\times10^{-2}$& $10^{-2}$ & $10^{-2}$&$10^{-3}$&$10^{-2}$&$10^{-3}$\\
Optimizer & ADAM & ADAM & ADAM & ADAM & ADAM & ADAM & ADAM\\
Batch size & 100 & 100 & 256 & 512 & 256 & 256 & 256\\
$n_\text{epochs}$ & 1000 & 1000 & 100 & 100 & 700 & 100 & 30\\
% train-val-test & \\
$M$ & 3 & 3 & 1 & 1 & 1 & 1 & 1\\
\bottomrule
%\vspace{5pt}
%\textit{OOM}: Out of memory.
\end{tabular}
}
%\vskip -0.1in
\end{table}

\begin{table}[!t]
\centering
\caption{Node classification accuracy across selected optimal value of eigenvalue-eigenvector $(K)$ pairs by unsupervised (Spectral entropy) and supervised (cross-validation) methods on the \texttt{high-school} dataset.}
\label{tab:Sel_K}
\setlength{\tabcolsep}{4pt}
\renewcommand{\arraystretch}{0.95}
\resizebox{0.45\textwidth}{!}{%
\begin{tabular}{lccc}
\toprule
 &Spectral entropy & Cross validation\\
\midrule
$K$ & 14 & 10 \\
\midrule
Acc (\%) & 92.0 & 91.9 \\

\bottomrule
\end{tabular}
}
\end{table}

\begin{figure*}[!t]
\centering    \includegraphics[width=\textwidth, trim = {8cm, 0, 12cm, 0}, clip=true]{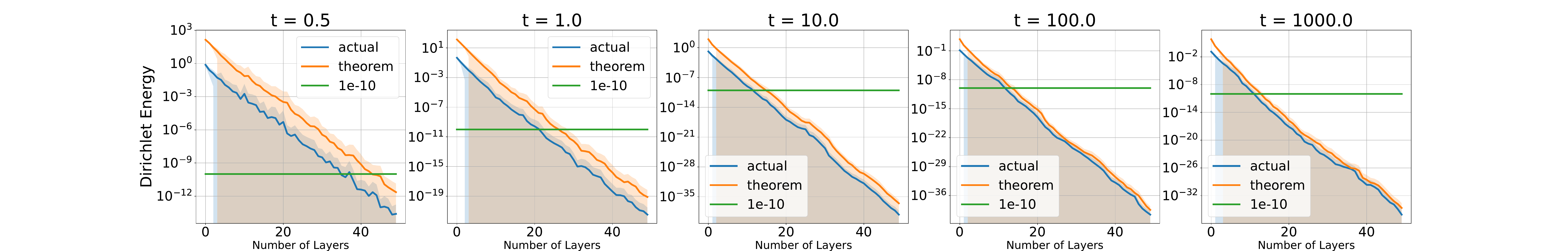}
\caption{Over-smoothing results of \method~across different layer depths. Each subplot corresponds to a different parameter $t$ in \method, showing the evolution of the Dirichlet energy as a function of the number of layers.}
\label{fig:oversmoothing2}
\end{figure*}

\section{A Deeper Look at Over-Smoothing}
\label{sec:os2}
\subsection{The effect of varying the Hodge receptive field $t$}
\label{sec:os2_t}
To study the effect of varying the Hodge receptive field $t$ on the effective number of layers, we first generate 100 random realizations of simplices with some random (filled) holes in them using the approach introduced in \cite{roddenberry2021principled}. Then, by fixing the number of hidden features to 4, we varied the number of layers and monitored the actual and theoretical bounds in Theorem \ref{thm:OS_cSNN} averaged over the random realizations. The results are shown in Fig. \ref{fig:oversmoothing2}. We considered a threshold of $10^{-10}$ as the over-smoothing occurrence threshold. As observed,  increasing $t$ reduces the effective number of layers from approximately $40$ to $15$. Besides, the difference between the actual and theoretical bounds gradually decreases and approaches zero, demonstrating the descriptive results of Theorem \ref{thm:OS_cSNN}.  

Note that decreasing $t$ too much can negatively impact the topology of the simplicial complex, potentially leading to issues like over-squashing of information \cite{giraldo2023trade}, degrading performance. A thorough analysis of this variation is beyond the scope of the paper and is explored in future work. 

\subsection{{\color{black}Connection with related work \cite{yang2025hodgeaware}}}
\label{sec:OS_Yang}
If we consider a simplified filtering framework as $\mathbf{X}^{l+1}_{k}=(\mathbf{I}-\mathbf{L}_k)\mathbf{X}^l_k\mathbf{W}^l+\mathbf{X}^l_{k,d}\mathbf{W}^l_d+\mathbf{X}^l_{k,u}\mathbf{W}^l_u$ in \eqref{eq:SNN_filtering3}, for $E(\mathbf{X}^{l+1}_{k})=\tr({\mathbf{X}^{l+1}_{k}}^\top\mathbf{L}_{k,d}\mathbf{X}^{l+1}_{k}) + \tr({\mathbf{X}^{l+1}_{k}}^\top\mathbf{L}_{k,u}\mathbf{X}^{l+1}_{k})$, we can elaborate on the first term as:
\begin{equation}
{
\begin{split}
&\tr({\mathbf{X}^{l+1}_{k}}^\top\mathbf{L}_{k,d}\mathbf{X}^{l+1}_{k})=\tr([{\mathbf{W}^l}^\top{\mathbf{X}^l_k}^\top(\mathbf{I}-\mathbf{L}_{k,d})]\mathbf{L}_{k,d}[(\mathbf{I}-\mathbf{L}_{k,d})\mathbf{X}^l_k\mathbf{W}^l])\\
& + \tr({\mathbf{W}^l_d}^\top{\mathbf{X}^l_{k,d}}^\top\mathbf{L}_{k,d}\mathbf{X}^l_{k,d}\mathbf{W}^l_d) + 2\tr([{\mathbf{W}^l}^\top{\mathbf{X}^l_k}^\top(\mathbf{I}-\mathbf{L}_{k,d})]\mathbf{L}_{k,d}\mathbf{X}^l_{k,d}\mathbf{W}^l_d)\\
&\le s.\lambda^2_{max}(\mathbf{I}-\mathbf{L}_{k,d})E_{d}(\mathbf{X}_k^l) + F.s.\lambda_{max}(\mathbf{L}_{k,d}).\|\mathbf{X}^l_{k,d}\|^2\\
&+ 2.F.s.\lambda_{max}(\mathbf{I}-\mathbf{L}_{k,d}).\lambda_{max}(\mathbf{L}_{k,d}).\|\mathbf{X}_k^l\|.\|\mathbf{X}_{k,d}^l\|.
\end{split}
}
\end{equation}
Similarly for the second term, we have:
\begin{equation}
{
\begin{split}
&\tr({\mathbf{X}^{l+1}_{k}}^\top\mathbf{L}_{k,u}\mathbf{X}^{l+1}_{k})\\
&\le s.\lambda^2_{max}(\mathbf{I}-\mathbf{L}_{k,u})E_{u}(\mathbf{X}_k^l) + F.s.\lambda_{max}(\mathbf{L}_{k,u}).\|\mathbf{X}^l_{k,u}\|^2\\
&+ 2.F.s.\lambda_{max}(\mathbf{I}-\mathbf{L}_{k,u}).\lambda_{max}(\mathbf{L}_{k,u}).\|\mathbf{X}_k^l\|.\|\mathbf{X}_{k,u}^l\|.
\end{split}
}
\end{equation}
So, in a combined manner, we can write:
\begin{equation}
\label{eq:bound_simp}
{
\begin{split}
&E(\mathbf{X}_k^{l+1})\\
&\le s.\lambda^2_{max}(\mathbf{I}-\mathbf{L}_{k})E(\mathbf{X}_k^l) + F.s.\lambda_{max}(\mathbf{L}_{k,d}).\|\mathbf{X}^l_{k,d}\|^2 + F.s.\lambda_{max}(\mathbf{L}_{k,u}).\|\mathbf{X}^l_{k,u}\|^2\\
&+ 2.F.s.\lambda_{max}(\mathbf{I}-\mathbf{L}_{k}).\lambda_{max}(\mathbf{L}_{k}).\|\mathbf{X}_k^l\|.(\|\mathbf{X}_{k,d}^l\| + \|\mathbf{X}_{k,u}^l\|).
\end{split}
}
\end{equation}
If we consider the simplified framework with $F=1$ as $\mathbf{x}^{l+1}_{k} = w_0(\mathbf{I}-\mathbf{L}_k)\mathbf{x}_k^l + w_1\mathbf{x}_{k,d} + w_2\mathbf{x}_{k,u}$, which was proposed in \cite{bunch2020simplicial}, the bound in \eqref{eq:bound_simp} takes the form of:
\begin{equation}
\label{eq:bound_simp}
{
\begin{split}
&E(\mathbf{x}_k^{l+1})\\
&\le s.\lambda^2_{max}(\mathbf{I}-\mathbf{L}_{k})E(\mathbf{x}_k^l) + s.\lambda_{max}(\mathbf{L}_{k,d}).\|\mathbf{x}^l_{k,d}\|^2 + s.\lambda_{max}(\mathbf{L}_{k,u}).\|\mathbf{x}^l_{k,u}\|^2\\
&+ 2.s.\lambda_{max}(\mathbf{I}-\mathbf{L}_{k}).\lambda_{max}(\mathbf{L}_{k}).\|\mathbf{x}_k^l\|.(\|\mathbf{x}_{k,d}^l\| + \|\mathbf{x}_{k,u}^l\|),
\end{split}
}
\end{equation}
which partially coincides with the over-smoothing bound obtained in \cite{yang2025hodgeaware}, with an additional fourth term missing in the previous work \cite{yang2025hodgeaware}.

\begin{figure}[!t]
%\vskip 0.1in
\centering
\includegraphics[width=0.5\columnwidth]{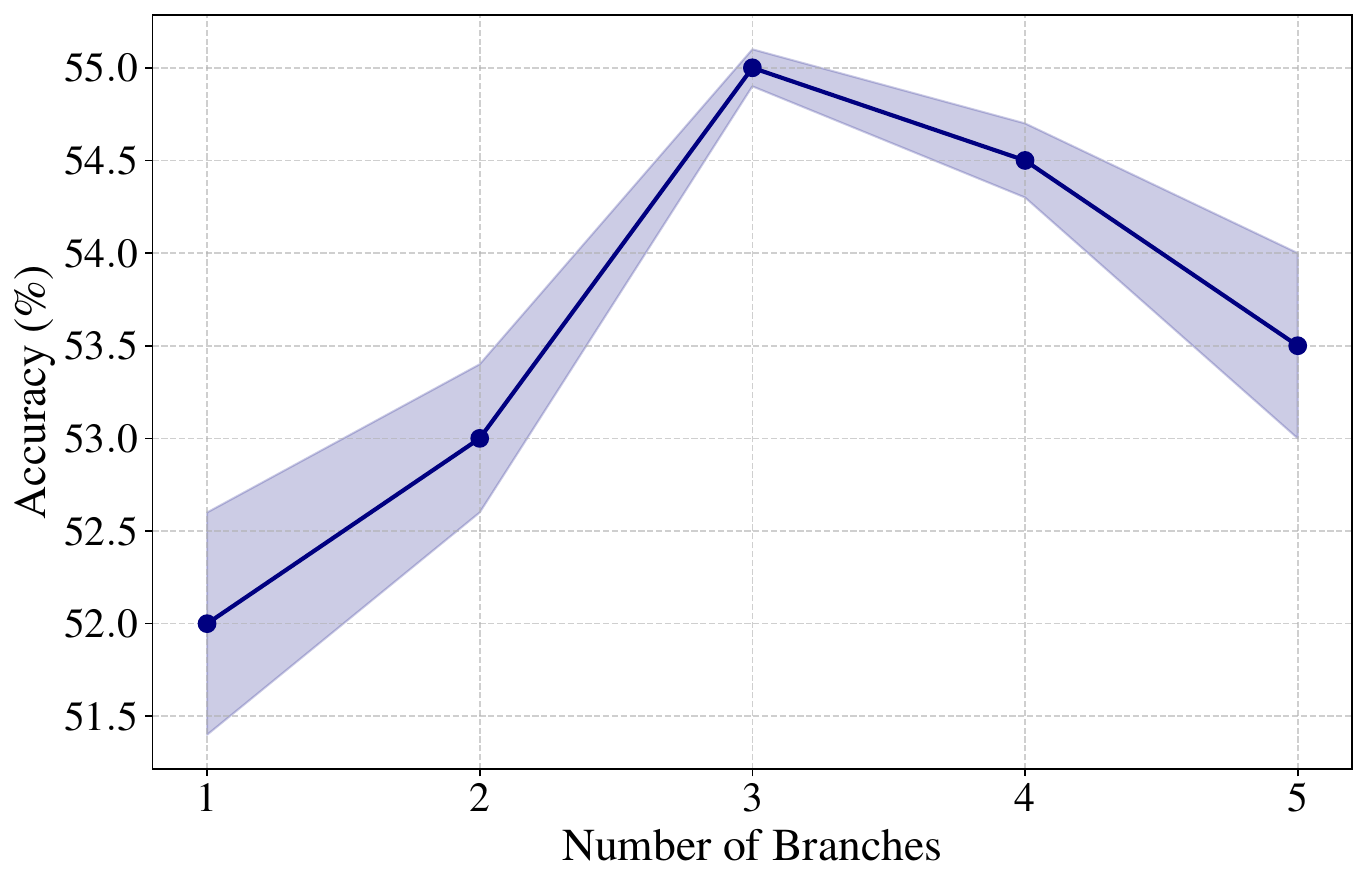}
\caption{Sensitivity analysis on the number branches ($M$) of \method.}
\label{fig:M_test}
\vskip -0.1in
\end{figure}

\begin{table}[!t]
\centering
\caption{Experimental validation of the stability Theorem \ref{thm:stability}: the practical gap between left and right sides in \eqref{eq:stability} in Theorem \ref{thm:stability}.}
\label{tab:Gap}
\setlength{\tabcolsep}{4pt}
\renewcommand{\arraystretch}{0.95}
%\resizebox{0.75\textwidth}{!}{%
\begin{tabular}{lccccc}
\toprule
 & SNR$_1=-5$ & SNR$_1=0$ & SNR$_1=10$ & SNR$_1=20$\\
\midrule
SNR$_2=-5$ & 12.63 & 9.55 & 8.74 & 8.62\\
SNR$_2=0$ & 5.72 & 2.71 & 1.88 & 1.79\\
SNR$_2=10$ & 4.11 & 1.13 & 0.31 & 0.18\\
SNR$_2=20$ & 3.86 & 0.91 & 0.11 & 0.01\\
\bottomrule
\end{tabular}
%}
\end{table}

\section{Sensitivity on the Number of Branches ($M$) of \method}
\label{sec:branch}
Figure \ref{fig:M_test} depicts the accuracy results on the Ocean trajectory prediction task for different values of $M$. It is observed that increasing $M$ up to 3 enhances the expressivity of \method, as stated in Remark \ref{rmk:aggregation_layer}. However, continuing to increase beyond 3 results in performance degradation, likely due to overfitting.

\section{{\color{black}Experimental Validation of the Stability Theorem \ref{thm:stability}}}

Following the SNR setting of Figure \ref{fig:stability}, Table \ref{tab:Gap} shows the difference between the right and left bounds in Theorem \ref{thm:stability} in \eqref{eq:stability}. We observe that, as the SNR$_1$ and SNR$_2$ increase, the gap tends to get tighter.

\end{document}